\documentclass[12pt]{article}
\usepackage{amsmath,amsthm,amsfonts,amssymb,mathrsfs,bm}
\usepackage[usenames]{color}
\usepackage{hyperref}
\usepackage{amssymb}
\usepackage{dsfont} 
\usepackage{graphicx}
\usepackage{subcaption}

\parskip 	\bigskipamount

\newtheorem{theorem}{Theorem}[section]
\newtheorem{lemma}[theorem]{Lemma}

\newtheorem{proposition}[theorem]{Proposition}

\newtheorem{remark}{Remark}[section]
\newtheorem{example}{Example}[section]
\newtheorem{definition}{Definition}

\newtheorem{assumption}[theorem]{Assumption}

\renewcommand{\d}{\mathrm{d}}

\newcommand{\R}{\mathbb{R}}
\newcommand{\C}{\mathbb{C}}

\newcommand{\N}{\mathbb{N}}
\newcommand{\E}{\mathbb{E}}

\newcommand{\rmP}{\mathrm{P}}
\newcommand{\K}{\mathrm{K}}
\newcommand{\M}{\mathrm{M}}

\newcommand{\I}{\mathrm{I}}

\newcommand{\var}{\mathrm{Var}}

\newcommand{\proj}{\mathrm{Proj}}

\renewcommand{\span}{\mathrm{span}}

\renewcommand{\L}{\mathrm{L}}

\newcommand{\vol}{\mathrm{vol}}

\newcommand{\range}{\mathrm{Range}}

\newcommand{\lip}{\mathrm{Lip}}

\renewcommand{\div}{\text{div}}
\newcommand{\eps}{\varepsilon}

\DeclareMathOperator{\Var}{Var}

\DeclareMathOperator{\Lip}{Lip}

\definecolor{mygreen}{rgb}{0.1,0.75,0.2}

\newcommand{\DNN}{\Delta^{\mathrm{NN}}_\rho}

\makeatletter
\renewcommand*{\@cite@ofmt}{\hbox}
\makeatother

\makeatletter
\renewcommand*{\@cite@ofmt}{\hbox}
\makeatother

\begin{document}
\title{Fast determinantal sampling on general spaces and diffusion geometry}
\author{
	\small\begin{tabular}{c}
		{Hoang-Son Tran  \thanks{Corresponding author}}\\  
        Department of Mathematics\\
		 National University of Singapore\\ transonsp97@gmail.com
	\end{tabular}
    \and
  \small\begin{tabular}{c}
		{Pranav Gupta}
		 \\  Department of Mathematics\\
		 National University of Singapore\\
         pranav.gupta@u.nus.edu
	\end{tabular}
    \and
    \small\begin{tabular}{c}
		{Subhroshekhar Ghosh}
		 \\  Department of Mathematics\\
		 National University of Singapore\\
         subhrowork@gmail.com
	\end{tabular}
}
\date{}
\maketitle

\begin{abstract}
Determinantal point processes (DPPs) have recently emerged as a kernel-based alternative to standard independent sampling for constructing efficient minibatches, coresets, and other compact representations of large-scale datasets. In particular, sampling mechanisms based on DPPs are believed to demonstrate better approximation properties compared to classical i.i.d. samplers, even at the scale of the exponent. One of the key strengths of DPP based samplers is that they can be deployed over very general spaces, in contrast to more classical sampling methods beyond i.i.d. which tend to work in very well-structured settings, principally Euclidean spaces.  In this work, we establish explicit rate guarantees for determinantal sampling in spaces that extend far beyond known Euclidean setups, focussing on spectral kernels obtained from eigenspaces of naturally associated Laplacian and other Markov diffusion operators. This includes, in particular, Riemannian manifolds and weighted networks. In determinantal sampling from compact Riemannian manifolds, we establish sampling rates that automatically pick up the intrinsic dimensionality $d_{\text{int}}$ of the underlying manifold (as opposed to the ambient dimension of the data, which is usually much larger). In the setting of networks, we investigate DPP-based samplers on the celebrated k-nearest neighbour graphs, as well as weighted random geometric graphs, and demonstrate a similar improved dependence on the intrinsic dimensionality of the data.  Overall, our approach achieves guarantees of $\big(\text{sample size}\big)^{-\frac{1}{2}-\frac{1}{2d_{\text{int}}}}$ that match known rates on Euclidean spaces of comparable dimension. In terms of techniques, we connect to the celebrated Weyl's Law for manifold spectra, and leverage tools from the theory of Markov diffusions and Dirichlet forms as well as certain ingredients from the theory of pseudodifferential operators, which could be of independent interest in this area.
\end{abstract}

\newpage
\tableofcontents

\section{Introduction}
\subsection{Motivation}

\subsubsection{Minibatches and quadratures}

\paragraph{Minibatches in machine learning.}

Let $\mathcal X=\{x_1,\ldots,x_N\}$ be a finite data set, and let $\mathcal F$ be a family of real-valued functions $f:\mathcal X\to \mathbb R$, which we call \emph{queries} on $\mathcal X$. In many tasks in machine learning, one is interested in statistics of the form
\begin{equation*}
L(f):=\frac{1}{N}\sum_{i=1}^N f(x_i), \qquad f\in \mathcal F .
\end{equation*}
For instance, $f(x_i)$ may represent the loss, a gradient coordinate, or a feature-dependent quantity associated with the data point $x_i$.

When $N$ is large, evaluating $L(f)$ exactly can be computationally expensive. A standard approach is therefore to sample a much smaller collection $\mathcal S\subset \mathcal X$, called a \emph{minibatch}, with cardinality $n\ll N$, and to approximate $L(f)$ by a weighted empirical average
\begin{equation*}
L_{\mathcal S,w}(f):=\frac{1}{n}\sum_{x\in \mathcal S} w(x) f(x), \qquad f\in \mathcal F,
\end{equation*}
where $w:\mathcal X\to \mathbb R$ is a suitable weight function.

Given an accuracy parameter $\varepsilon>0$, one would like to design a sampling scheme for $\mathcal S$, together with weights $w$, such that with high probability
\begin{equation*}
\sup_{f\in\mathcal F} |L_{\mathcal S,w}(f)-L(f)|\le \varepsilon .
\end{equation*}
A fundamental problem in minibatch sampling is to achieve such a guarantee while keeping the minibatch size $n$ as small as possible.

\paragraph{Quadrature rules for Monte Carlo integration.}

A closely related problem is the construction of quadrature rules for Monte Carlo integration. Let $\mu$ be a probability measure on $\mathbb R^d$, for instance of the form $\mu(\mathrm d x)=\rho(x)\mathrm d x$. Given a class $\mathcal F$ of test functions on $\mathbb R^d$, for example Lipschitz or smooth functions, one is interested in approximating
\[
L(f):=\int_{\mathbb R^d} f(x)\mu(\mathrm d x), \qquad f\in\mathcal F .
\]

A quadrature rule produces nodes $X_1,\ldots,X_n\in\mathbb R^d$, together with weights, and approximates $L(f)$ by
\[
L_{\mathcal S,w}(f):=\frac{1}{n}\sum_{i=1}^n w(X_i) f(X_i),
\qquad \mathcal S=\{X_1,\ldots,X_n\}.
\]
The performance of such a rule may be measured, for a fixed test function $f$, by the mean-square error
\(
\mathbb E[|L_{\mathcal S,w}(f)-L(f)|^2]
\)
. In the Monte Carlo setting, the central question is how fast this error decays as the number of nodes $n$ tends to infinity.

\paragraph{A unified viewpoint.}

The minibatch and quadrature problems can be viewed through the same lens. In both cases, there is a probability measure $\mu$ on a space $\mathcal X$, together with a class $\mathcal F$ of test functions on $\mathcal X$, and the goal is to approximate the linear functional
\[
L(f)=\int_{\mathcal X} f(x)\mu(\mathrm d x), \qquad f\in \mathcal F,
\]
by evaluating the functions only at a small number of selected points. 
Thus, the common objective is to construct a small weighted point configuration $\mathcal S\subset \mathcal X$ such that
\[
L_{\mathcal S,w}(f):= \frac{1}{n}\sum_{x\in \mathcal S} w(x)f(x)
\approx L(f)
\quad \text{simultaneously for many } f\in\mathcal F,
\]
where $n$ is the sample size and $w$ is a suitable weight function. From this perspective, minibatch sampling and quadrature construction are both instances of the same approximation problem: replacing a reference probability measure by a small weighted discrete measure while controlling integration errors over a prescribed function class.

\subsubsection{Importance sampling as a benchmark.}

A common independent-sampling benchmark for both minibatch sampling and quadrature is \emph{importance sampling}. In the unified formulation, let $\mu$ be the target probability measure on $\mathcal X$, and let $\nu$ be a proposal probability measure, which is often chosen such that $\mu$ is absolutely continuous with respect to $\nu$. In importance sampling, we sample the nodes $X_1,\ldots,X_n$ independently and identically from the proposal distribution $\nu$. For a suitable weight function $w: \mathcal{X}\to \R$, the importance sampling estimator is given by
\begin{equation*}
  L_{\mathrm{imp}}(f)
:=
\frac{1}{n}\sum_{i=1}^n
f(X_i)w(X_i)  
\end{equation*}

A common choice for the weight is choosing $w$ such that $L_{\mathrm{imp}}(f)$ is an unbiased estimator for $L(f)$. The standard unbiased choice is $w=\d \mu/\d\nu$, the Radon--Nikodym derivative of $\mu$ with respect to $\nu$, since
\[
\mathbb E_{X\sim \nu}
\left[
f(X)\frac{\mathrm d\mu}{\mathrm d\nu}(X)
\right]
=
\int_{\mathcal X} f(x)\mu(\mathrm d x)
=
L(f).
\]
Consequently,
\begin{equation*}
\mathbb E\bigl[|L_{\mathrm{imp}}(f)-L(f)|^2\bigr]
=
\operatorname{Var}\bigl[L_{\mathrm{imp}}(f)\bigr]
=
\frac{1}{n}
\operatorname{Var}_{X\sim \nu}
\left[
f(X)\frac{\mathrm d\mu}{\mathrm d\nu}(X)
\right].
\end{equation*}
Thus, under the usual finite-variance assumption, the mean-square error (MSE) is of order $O(n^{-1})$. Hence, the accuracy parameter $\varepsilon$ is of order $O(n^{-1/2})$.

The $n^{-1}$--MSE rate is the canonical benchmark for generic independent Monte Carlo estimators. Importance sampling can reduce the variance constant by choosing the proposal distribution carefully, but it does not, by itself, change the fundamental independent-sampling rate $n^{-1/2}$. This motivates the search for correlated sampling schemes - where the samples are allowed to be dependent - that exploit additional structure in the target measure or the geometry of data.

\subsubsection{Negatively dependent sampling with DPPs.}

Improvements may be possible once one moves beyond independent sampling. A useful heuristic is to imagine that the sample points live in a geometric domain, for instance a subset of $\mathbb R^d$, and that the function class $\mathcal F$ consists of regular functions, such as Lipschitz functions. If two sample points $x_i$ and $x_j$ are close, then $f(x_i)$ and $f(x_j)$ are nearly equal for every Lipschitz function $f$. Sampling both points is therefore redundant to some extent: the second point contributes little new information about the values of functions in $\mathcal F$. This suggests that a good sampling scheme should favor diverse, repulsive configurations and avoid selecting points that are too similar.

Determinantal point processes (see Section \ref{subsec:background-dpp}) provide a principled probabilistic mechanism for such repulsive sampling. Their negative dependence induces repulsion among selected points, encouraging spatial diversity and reducing redundancy in the resulting minibatch or quadrature rule. 
Since the seminal work \cite{KuTa12}, DPPs have been used as diversity-promoting models for machine learning tasks. 
They have also been shown to improve on independent sampling when designing numerical quadratures \cite{bardenet2020monte,BeBaCh19,CoMaAm20Sub}, minibatches in stochastic gradient descent \cite{dpp-sgd} or building coresets \cite{TrBaAm19,bardenet2024small,jaquard2026}.
Roughly speaking, DPP-based methods can yield MSE bounds of order $n^{-1-\eta}$ for some $\eta>0$ (typically, $\eta=1/d$ where $d$ is the dimension of data). This is a strict improvement over the MSE rate of importance sampling, which is of order $n^{-1}$. A key mechanism behind this improvement is the \emph{variance reduction property} of linear statistics of DPPs, which will be discussed in detail in the next section.

\subsection{Variance reduction phenomenon}

\subsubsection{Linear statistics and variance reduction phenomenon}

A point process $\mathcal S$ on a space $\mathcal X$ is a random configuration of discrete points in $\mathcal X$. In general, the cardinality of $\mathcal S$ may be random, or even infinite. In this paper, however, we restrict our attention to point processes with fixed cardinality, say $|\mathcal S|=n$ almost surely. For a measurable function $f:\mathcal X\to\mathbb R$, the linear statistic of $\mathcal S$ associated with $f$ is defined by
\begin{equation*}
\mathcal S(f):=\sum_{x\in\mathcal S} f(x).
\end{equation*}

Linear statistics play a central role in our sampling setting because of the simple identity
\[
L_{\mathcal S,w}(f)
=
\frac{1}{n}\sum_{x\in\mathcal S}w(x)f(x)
=
\frac{1}{n}\mathcal S(wf).
\]
Thus the weighted estimator $L_{\mathcal S,w}(f)$ is, up to the normalization factor $1/n$, a linear statistic of the point process $\mathcal S$ with test function $wf$. Consequently, understanding the mean-square error of $L_{\mathcal S,w}(f)$ naturally leads to the study of variances of linear statistics.

If $\mathcal S=\{X_1,\ldots,X_n\}$, then
\(
\mathcal S(f)=\sum_{i=1}^n f(X_i)
\).
For independent samples $X_1,\ldots,X_n$, the variance $\var[\mathcal S(f)]$ is typically of order $n$. The \emph{variance reduction phenomenon} refers to the situation where a sequence of point processes $(\mathcal S_n)_{n\ge 1}$, with $|\mathcal S_n|=n$, satisfies a better-than-independent variance bound, namely
\[
\var[\mathcal S_n(f)]=o(n),
\qquad f\in\mathcal F,
\]
for a suitable class of test functions $\mathcal F$.

This property is crucial for constructing sampling schemes that outperform independent sampling. Indeed, if a point process $\mathcal S$ of cardinality $n$ enjoys variance reduction for the test function $wf$, then
\[
\var[L_{\mathcal S,w}(f)]
=
\frac{1}{n^2}\var[\mathcal S(wf)]
=
o(n^{-1}).
\]
Hence, such a bound improves over the canonical MSE rate of independent importance sampling, which is of order $n^{-1}$.

\subsubsection{Variance reduction for DPPs}

For DPPs, this phenomenon has long been visible in random matrix theory. In many random matrix ensembles, the eigenvalues form a determinantal point process, and for sufficiently regular test functions, their linear statistics satisfy central limit theorems without any normalization. In particular, the variance of such linear statistics is typically of order one, rather than of order $n$. However, these examples are limited from the viewpoint of applications to machine learning, since eigenvalues usually live in $\mathbb R$ or $\mathbb C$, whereas modern data sets are often high-dimensional.

In higher dimensions, Bardenet and Hardy \cite{bardenet2020monte} developed DPP-based Monte Carlo methods using tools inspired by random matrix theory and orthogonal polynomial ensembles. More precisely, let $\mu$ be a compactly supported probability measure on $\mathbb R^d$. Consider the collection of multivariate polynomials on $\mathbb R^d$, ordered for instance by graded lexicographic order, and orthonormalized in $L^2(\mu)$ via the Gram--Schmidt procedure. Let $\phi_1,\ldots,\phi_n$ be the first $n$ orthonormal polynomials, and define the Christoffel--Darboux kernel
\[
\K_n(x,y):=\sum_{i=1}^n \phi_i(x)\phi_i(y).
\]

The associated multivariate orthogonal polynomial ensemble, or m-OPE, is the projection DPP $\mathcal S_n$ on $\mathbb R^d$ with reference measure $\mu$ and kernel $\K_n$. Under suitable regularity assumptions on $\mu$, Bardenet and Hardy proved that
\(\var[\mathcal S_n(f)] = O(n^{1-1/d})
\) for $f\in C^1$.
This variance reduction leads to a DPP-based quadrature rule whose mean-square error decays as $O(n^{-1-1/d})$, improving over the classical $O(n^{-1})$ mean-square error rate of independent Monte Carlo sampling; see \cite{bardenet2020monte}. Later, in \cite{bardenet2024small}, the authors developed a discretized version of the multivariate OPE kernel supported on a finite data set, leading to DPP-based minibatches with accuracy rate
\(\varepsilon = O(n^{-1/2-1/(2d)}),
\)
which improves over the independent-sampling benchmark $\varepsilon=O(n^{-1/2})$.

More recently, in \cite{tran2026state}, the authors introduced a new class of DPPs based on wavelets on $\mathbb R^d$. This construction is conceptually different from the orthogonal-polynomial approach. Moreover, the wavelet-based DPPs enjoy variance reduction for substantially rougher test functions, with rates adapted to the regularity of the function class. In particular, for $\alpha\in(0,1]$, if $f$ belongs either to the Hölder class $C^{0,\alpha}$ or to the fractional Sobolev space $H^\alpha$, then the variance satisfies a bound of the form
\(
\var[\mathcal S_n(f)] = O(n^{1-2\alpha/d}).
\)
For $\alpha=1$, this gives the rate $O(n^{1-2/d})$, improving the exponent obtained for m-OPEs in the corresponding Lipschitz or first-order regularity regime.

\subsubsection{Between continuous and discrete DPPs}

For DPPs on discrete data sets, often referred to as \emph{discrete DPPs}, variance reduction is much less understood. A careful inspection of the known examples on \(\mathbb R^d\) reveals that the analytic and geometric structures of the background space play an essential role. By contrast, for discrete DPPs, the underlying space is only a finite cloud of points and may not carry enough intrinsic analytic structure to support direct arguments. On the other hand, discrete DPPs are of fundamental importance because of their applications to minibatch and coreset sampling. Understanding variance reduction for such processes is therefore a natural and important problem.

In \cite{tran2026state}, the authors provide a general discretization framework that transfers DPPs from a continuous space to a finite data set. Roughly speaking, given a continuous DPP on an underlying space, such as \(\mathbb R^d\), the framework constructs a discrete DPP supported on the data set whose variance decay matches that of the original continuous process, up to an \emph{explicit error} depending on the minibatch size \(n\), the ambient data set size \(N\), and the failure probability \(\delta\). Consequently, when \(N\) is sufficiently large, the discretized DPP inherits essentially the same variance-reduction rate as its continuous counterpart.

On a related note, Jaquard and Keriven~\cite{jaquard2026} also study the relationship between discrete and continuous DPPs. In particular, they introduce the notion of \emph{statistical consistency of discrete-to-continuous DPPs}, which, roughly speaking, provides conditions under which discrete DPPs asymptotically behave like their continuous counterparts on the underlying space. As applications, they demonstrate their framework in several settings, including the convergence of discrete OPEs to continuous OPEs, and the convergence of discrete harmonic ensembles on Gaussian geometric random graphs to harmonic ensembles on manifolds. These results suggest that such discrete DPP
models may asymptotically inherit the sampling properties of the corresponding continuous models on the underlying space.

However, explicit sampling rates for these models, for instance for harmonic ensembles and their discrete counterparts on geometric random graphs, are not addressed in \cite{jaquard2026}, and call for further investigation. Such rates are crucial for the theoretical guarantees on repulsive sampling methods, such as DPPs, in practical applications. In particular, it is of fundamental importance to understand how these rates depend on intrinsic geometric features of the underlying space, how they compare with existing guarantees on Euclidean domains, and how they compare with independent sampling schemes such as importance sampling. 

\subsection{Open problems and our contributions}

\paragraph{Questions and challenges.}

The preceding discussion motivates the search for new classes of DPPs that enjoy variance reduction. A common feature of both the multivariate OPE construction and the wavelet-DPP construction is that they rely strongly on the Euclidean structure of the background space $\mathbb R^d$. In particular, the reference measure $\mu$ is typically assumed to be full-dimensional: it has a density with respect to Lebesgue measure on $\mathbb R^d$, and this density is positive on a nonempty open subset. Moreover, the resulting variance decay rates are governed by the ambient Euclidean dimension $d$; for example, a mean-square error rate of the form $n^{-1-\eta}$ typically has $\eta$ of order $1/d$.

This dependence on the ambient dimension is a significant limitation in modern high-dimensional applications, where data often live in a very high-dimensional Euclidean space, but their effective or \emph{intrinsic dimension} may be much smaller. This is the viewpoint behind the \emph{manifold hypothesis}, which posits that high-dimensional data are concentrated near, or on, a low-dimensional manifold $\mathcal M\subset\mathbb R^d$. In such settings, constructions based directly on full-dimensional Lebesgue geometry are no longer well adapted: the reference measure may be singular with respect to Lebesgue measure on $\mathbb R^d$, and the relevant dimension should be the intrinsic dimension of $\mathcal M$, rather than the ambient dimension $d$.

This leads to a natural question: can one construct DPP samplers with \emph{provable variance reduction} whose rates are governed by the \emph{intrinsic geometry of the data} rather than by an ambient Euclidean structure? More broadly, can one develop DPP-based sampling schemes on curved spaces, graphs, or other non-Euclidean domains, where the appropriate notion of dimension is encoded by the spectral geometry of the underlying space?

\paragraph{Our contributions.}

In this article, we address this question by introducing a new class of DPPs constructed from the eigenfunctions of a Laplacian-type operator $\L$. This framework includes, for instance, the Laplace--Beltrami operator on a Riemannian manifold and graph Laplacians on weighted graphs. The resulting DPPs are therefore intrinsically defined: their construction depends on the geometry of the underlying space through the operator $\L$, rather than on an ambient Euclidean coordinate system.

Our main technical result is a master theorem that bounds the variance of linear statistics of these spectral DPPs in terms of the eigenvalue distribution of $\L$. This connects the variance reduction problem for DPPs to spectral asymptotics, and in particular to Weyl-type laws. As a consequence, whenever the eigenvalues of $\L$ obey an appropriate Weyl law, the associated DPP exhibits variance reduction at a rate determined by the corresponding spectral dimension.

This yields concrete applications in several settings. In
Section~\ref{sec:monte-carlo-manifold}, we present our first application:
Monte Carlo integration on compact Riemannian manifolds. In this setting, we
construct DPP-based quadrature rules for Lipschitz test functions, with MSE
rates roughly of order \(n^{-1-1/m}\), governed by the manifold dimension
\(m\).
In Section~\ref{sec:minibatch-lowdim}, we turn to minibatch sampling from data
sets whose points arise from low-dimensional geometric structures, for instance
when the data are supported on a submanifold. Our sampling scheme first builds a
geometric random graph on the data points, and then samples a minibatch from a
DPP whose kernel is constructed from eigenvectors of the graph Laplacian. We
demonstrate the robustness of this approach by focusing on important classes of
weighted random geometric graphs, including \(k\)-nearest neighbor graphs ($k$-NN). For
these models, we provide rigorous theoretical guarantees showing that the
accuracy rate \(\varepsilon\) is roughly of order
\(n^{-1/2-1/(2m)}\), where \(m\) is the \emph{intrinsic dimension} of the data,
rather than the ambient Euclidean dimension \(d\).
Together, these results provide a spectral and intrinsic approach to DPP-based
variance reduction beyond the Euclidean full-dimensional setting.

\section{Background}
\subsection{Determinantal point processes} \label{subsec:background-dpp}

Let \(\mathcal{X}\) be a Polish space equipped with a reference measure \(\mu\), where typical examples include Euclidean spaces, Riemannian manifolds, and discrete
spaces such as graphs. A point process \(\mathcal{S}\) on \(\mathcal{X}\) is a
random locally finite subset of \(\mathcal{X}\). We say that \(\mathcal{S}\) is
a \emph{determinantal point process} (DPP) with respect to \(\mu\) if there exists a
measurable function \(K:\mathcal{X}\times \mathcal{X}\to \C\) such that, for
every \(k\in \N\) and every bounded measurable function
\(f:\mathcal{X}^k\to \R\),
\begin{equation*}
    \E\left[
        \sum_{\neq} f(x_{i_1},\ldots,x_{i_k})
    \right]
    =
    \int_{\mathcal{X}^k}
        f(x_1,\ldots,x_k)
        \det\!\bigl[K(x_i,x_j)\bigr]_{1\le i,j\le k}
        \,\mathrm d\mu^{\otimes k}(x_1,\ldots,x_k).
\end{equation*}
Here the sum on the left-hand side is over all ordered \(k\)-tuples of pairwise
distinct points of \(\mathcal{S}\). The function \(K\) is called a
\emph{kernel} of the process, and \(\mu\) is called the
\emph{reference measure}; see, for instance, \cite{Mac75,Lyo03,HKPV06}.

A particularly important class is given by \emph{projection DPPs}. Let
\(n\ge 1\), and let \(\phi_1,\ldots,\phi_n\) be orthonormal functions in
\(L^2(\mu)\). Define
\begin{equation*}
    K(x,y)
    :=
    \sum_{k=1}^n \phi_k(x)\phi_k(y) .
\end{equation*}
As an integral kernel, \(K\) represents the orthogonal projection from
\(L^2(\mu)\) onto the subspace \(\mathrm{span}\{\phi_1,\ldots,\phi_n\}\). The DPP associated with \((K,\mu)\) is called a \emph{projection DPP} of
rank \(n\).

Projection DPPs have a useful finite-dimensional probabilistic representation.
If \(\mathcal{S}\sim \mathrm{DPP}(K,\mu)\), where \(K\) is a projection kernel
of rank \(n\), then \(\mathcal{S}\) contains exactly \(n\) points almost surely.
Moreover, writing \(\mathcal{S}=\{X_1,\ldots,X_n\}\), the points may be sampled
by first drawing an exchangeable ordered \(n\)-tuple \((X_1,\ldots,X_n)\) with
density
\begin{equation*}
    p(x_1,\ldots,x_n)
    =
    \frac{1}{n!}
    \det\!\bigl(K(x_i,x_j)\bigr)_{1\le i,j\le n}
\end{equation*}
with respect to \(\mu^{\otimes n}\), and then forgetting the order.
This formula makes explicit the repulsive nature of determinantal processes:
configurations for which the feature vectors
\[
    \bigl(\phi_1(x_i),\ldots,\phi_n(x_i)\bigr),
    \qquad 1\le i\le n,
\]
are nearly linearly dependent receive small probability. 

For further background
on DPPs, we refer to \cite{HKPV06,Lyo03,KuTa12}.

\subsection{Dirichlet spaces and Markov semigroup theory}
\label{subsec:Dirichlet-space}
\subsubsection{Basic notions} 

In this section, we briefly recall basic notions in Dirichlet form theory and Markov semigroup theory. For a detailed discussion, we refer to the books \cite{bakry2014analysis, fukushima2011dirichlet}.

Let $(\mathcal{X},\mathcal B,\mu)$ be a $\sigma$-finite measure space. We work with real-valued functions and denote by $L^2(\mu)=L^2(\mathcal{X},\mu)$ the Hilbert space of square-integrable functions, equipped with the inner product 
\[ \langle f,g\rangle := \int_{\mathcal X} f g \, \d\mu . \] 
We denote by $\|\cdot\|_2$ the corresponding norm. 

Let $\L$ be a densely defined, non-negative self-adjoint operator on $L^2(\mu)$ such that the semigroup $\rmP_t := e^{-t\L}, t\ge 0$ is sub-Markovian; that is, for every $f\in L^2(\mu)$ satisfying $0\le f(x)\le 1$ for $\mu$-a.e. $x\in \mathcal{X}$, one has 
\[ 0\le \rmP_t f(x)\le 1, \qquad \mu\text{-a.e. } x\in \mathcal{X}, \text{ for every }t\ge 0. \] 

Let $\mathcal D(\L)$ and $\mathcal D(\L^{1/2})$ denote the domains of $\L$ and $\L^{1/2}$, respectively. Since $\L$ is non-negative and self-adjoint, one has 
$\mathcal D(\L)\subset \mathcal D(\L^{1/2})$. 
The Dirichlet form associated with $\L$ is 
\[ \mathcal{E}(f,g) := \langle \L^{1/2}f,\L^{1/2}g\rangle, \qquad \mathcal D(\mathcal E):=\mathcal D(\L^{1/2}). \] 
In particular, for $f,g\in \mathcal D(\L)$, 
\[ \mathcal{E}(f,g) = \langle f,\L g\rangle = \langle \L f,g\rangle . \] 

\begin{remark} 
Our sign convention is that $\L$ is non-negative. Equivalently, the infinitesimal generator of the Markov semigroup is $-\L$, which is non-positive in the usual convention of Markov semigroup theory. 
\end{remark} 

Recall that $\mathcal A\subset \mathcal D(\L)\cap L^\infty(\mu)$ is called an algebraic operator core for $\L$ if: 
\begin{enumerate} 
\item[(i)] $\mathcal A$ is a subspace and an algebra under pointwise multiplication: if $f,g\in \mathcal A$, then $fg\in \mathcal A$; 
\item[(ii)] $\mathcal A$ is dense in $\mathcal D(\L)$ with respect to the graph norm $\|f\|_{\mathcal D(\L)}:=\|f\|_2+\|\L f\|_2 $. Equivalently, for every $f\in\mathcal D(\L)$, there exists a sequence $(f_n)\subset\mathcal A$ such that \[ f_n\to f \quad\text{in }L^2(\mu), \qquad \L f_n\to \L f \quad\text{in }L^2(\mu). \] 
\end{enumerate} 

On an algebraic operator core, the carr\'e du champ associated with $\L$ is given by 
\[ \Gamma(f,g) := \frac{1}{2}\bigl(f\L g+g\L f-\L(fg)\bigr), \quad f,g\in \mathcal{A}. \] 
When $f=g$, we write $\Gamma(f):=\Gamma(f,f)$ and $\mathcal E(f):=\mathcal E(f,f)$ for simplicity. 

\subsubsection{Examples of Dirichlet spaces}

We now discuss some examples of Dirichlet spaces considered in this article.

\begin{example}[Compact Riemannian manifolds with drifted Laplacians] \label{ex:manifold}
Let $(\mathcal M,g)$ be a smooth compact Riemannian manifold without boundary, and let
$\rho\in C^{1,1}(\mathcal M)$ be a strictly positive density. Let $\mu(\d x)=\rho(x)\vol_g(\d x)$ and define
\[
    \L f
    =
    -\Delta_g f-\langle \nabla(\log\rho),\nabla f\rangle_g
    =
    -\rho^{-1}\operatorname{div}_g(\rho\nabla f).
\]
Then $\L$ is a non-negative and symmetric operator on $L^2(\mathcal{M},\mu)=L^2(\mu)$. The associated Dirichlet form and carr\'e du champ operator are given by
\[
    \mathcal E(f,h)
    =
    \int_{\mathcal M} \langle \nabla f,\nabla h\rangle_g\d\mu\quad,\quad \Gamma(f,h) =  \langle \nabla f,\nabla h\rangle_g .
\]
In this setting, we can choose $\mathcal A=C^\infty(\mathcal M)$ as an algebraic core for $\L$.
\end{example}

\begin{example}[Weighted graphs with graph Laplacians] \label{ex:graph}
    Let $\mathcal X$ be a finite set of vertices, and let $\mu:\mathcal{X}\to(0,\infty)$ be a vertex measure. Let $b:\mathcal{X}\times \mathcal{X}\to[0,\infty)$ be a symmetric edge weight with $b(x,x)=0$, and let $V:\mathcal X\to[0,\infty)$ be a killing potential. We work on $L^2(\mathcal{X},\mu)=L^2(\mu)$ and define the graph Laplacian
\[
    \L f(x)
    =
    \frac{1}{\mu(x)}
    \Big (
        \sum_{y\in \mathcal X} b(x,y)(f(x)-f(y))
        +
        V(x)f(x)
    \Big ).
\]
Then $\L$ is non-negative and self-adjoint, with associated Dirichlet form
\[
    \mathcal E(f,h)
    =
    \frac12\sum_{x,y\in \mathcal X} b(x,y)(f(x)-f(y))(h(x)-h(y))
    +
    \sum_{x\in \mathcal X} V(x)f(x)h(x),
\]
and the carr\'e du champ is given by 
\[
    \Gamma(f,h)(x)
    =
    \frac{1}{2\mu(x)}
    \sum_{y\in \mathcal X} b(x,y)(f(x)-f(y))(h(x)-h(y))
    +
    \frac{V(x)}{2\mu(x)}f(x)h(x).
\]
Since $\mathcal{X}$ is a finite set, $L^2(\mathcal{X},\mu)$ is finite dimensional. In this setting, we can simply choose $\mathcal{A}=L^2(\mu)=\R^{\mathcal X}$ as an algebraic core for $\L$.
\end{example}

\subsection{Spectral DPPs from Markov semigroup generators} \label{subsec:spectral-DPP}

Let $(\mathcal{X},\mathcal{B},\mu)$ be a $\sigma$-finite measure space and let $\L$ be a Markov semigroup generator as in Section \ref{subsec:Dirichlet-space}.
We impose the following assumptions on the Dirichlet spaces considered in this paper: 
\begin{itemize}
   \item[(A1)] \(\L\) has compact resolvent. Equivalently, \(L^2(\mu)\)
admits a finite or countable orthonormal basis of real-valued eigenfunctions
\((\phi_k)\) of \(\L\), with
\[
    \L\phi_k=\lambda_k\phi_k,\qquad
    0\le \lambda_1\le \lambda_2\le\cdots,
\]
where the eigenvalues are repeated according to multiplicity. In the
infinite-dimensional case, \(\lambda_k\to\infty\).
    \item[(A2)] There exists an algebraic core $\mathcal{A}$ for $\L$ containing all the eigenfunctions \(\phi_k\).
    \item[(A3)] For every $f\in\mathcal{A}$ such that $\|\Gamma(f)\|_\infty<\infty$, we have
    \begin{equation} \label{eq:bounded-multiplier}  
\mathcal{E}(fw) \le 2\|f\|_\infty^2 \mathcal{E}(w) + 2\|\Gamma(f)\|_\infty \|w\|_2^2, \qquad \forall w\in\mathcal{A}. \end{equation}
\end{itemize}

\begin{remark}
    We remark that the assumptions (A1)--(A3) above are fairly mild and tailored to the examples considered in this paper. Indeed, the assumption \eqref{eq:bounded-multiplier} is only a mild bounded multiplier estimate, and it can be easily checked in the examples treated below. Notably, these examples include diffusion generators on compact manifolds and weighted graph Laplacians satisfying the stated domain and boundedness conditions. 
    
    In more general regular Dirichlet-form settings, analogous estimates can often be derived from the Beurling--Deny decomposition under suitable hypotheses on the strongly local part, jump measure, and killing term. However, we impose \eqref{eq:bounded-multiplier} explicitly to avoid unnecessary abstraction.
\end{remark}

\begin{example}[Compact Riemannian manifolds with drifted Laplacians -- continue]
We revisit the example of compact Riemannian manifolds with drifted Laplacian in Example \ref{ex:manifold}. For simplicity, we assume that $\rho\in C^{\infty}(\mathcal{M})$. We recall that 
\[
    \L f
    =
    -\Delta_g f-\langle \nabla(\log\rho),\nabla f\rangle_g.
\]
Since $\mathcal M$ is compact and $\L$ is elliptic with smooth coefficients,
$\L$ has compact resolvent. The eigenfunctions of $\L$ are smooth and bounded, and
$\mathcal A=C^\infty(\mathcal M)$ is an algebraic core containing all eigenfunctions.

For $f,h\in C^\infty(\mathcal M)$, the carr\'e du champ is given by
\[
    \Gamma(f,h)
    =
    \frac12(f\L h+h\L f-\L(fh))
    =
    \langle \nabla f,\nabla h\rangle_g.
\]
In particular, $\Gamma(f)=|\nabla f|_g^2$. By the Leibniz rule, we have
\begin{eqnarray*}
\Gamma(fw)(x) &=& |\nabla (fw)(x)|_g^2 = |f(x)\nabla w(x) + w(x)\nabla f(x)|_g^2 \\
&\le& 2 \|f\|_{\infty}^2 |\nabla w(x)|_g^2 + 2 \sup_{x\in \mathcal{M}} |\nabla f(x)|^2_g |w(x)|^2   \\
&=& 2 \|f\|_{\infty}^2 |\nabla w(x)|_g^2 + 2 \|\Gamma(f)\|_{\infty} |w(x)|^2.
\end{eqnarray*}
Integrating over $\mathcal{M}$ against $\mu$ yields
\[ \mathcal{E}(fw) \le 2 \|f\|_{\infty}^2 \mathcal{E}(w) + 2 \|\Gamma(f)\|_{\infty} \|w\|_2^2,\]
which is exactly the bounded multiplier condition \eqref{eq:bounded-multiplier} in (A3).
    
\end{example}

\begin{example}[Weighted graphs with killing potentials -- continue]

We revisit Example \ref{ex:graph}.
Since $\mathcal X$ is finite,
$\L$ is just a matrix of finite size, and we can simply take $\mathcal A=\mathbb R^{\mathcal X}$ to be an algebraic core containing all eigenfunctions.

To verify \eqref{eq:bounded-multiplier}, we observe that for $f,h\in\mathbb R^{\mathcal X}$
\[
    \Gamma(f,h)(x)
    =
    \frac{1}{2\mu(x)}
    \sum_{y\in \mathcal X} b(x,y)(f(x)-f(y))(h(x)-h(y))
    +
    \frac{V(x)}{2\mu(x)}f(x)h(x).
\]
This implies 
\[ \mathcal{E}(f,h) = \sum_{x\in\mathcal{X}}\Gamma(f,h)(x)\mu(x) + \frac{1}{2}\sum_{x\in\mathcal{X}}V(x)f(x)h(x).\]
For $f,w\in \R^{\mathcal X}$,
\[
    \Gamma(fw)(x)
    =
    \frac{1}{2\mu(x)}
    \sum_{y\in \mathcal X} b(x,y)(f(x)w(x)-f(y)w(y))^2
    +
    \frac{V(x)}{2\mu(x)}f(x)^2w(x)^2.
\]
We note that
\[(f(x)w(x)-f(y)w(y))^2 \le 2\|f\|_{\infty}^2(w(x)-w(y))^2 + 2w(x)^2(f(x)-f(y))^2.\]
This implies 
\[
    \Gamma(fw)(x)+\frac{V(x)}{2\mu(x)}f(x)^2w(x)^2 \le
    2\|f\|_{\infty}^2 \Big (\Gamma(w)(x) +\frac{V(x)}{2\mu(x)}w(x)^2\Big )
    + 2w(x)^2 \Gamma(f)(x).
\]
Note that $\Gamma(f)(x)\le \|\Gamma(f)\|_{\infty}$.
Integrating against $\mu$ over $\mathcal{X}$ yields 
\[
    \mathcal E(fw)
    \le
    2\|f\|_\infty^2\mathcal E(w)
    +
    2\|\Gamma(f)\|_\infty\|w\|_2^2.
\]
Thus, the assumption on bounded multiplier is satisfied.
For more background on discrete Dirichlet spaces on graphs, we refer to the book \cite{keller2021graphs}.

\end{example}

Let $(\phi_k)_{k\ge 1}$ be the orthonormal basis of $L^2(\mu)$, consisting of eigenfunctions of $\L$ as in (A1). For $n\in\N$, we define 
\begin{equation} \label{eq:def-kernel}
     \K_n(x,y):=\sum_{k=1}^n \phi_k(x)\phi_k(y). 
\end{equation}
This is the integral kernel of the orthogonal projection onto $ \operatorname{span}\{\phi_1,\ldots,\phi_n\}\subset L^2(\mu)$. 

\begin{definition}[Spectral DPPs of $\L$]
     The determinantal point process $\mathcal{S}_n$ on $\mathcal{X}$ defined by the kernel $\K_n$ (defined in Equation \eqref{eq:def-kernel}) with respect to the reference measure $\mu$ is called \emph{the spectral DPP of $\L$ of rank $n$}.
\end{definition}

\begin{remark}
    Since $\K_n$ is a projection of rank $n$, $\mathcal S_n$ has almost surely exactly $n$ points. 
\end{remark}

\section{A general variance reduction theorem for spectral DPPs}
\subsection{Main result}

Let $(\mathcal{X},\mathcal{B},\mu)$ be a $\sigma$-finite measure space and $\L$ be a Markov semigroup generator satisfying (A1)--(A3) as in Section \ref{subsec:spectral-DPP}. Let $\mathcal{S}_n$ be the spectral DPP of $\L$ of rank $n$, defined as in Section \ref{subsec:spectral-DPP}. In this section, we provide a general recipe for variance reduction of linear statistics $\mathcal{S}_n(f)$ of spectral DPPs of $\L$.

To state the main theorem, we introduce some notations for our convenience. We denote by $\gamma(z)$ the gamma function, given by the integral
\[ \gamma(z):=\int_0^{\infty} u^{z-1}e^{-u} \d u, \quad z\in \C \text{ with } \mathrm{Re}(z)>0.\]
Let $\alpha\in(0,1/2)$. For a function $f:\mathcal{X}\rightarrow \R$ such that $\|f\|_{\infty}<\infty$ and $\|\Gamma(f)\|_{\infty}:=\sup_{x\in\mathcal{X}}|\Gamma(f)(x)| <\infty$, we define
\begin{equation}
    C_{\alpha}[f]:=\frac{2}{(1-2\alpha)\gamma(1-\alpha)}  \|\Gamma(f)\|_{\infty}^{\alpha}
    \|f\|_{\infty}^{1-2\alpha}.
\end{equation}
We introduce the eigenvalue counting function for $\L$:
\begin{equation}
    n_{\L}(\lambda):= \#\{k \ge 1: \lambda_k \le \lambda\}, \quad \lambda\in \R.
\end{equation}

\begin{remark}
The function \(n_{\L}\) is nondecreasing, right-continuous, and takes values
in \(\mathbb N\cup\{0\}\). For every \(n\) and for every \(\delta>0\), one has
\[
    n_{\L}(\lambda_n-\delta)<n \le n_{\L}(\lambda_n),
\]
with the convention that \(n_{\L}(\lambda)=0\) for \(\lambda<0\).
\end{remark}

Then we can state our main theorem:
\begin{theorem} \label{thm:var-reduce-general}
Let $f \in \mathcal{A}$ be such that $\|f\|_{\infty}<\infty$ and $\|\Gamma(f)\|_{\infty}<\infty$. 
Let \(\alpha\in(0,1/2)\) be arbitrary. Then
\[
    \var[\mathcal S_n(f)]
    \le
    \sum_{k=1}^n
    \min\left\{
        \|f\|_\infty^2,
        \frac{C_\alpha[f]^2}
        {(\lambda_n^\alpha-\lambda_k^\alpha)^2}
    \right\},
\]
where the fraction is interpreted as \(+\infty\) whenever
\(\lambda_k=\lambda_n\).
Consequently,
\[
    \var[\mathcal S_n(f)]
    \le
    \Big (\|f\|_{\infty}^2+\frac{\pi^2}{6}C_{\alpha}[f]^2\Big ) R_{\L,\alpha}(n),
\]
where 
\[R_{\L,\alpha}(n):= \max\Big \{n_{\L}(1);\max_{0 \le p \le \lfloor\lambda_n^{\alpha}\rfloor-1} \big \{n_{\L}\big ((\lambda_{n}^{\alpha}-p)^{1/\alpha} \big ) - n_{\L}\big ((\lambda_{n}^{\alpha}-p-1)^{1/\alpha} \big )\big \} \Big \}.\]
\end{theorem}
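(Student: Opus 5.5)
The plan is to start from the standard variance identity for projection DPPs. Since $\K_n$ is the kernel of the projection $P_n$ onto $\span\{\phi_1,\dots,\phi_n\}$, one has
\[
\var[\mathcal S_n(f)]=\int f^2\K_n(x,x)\,\d\mu-\iint f(x)f(y)\K_n(x,y)^2\,\d\mu\,\d\mu=\sum_{k=1}^n\|(I-P_n)(f\phi_k)\|_2^2=\sum_{k\le n}\sum_{j>n}|\langle f\phi_k,\phi_j\rangle|^2 .
\]
So it suffices to bound each summand $\|(I-P_n)(f\phi_k)\|_2^2$ by the stated minimum. The first bound, $\|(I-P_n)(f\phi_k)\|_2^2\le\|f\phi_k\|_2^2\le\|f\|_\infty^2$, is immediate.

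For the second bound I would use a commutator with a fractional power of $\L$. For $j>n\ge k$ we have $\lambda_j^\alpha-\lambda_k^\alpha\ge\lambda_n^\alpha-\lambda_k^\alpha$, and
\[
(\lambda_j^\alpha-\lambda_k^\alpha)\langle f\phi_k,\phi_j\rangle=\langle [\L^\alpha,f]\phi_k,\phi_j\rangle .
\]
Summing over $j>n$ gives $\|(I-P_n)(f\phi_k)\|_2\le(\lambda_n^\alpha-\lambda_k^\alpha)^{-1}\|[\L^\alpha,f]\phi_k\|_2$. It then remains to show $\|[\L^\alpha,f]w\|_2\le C_\alpha[f]\|w\|_2$, or at least this for $w=\phi_k$ (or with $\phi_j$ on the other side, using self-adjointness and duality). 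To get this, I would write $\L^\alpha$ through the Balakrishnan/semigroup representation
\[
\lambda^\alpha=\frac{\alpha}{\gamma(1-\alpha)}\int_0^\infty(1-e^{-t\lambda})\,t^{-\alpha-1}\,\d t,
\]
so that $[\L^\alpha,f]=-\frac{\alpha}{\gamma(1-\alpha)}\int_0^\infty[\rmP_t,f]\,t^{-\alpha-1}\,\d t$. I would then split the integral at a threshold $T$:
\begin{itemize}
\item For $t\ge T$, use the crude bound $\|[\rmP_t,f]\|\le 2\|f\|_\infty$.
\item For $t<T$, use $\|(I-\rmP_t)g\|_2^2\le t\,\mathcal E(g)$ (spectral calculus, since $(1-e^{-t\lambda})^2\le t\lambda$) together with the bounded-multiplier estimate (A3) for $g=fw$. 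This gives a bound of order $\sqrt{t}\,\|\Gamma(f)\|_\infty^{1/2}$ plus terms involving $\mathcal E(w)$.
\end{itemize}
The factor $\sqrt t$ makes $\int_0^T t^{-1/2-\alpha}\,\d t$ finite exactly when $\alpha<1/2$, which explains both the restriction on $\alpha$ and the factor $(1-2\alpha)^{-1}$. Optimizing over $T$ (choosing $T\sim\|f\|_\infty^2/\|\Gamma(f)\|_\infty$) should produce $\|\Gamma(f)\|_\infty^\alpha\|f\|_\infty^{1-2\alpha}$ with the stated constant.

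\textbf{Main obstacle.} The hard step is removing the $\mathcal E(w)$-terms, which for $w=\phi_k$ would bring in $\lambda_k$. I would first try to write $[\rmP_t,f]w=\rmP_t(fw)-f\rmP_tw$ and symmetrize: $(I-\rmP_t)(fw)-f(I-\rmP_t)w$. Then I would use the Leibniz-type structure of $\Gamma$ (via (A3) applied to $\rmP_{s}$-regularized functions, or a Duhamel formula $\frac{\d}{\d s}\rmP_{t-s}(f\rmP_s w)$) so that only $\|\Gamma(f)\|_\infty$ and $\|w\|_2$ survive. If that fails, I would fall back on the sesquilinear form $\langle[\rmP_t,f]\phi_k,\phi_j\rangle=(e^{-t\lambda_j}-e^{-t\lambda_k})\langle f\phi_k,\phi_j\rangle$ and bound it by putting half of the energy on each side.

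\textbf{Second statement.} Given the per-term bound, I would group the indices $k\le n$ into shells:
\begin{itemize}
\item for $0\le p\le\lfloor\lambda_n^\alpha\rfloor-1$, the shell $\{k:(\lambda_n^\alpha-p-1)^{1/\alpha}<\lambda_k\le(\lambda_n^\alpha-p)^{1/\alpha}\}$;
\item a remaining shell $\{k:\lambda_k\le(\lambda_n^\alpha-\lfloor\lambda_n^\alpha\rfloor)^{1/\alpha}\le1\}$.
\end{itemize}
Each shell has cardinality at most $R_{\L,\alpha}(n)$: the last one is bounded by $n_{\L}(1)$, the others by the corresponding counting differences. On shell $p\ge1$ we have $\lambda_n^\alpha-\lambda_k^\alpha\ge p$, so each term there is at most $C_\alpha[f]^2/p^2$. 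On shell $p=0$ we use the bound $\|f\|_\infty^2$. The remaining shell has gap at least $\lfloor\lambda_n^\alpha\rfloor$; it is bounded by $\|f\|_\infty^2$ if $\lambda_n<1$, and otherwise by $C_\alpha[f]^2/\lfloor\lambda_n^\alpha\rfloor^2$, which is an unused term of the series. Summing, $\sum_{p\ge1}p^{-2}=\pi^2/6$ yields the claim.
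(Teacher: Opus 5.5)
\textbf{Gap: the commutator estimate for the heat semigroup is not established.} Much of your outline matches the paper: the identity $\var[\mathcal S_n(f)]=\sum_{k\le n}\|(\I-P_n)(f\phi_k)\|_2^2$; the reduction of each summand to $\|[\L^\alpha,\M_f]\phi_k\|_2/(\lambda_n^\alpha-\lambda_k^\alpha)$ (your coefficient-wise version is equivalent to the paper's spectral-projection step); the Balakrishnan formula; the split at $T\sim\|f\|_\infty^2/\|\Gamma(f)\|_\infty$ with the crude bound $2\|f\|_\infty$ for large $t$; and the shell count, which is correct as written.

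The missing step is the one that carries the whole proof. You need a bound $\|[\rmP_t,\M_f]w\|_2\le C\sqrt t\,\|\Gamma(f)\|_\infty^{1/2}\|w\|_2$ with no dependence on $\mathcal E(w)$, hence none on $\lambda_k$ when $w=\phi_k$. You leave this open, and the tools you propose do not give it.
\begin{itemize}
\item Via $\|(\I-\rmP_t)g\|_2^2\le t\,\mathcal E(g)$ and (A3), you get $\|[\rmP_t,\M_f]\phi_k\|_2\lesssim\sqrt t\,(\|f\|_\infty\sqrt{\lambda_k}+\|\Gamma(f)\|_\infty^{1/2})$. Integrating against $t^{-1-\alpha}$ and optimizing $T$ gives $\|[\L^\alpha,\M_f]\phi_k\|_2\lesssim\|f\|_\infty\lambda_k^\alpha+\|\Gamma(f)\|_\infty^{\alpha}\|f\|_\infty^{1-2\alpha}$. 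The per-term bound is then of order $\lambda_k^\alpha/(\lambda_n^\alpha-\lambda_k^\alpha)$. This is no better than $\|f\|_\infty$ whenever $\lambda_k\asymp\lambda_n$, which in the Weyl-law settings is a positive proportion of the indices, so the variance stays of order $n$.
\item Feeding (A3) into a Duhamel formula fails for the same reason. (A3) is only an upper bound on $\mathcal E(fw)$. Estimating $\langle[\L,\M_f]u,v\rangle=\mathcal E(fu,v)-\mathcal E(u,fv)$ term by term therefore leaves a piece $\|f\|_\infty\|\L^{1/2}u\|_2\|\L^{1/2}v\|_2$. For $u=\rmP_sg$, $v=\rmP_{t-s}h$ this integrates to a multiple of $\int_0^t s^{-1/2}(t-s)^{-1/2}\,\d s=\pi$. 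That is $O(1)$ as $t\to0$, so the Balakrishnan integral diverges at $t=0$.
\item Your entrywise fallback, as described, controls single coefficients $\langle f\phi_k,\phi_j\rangle$. You need their $\ell^2$-sum over $j>n$.
\end{itemize}

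\textbf{The missing idea.} The needed input is an exact cancellation built into the carr\'e du champ. From the definition of $\Gamma$ and the self-adjointness of $\L$, for $w_1,w_2\in\mathcal A$,
\[
\langle[\L,\M_f]w_1,w_2\rangle=\mathcal E(fw_1,w_2)-\mathcal E(w_1,fw_2)=\int\bigl(w_1\Gamma(f,w_2)-w_2\Gamma(f,w_1)\bigr)\,\d\mu .
\]
Pointwise Cauchy--Schwarz, $|\Gamma(f,w)|\le\Gamma(f)^{1/2}\Gamma(w)^{1/2}$, together with $\int\Gamma(w)\,\d\mu\le\mathcal E(w)$, then gives
\[
|\langle[\L,\M_f]w_1,w_2\rangle|\le\|\Gamma(f)^{1/2}\|_\infty\bigl(\|w_1\|_2\|\L^{1/2}w_2\|_2+\|w_2\|_2\|\L^{1/2}w_1\|_2\bigr).
\]
This is Lemma~\ref{lm:LMf-bound}. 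The $\|f\|_\infty$-term is gone, and each product carries only one factor $\L^{1/2}$. Now set $\psi(s)=\langle f\rmP_sg,\rmP_{t-s}h\rangle$; its derivative is exactly this bilinear form evaluated at $(\rmP_sg,\rmP_{t-s}h)$. With $\|\L^{1/2}\rmP_s\|_{L^2\to L^2}\le(2es)^{-1/2}$,
\[
|\langle[\rmP_t,\M_f]g,h\rangle|\le(2e)^{-1/2}\|\Gamma(f)^{1/2}\|_\infty\|g\|_2\|h\|_2\int_0^t\bigl(s^{-1/2}+(t-s)^{-1/2}\bigr)\,\d s\le2\sqrt t\,\|\Gamma(f)^{1/2}\|_\infty\|g\|_2\|h\|_2 .
\]
This is Lemma~\ref{lm:Ptf-bound}, and the argument does not go through (A3). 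With it in hand, the rest of your outline runs unchanged and produces exactly the constant $C_\alpha[f]$.
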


\begin{remark}
    In many concrete examples, \(\|\Gamma(f)^{1/2}\|_{\infty}\) coincides with the
Lipschitz constant of the test function \(f\).
Theorem~\ref{thm:var-reduce-general} therefore reduces the problem of proving
variance reduction for Lipschitz functions to understanding the asymptotic
behavior of \(R_{\L,\alpha}(n)\) as \(n\to\infty\).
A trivial bound always gives \(R_{\L,\alpha}(n)=O(n)\), which matches the
variance order for independent samples.
The interesting cases are those in which the eigenvalue counting function
\(n_{\L}(\lambda)\) satisfies a Weyl-type asymptotic (see Section \ref{subsec:var-reduction-manifold}); in such settings, one
obtains a genuine variance-reduction effect.
\end{remark}

\subsection{Proof of the main theorem}

In this section, we present the proof of Theorem \ref{thm:var-reduce-general}. To this end, we introduce some notations. 
For a bounded function $f$, we denote by $\M_f:L^2(\mu) \rightarrow L^2(\mu)$ the linear operator by multiplying the function $f$, namely 
\[\M_f(g)(x) = f(x)g(x), \text{ for } g\in L^2(\mu).\]
Let $\alpha\in [0,1]$, we define the commutator $[\L^{\alpha},\M_f]$ acting on $\mathcal{A}$ by
\begin{equation}
    [\L^{\alpha},\M_f]u := \L^{\alpha}(\M_f u) - \M_f(\L^{\alpha}u) =  \L^{\alpha}(fu) - f\L^{\alpha}u, \quad u \in \mathcal{A}.
\end{equation}
A key ingredient in our proof is the following $L^2$-bound for the commutators. 

\begin{lemma} \label{lm:bounded-operator}
    For every $\alpha\in(0,\frac{1}{2})$, the commutator $[\L^{\alpha},\M_f]$ extends from $\mathcal{A}$ to a bounded operator on $L^2(\mu)$ with
    \[ \|[\L^{\alpha},\M_f]\|_{L^2\rightarrow L^2} \le  \frac{2\|\Gamma(f)\|_{\infty}^{\alpha} \|f\|_{\infty}^{1-2\alpha}}{(1-2\alpha)\gamma(1-\alpha)}=: C_{\alpha}[f],\]
    where $\gamma(1-\alpha):=\int_0^{\infty} u^{-\alpha} e^{-u}\d u$.
\end{lemma}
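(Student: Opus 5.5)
The plan is to represent $\L^{\alpha}$ through the heat semigroup and reduce the commutator bound to a short-time estimate for $[\rmP_t,\M_f]$. For $\alpha\in(0,1)$ we use Balakrishnan's subordination formula
\[
\L^{\alpha}u=\frac{\alpha}{\gamma(1-\alpha)}\int_0^\infty \bigl(u-\rmP_t u\bigr)\,t^{-1-\alpha}\,\d t ,\qquad u\in\mathcal A .
\]
It is checked on each eigenfunction $\phi_k$ via $\int_0^\infty (1-e^{-t\lambda})t^{-1-\alpha}\d t=\lambda^\alpha\gamma(1-\alpha)/\alpha$, and extended by the spectral theorem. Since $u-\rmP_t u$ commutes through multiplication up to $[\rmP_t,\M_f]$, this gives
\[
[\L^\alpha,\M_f]u=\frac{\alpha}{\gamma(1-\alpha)}\int_0^\infty [\M_f,\rmP_t]u\;t^{-1-\alpha}\,\d t ,
\]
so it suffices to bound $\|[\rmP_t,\M_f]\|_{L^2\to L^2}$ in two regimes and split the integral at a threshold $T>0$.

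\textbf{Large $t$.} Since $\rmP_t$ is an $L^2$-contraction, we have trivially $\|[\rmP_t,\M_f]\|\le 2\|f\|_\infty$. Hence
\[
\int_T^\infty 2\|f\|_\infty\, t^{-1-\alpha}\,\d t=\frac{2\|f\|_\infty T^{-\alpha}}{\alpha}.
\]

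\textbf{Small $t$.} Here we aim for $\|[\rmP_t,\M_f]\|\le c\,\|\Gamma(f)\|_\infty^{1/2}\sqrt t$. For $u,v\in\mathcal A$, differentiate $s\mapsto \langle \rmP_{t-s}v, f\rmP_s u\rangle$. This yields
\[
\langle v,\rmP_t(fu)-f\rmP_tu\rangle=\int_0^t\Bigl(\mathcal E(\rmP_{t-s}v,f\rmP_su)-\mathcal E(f\rmP_{t-s}v,\rmP_su)\Bigr)\d s .
\]
The integrand is the bilinear commutator form $\mathcal E(g,fw)-\mathcal E(fg,w)$. We want to bound it by
\[
C\|\Gamma(f)\|_\infty^{1/2}\bigl(\|w\|_2\,\mathcal E(g)^{1/2}+\|g\|_2\,\mathcal E(w)^{1/2}\bigr).
\]
In the diffusion case this is immediate from $\Gamma(g,fw)-\Gamma(fg,w)=w\Gamma(g,f)-g\Gamma(f,w)$ and Cauchy--Schwarz for $\Gamma$. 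In general we would derive it from the bounded multiplier estimate (A3) by polarization, applied to $\mathcal E(f(g\pm w))$ and the like, together with the cancellation of the $\|f\|_\infty^2$ terms.

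Combining this with the semigroup estimate $\mathcal E(\rmP_s u)\le \|u\|_2^2/(2es)$ from the spectral theorem, the integral in $s$ involves $\int_0^t s^{-1/2}\,\d s=2\sqrt t$. This gives the desired $\sqrt t$ bound, and then
\[
\int_0^T c\|\Gamma(f)\|_\infty^{1/2}\, t^{-1/2-\alpha}\,\d t=\frac{2c\|\Gamma(f)\|_\infty^{1/2}T^{1/2-\alpha}}{1-2\alpha}.
\]
This is where the factor $(1-2\alpha)^{-1}$ and the restriction $\alpha<1/2$ come from.

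\textbf{Optimizing $T$.} Choose $T=\|f\|_\infty^2/\|\Gamma(f)\|_\infty$, which balances the two pieces. Both then become multiples of $\|\Gamma(f)\|_\infty^{\alpha}\|f\|_\infty^{1-2\alpha}$. Together with the prefactor $\alpha/\gamma(1-\alpha)$, this produces the constant $C_\alpha[f]$ after bookkeeping; the constant $c$ must come out sharp enough to give exactly the stated factor $2$. Boundedness on $\mathcal A$ then extends by density to all of $L^2(\mu)$.

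The main obstacle is the small-time commutator bound in the abstract setting, namely extracting the cancellation $\mathcal E(g,fw)-\mathcal E(fg,w)=O(\|\Gamma(f)\|_\infty^{1/2})$ from (A3) alone, without a carré du champ Leibniz rule. If polarization of (A3) fails to give it, the fallback is to work with $\Gamma$ directly on the core $\mathcal A$, using that eigenfunctions and $\rmP_s u$ stay in $\mathcal A$ by (A2). A secondary technical point is justifying the differentiation and the interchange of the integrals, which is harmless on finite eigenfunction expansions.
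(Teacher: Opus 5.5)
Your proposal is correct and follows the paper's proof essentially step by step: Balakrishnan's formula, the Duhamel interpolation $s\mapsto\langle \rmP_{t-s}v,f\rmP_s u\rangle$ reducing to the form $\mathcal{E}(g,fw)-\mathcal{E}(fg,w)$ (the paper's Lemma~\ref{lm:LMf-bound}), the bound $\|[\rmP_t,\M_f]\|\le 2\|\Gamma(f)\|_\infty^{1/2}\sqrt t$ (your constant $c=4/\sqrt{2e}\le 2$ does reproduce exactly the stated $C_\alpha[f]$), the trivial $2\|f\|_\infty$ bound for large $t$, and the split at $T=\|f\|_\infty^2/\|\Gamma(f)\|_\infty$. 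The one correction concerns your ``main obstacle'': the identity $\Gamma(g,fw)-\Gamma(fg,w)=w\Gamma(g,f)-g\Gamma(f,w)$ follows purely algebraically from $\Gamma(f,g)=\frac12\bigl(f\L g+g\L f-\L(fg)\bigr)$ on the algebra $\mathcal{A}$ (no diffusion or chain-rule property is needed), so your ``fallback'' is exactly the paper's argument and works in full generality, whereas polarizing the inequality (A3) could not produce the required cancellation of the $\|f\|_\infty^2$ terms.
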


We defer the proof of Lemma \ref{lm:bounded-operator} to the next section. Given Lemma \ref{lm:bounded-operator}, we are ready to prove Theorem \ref{thm:var-reduce-general}

\begin{proof}[Proof of Theorem \ref{thm:var-reduce-general}]
    Let $H_n:= \span \{\phi_1,\ldots,\phi_n\} \subset L^2(\mu)$. Let $H_n^\perp$ be the orthogonal complement of $H_n$ in $L^2(\mu)$.
    Then we have 
    \begin{eqnarray*}
        \var[\mathcal{S}_n(f)] = \sum_{k=1}^n \| \proj_{H_n^{\perp}}(f\phi_k)\|^2_2 .
    \end{eqnarray*} 
    Let $\alpha \in (0,1/2)$ be arbitrary, then it suffices to show that 
    \[ \|\proj_{H_{n}^\perp}(f\phi_k)\|_2 \le \min \Big\{\|f\|_{\infty},\frac{C_{\alpha}[f]}{\lambda_n^{\alpha}-\lambda_k^{\alpha}} \Big \}, \quad \forall 1\le k\le n.\]
    Since
    $\|\proj_{H_{n}^\perp}(f\phi_k)\|_2 \le \|f\phi_k\|_2 \le \|f\|_{\infty}\|\phi_k\|_2=\|f\|_{\infty}$, it is enough to show
    \[\|\proj_{H_{n}^\perp}(f\phi_k)\|_2 \le\frac{C_{\alpha}[f]}{\lambda_n^{\alpha}-\lambda_k^{\alpha}}, \quad \text{for } 1\le k\le n\text{ such that } \lambda_k<\lambda_n. \]

    To this end, we observe that $H_n^\perp =\overline{\span} \{\phi_{m}: m \ge n+1\}$, and $\lambda_m \ge \lambda_n$ for every $m\ge n+1$. Hence,  
    \[H_n^\perp \subset \overline{\span}\{\phi_m: \lambda_m \ge \lambda_{n}\} =:H_{\ge \lambda_{n}}.\] 
    Thus, for every $1\le k\le n$, we have
    $ \|\proj_{H_n^\perp}(f\phi_k)\|_2 \le \|\proj_{H_{\ge \lambda_{n}}}(f\phi_k)\|_2$. 
    
Since $\L^{\alpha}\phi_k = \lambda_k^{\alpha} \phi_k$, we have
    \[ [\L^{\alpha},\M_f] \phi_k = \L^{\alpha}(f\phi_k) - f\L^{\alpha}\phi_k = \L^{\alpha}(f\phi_k) - \lambda_k^{\alpha} f\phi_k = (\L^{\alpha} - \lambda_k^{\alpha})(f\phi_k).\]
    Applying $\mathbf{1}_{[\lambda_{n}^{\alpha}, \infty)}(\L^{\alpha})$ on both sides gives
    \begin{eqnarray*}
         \mathbf{1}_{[\lambda_{n}^{\alpha}, \infty)}(\L^{\alpha})([\L^{\alpha},\M_f] \phi_k )
    &=& \mathbf{1}_{[\lambda_{n}^{\alpha}, \infty)}(\L^{\alpha})((\L^{\alpha} - \lambda_k^{\alpha})(f\phi_k)) \\
    &=& (\L^{\alpha} - \lambda_k^{\alpha})(\mathbf{1}_{[\lambda_{n}^{\alpha}, \infty)}(\L^{\alpha}))(f\phi_k)).
    \end{eqnarray*}
    For $k$ such that $\lambda_k  < \lambda_{n}$, we have $\L^{\alpha} - \lambda_k^{\alpha}$ is invertible on $\range(\mathbf{1}_{[\lambda_{n}^{\alpha}, \infty)}(\L^{\alpha}))$. 
    Thus, we can apply $(\L^{\alpha} - \lambda_k^{\alpha})^{-1}$ on both sides to get
    \[\mathbf{1}_{[\lambda_{n}^{\alpha}, \infty)}(\L^{\alpha}))(f\phi_k) 
    = (\L^{\alpha}-\lambda_k^{\alpha})^{-1}(\mathbf{1}_{[\lambda_{n}^{\alpha}, \infty)}(\L^{\alpha})([\L^{\alpha},\M_f] \phi_k )).\]
    Therefore,
    \begin{eqnarray*} 
         \|\proj_{H_n^\perp}(f\phi_k)\|_2&\le& \|\mathbf{1}_{[\lambda_{n}^{\alpha}, \infty)}(\L^{\alpha}))(f\phi_k) \|_2 \\
         &=& \|(\L^{\alpha}-\lambda_k^{\alpha})^{-1}(\mathbf{1}_{[\lambda_{n}^{\alpha}, \infty)}(\L^{\alpha})([\L^{\alpha},\M_f] \phi_k ))\|_2 \notag \\
        &\le& \|(\L^{\alpha}-\lambda_k^{\alpha})^{-1}\mathbf{1}_{[\lambda_{n}^{\alpha}, \infty)}(\L^{\alpha})\|_{L^2\rightarrow L^2} \|[\L^{\alpha},\M_f]\|_{L^2 \rightarrow L^2} \|\phi_k \|_2  \notag \\
        &\le& \frac{1}{\lambda_{n}^{\alpha} - \lambda_k^{\alpha}} \|[\L^{\alpha},\M_f]\|_{L^2 \rightarrow L^2} \le \frac{C_{\alpha}[f]}{\lambda_n^{\alpha}-\lambda_k^{\alpha}},
    \end{eqnarray*}
    where we used Lemma \ref{lm:bounded-operator} in the last line. 

    For each $p\in \N \cup \{0\}$, we define the following set of indices
    \[ I_p := \{1\le k\le n: \lambda_n^{\alpha}-p-1<\lambda_k^{\alpha} \le \lambda_n^{\alpha}-p\}.\]
    For $0\le p \le \lfloor\lambda_n^{\alpha}\rfloor-1$, it is easy to see that 
    \[ |I_p| \le n_{\L}\big ((\lambda_{n}^{\alpha}-p)^{1/\alpha} \big ) - n_{\L}\big ((\lambda_{n}^{\alpha}-p-1)^{1/\alpha} \big ).\]
   For $p=\lfloor\lambda_n^{\alpha}\rfloor$, we have $|I_p|\le n_{\L}(1)$. 
    For $k\in I_p$, we have $\lambda_n^{\alpha}-\lambda_k^{\alpha} \ge p$. Thus
    \begin{eqnarray*}
        \var[\mathcal{S}_n(f)] &\le& \|f\|^2_{\infty} |I_0|+ \sum_{1 \le p \le \lfloor\lambda_n^{\alpha}\rfloor}\sum_{k\in I_p} \frac{C_{\alpha}[f]^2}{(\lambda_n^{\alpha}-\lambda_k^{\alpha})^2} \\
        &\le& \|f\|^2_{\infty} |I_0| + C_{\alpha}[f]^2 \sum_{1 \le p \le \lfloor\lambda_n^{\alpha}\rfloor}|I_p|p^{-2} \\
        &\le& \Big (\|f\|_{\infty}^2+\frac{\pi^2}{6}C_{\alpha}[f]^2\Big ) \max_{0 \le p \le \lfloor\lambda_n^{\alpha}\rfloor} |I_p|.
    \end{eqnarray*}
    Note that $|I_p|\le R_{\L,\alpha}(n)$ for every $0\le p \le \lfloor \lambda_n^{\alpha}\rfloor$.
    This completes the proof.
\end{proof}

\subsection{Proof of the key lemma}

It remains to prove Lemma \ref{lm:bounded-operator}. We first present some useful technical lemmas. In what follows, we consider $f\in \mathcal{A}$ such that $\Gamma(f) \in L^{\infty}(\mu)$.

\begin{lemma} \label{lm:LMf-bound}
    Let 
    \[\mathcal{B}_f(w_1,w_2):=\mathcal{E}(fw_1,w_2)-\mathcal{E}(w_1,fw_2), \quad w_1,w_2 \in \mathcal{D}(\L^{1/2}).\]
    Then for every $w_1,w_2 \in \mathcal{D}(\L)$:
    \[ |\mathcal{B}_f(w_1,w_2)| \le \|\Gamma(f)^{1/2}\|_{\infty}\Big (\|w_1\|_2 \|\L^{1/2}w_2\|_2 + \|w_2\|_2 \|\L^{1/2}w_1\|_2\Big ).\]
\end{lemma}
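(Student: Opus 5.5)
The plan is to first prove the bound for $w_1,w_2\in\mathcal A$ by an exact algebraic identity that removes every second-order term, and then to extend it to $\mathcal D(\L)$ by density. For $u,v\in\mathcal A$ we have $\mathcal E(u,v)=\langle u,\L v\rangle=\langle \L u,v\rangle$. Since $\mathcal A$ is an algebra, $fw_i\in\mathcal A$. The definition of the carr\'e du champ then gives the Leibniz-type formula
\[
\L(fw)=f\L w+w\L f-2\Gamma(f,w),\qquad w\in\mathcal A .
\]

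\emph{Step 1: two expressions for $\mathcal B_f$.} Moving $\L$ onto $w_2$ in the first term and onto $w_1$ in the second gives
\[
\mathcal B_f(w_1,w_2)=\langle fw_1,\L w_2\rangle-\langle f\L w_1,w_2\rangle .
\]
Alternatively, we move $\L$ onto the products $fw_1$ and $fw_2$ and expand both with the Leibniz formula. The terms $\langle w_1w_2,\L f\rangle$ cancel, which leaves
\[
\mathcal B_f(w_1,w_2)=\langle f\L w_1,w_2\rangle-\langle w_1,f\L w_2\rangle-2\langle \Gamma(f,w_1),w_2\rangle+2\langle w_1,\Gamma(f,w_2)\rangle .
\]
Adding the two expressions, the terms containing $\L w_i$ cancel. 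This yields the key identity
\[
\mathcal B_f(w_1,w_2)=\int_{\mathcal X} w_1\,\Gamma(f,w_2)\,\d\mu-\int_{\mathcal X} w_2\,\Gamma(f,w_1)\,\d\mu .
\]
In particular, $\L f$ never appears, and only first-order quantities remain.

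\emph{Step 2: Cauchy--Schwarz.} Next we use that $\Gamma$ is a pointwise nonnegative symmetric bilinear form on $\mathcal A$, so that $|\Gamma(f,w)|\le \Gamma(f)^{1/2}\Gamma(w)^{1/2}$. In the examples of the paper this is immediate from the explicit formulas, including the killing term in the graph case; in general it follows from the sub-Markov property. This gives
\[
\Big|\int w_1\Gamma(f,w_2)\,\d\mu\Big|\le \|\Gamma(f)^{1/2}\|_\infty\,\|w_1\|_2\Big(\int\Gamma(w_2)\,\d\mu\Big)^{1/2}.
\]
It then remains to show $\int\Gamma(w)\,\d\mu\le\mathcal E(w)=\|\L^{1/2}w\|_2^2$. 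Writing $\int\Gamma(w)\,\d\mu=\mathcal E(w)-\tfrac12\int \L(w^2)\,\d\mu$, this amounts to $\int \L(w^2)\,\d\mu\ge 0$. In the examples it is explicit: equality holds on manifolds, and on graphs the gap is $\frac12\sum_x V(x)w(x)^2\ge 0$. In the abstract setting it comes from sub-Markovianity, since $\int \rmP_t g\,\d\mu\le\int g\,\d\mu$ for $g\ge0$. Summing the two symmetric contributions gives the claimed bound on $\mathcal A$.

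\emph{Step 3: extension to $\mathcal D(\L)$.} Given $w_i\in\mathcal D(\L)$, choose $w_i^{(n)}\in\mathcal A$ converging to $w_i$ in graph norm. Then $w_i^{(n)}\to w_i$ in $L^2(\mu)$ and $\L^{1/2}w_i^{(n)}\to \L^{1/2}w_i$ in $L^2(\mu)$. By (A3), applied to differences, $fw_i^{(n)}$ is Cauchy in the form norm, hence converges to $fw_i$ in $\mathcal D(\L^{1/2})$ by closedness of $\mathcal E$. Therefore both sides of the inequality pass to the limit.

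The step I expect to be the main obstacle is justifying $\int\Gamma(w)\,\d\mu\le\mathcal E(w)$, together with the pointwise Cauchy--Schwarz inequality for $\Gamma$, in the abstract framework. If a clean general argument is unavailable, I would verify both directly in the two model examples, where they are explicit.
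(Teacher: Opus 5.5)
Your proposal is correct and follows essentially the same route as the paper: establish $\mathcal B_f(w_1,w_2)=\int\big(w_1\Gamma(f,w_2)-w_2\Gamma(f,w_1)\big)\,\d\mu$ on $\mathcal A$, apply the pointwise Cauchy--Schwarz inequality for $\Gamma$ together with $\int\Gamma(w)\,\d\mu\le\|\L^{1/2}w\|_2^2$, and extend to $\mathcal D(\L)$ by density. The differences are only in bookkeeping and detail. You obtain the identity by averaging two expressions for $\mathcal B_f$, whereas the paper writes $\mathcal B_f=\int(\Gamma(fw_1,w_2)-\Gamma(w_1,fw_2))\,\d\mu$ and simplifies pointwise. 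You also spell out two steps the paper leaves implicit: the inequality $\int\Gamma(w)\,\d\mu\le\mathcal E(w)$, including the killing-term gap on graphs, and the use of (A3) with closedness of $\mathcal E$ to get $fw_i^{(n)}\to fw_i$ in form norm.
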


\begin{proof} 
    We first consider $w_1,w_2\in \mathcal
    A$. We have
    \begin{eqnarray*}
       \mathcal{B}_f(w_1,w_2) = \mathcal{E}(fw_1,w_2) - \mathcal{E}(w_1,fw_2) =  \int \big ( \Gamma(fw_1,w_2) -  \Gamma(w_1,fw_2) \big )\d\mu.
    \end{eqnarray*}
    Since 
     $\Gamma(f,g) = \frac{1}{2} (f\L g + g\L f - \L(fg))$,
    we have 
    \begin{eqnarray*}
        \Gamma(fw_1,w_2) - \Gamma(w_1,fw_2) &=& \frac{1}{2} \big (fw_1\L w_2 +w_2\L(fw_1) - w_1\L(fw_2) - fw_2\L(w_1) \big ) \\
        &=& \frac{1}{2}w_1 \big(f\L w_2 - \L(fw_2)\big ) - \frac{1}{2}w_2 \big(f\L w_1 - \L(fw_1)\big )\\
        &=& w_1 \Gamma(f,w_2) - w_2\Gamma(f,w_1).
    \end{eqnarray*}
    Thus, for every $w_1,w_2\in \mathcal{A}$
    \begin{eqnarray*}
        |  \mathcal{B}_f(w_1,w_2) |&=& \Big |\int  \big (w_1 \Gamma(f,w_2) - w_2\Gamma(f,w_1) \big )\d \mu\Big |\\
        &\le& \int \big ( |w_1| \Gamma(f)^{1/2} \Gamma(w_2)^{1/2} + |w_2|\Gamma(f)^{1/2}\Gamma(w_1)^{1/2} \big )\d\mu \\
        &\le& \|\Gamma(f)^{1/2}\|_{\infty}\Big (\|w_1\|_2 \big (\int \Gamma(w_2)\d\mu\big )^{1/2} + \|w_2\|_2 \big (\int \Gamma(w_1)\d\mu\big )^{1/2}\Big ) \\
        &\le& \|\Gamma(f)^{1/2}\|_{\infty}\Big (\|w_1\|_2 \|\L^{1/2}w_2\|_2 + \|w_2\|_2 \|\L^{1/2}w_1\|_2\Big ).
    \end{eqnarray*}
    Since $\mathcal{A}$ is a core, this extends to $w_1,w_2\in \mathcal{D}(\L)$.
\end{proof}

\begin{lemma} \label{lm:Ptf-bound}
     Let $f\in \mathcal{A}$ be such that $\Gamma(f) \in L^{\infty}(\mu)$. We have
    \[ \|[\rmP_t, \M_f]\|_{L^2 \rightarrow L^2} \le \|\Gamma(f)^{1/2}\|_{\infty} 2\sqrt{t}, \quad \forall t>0.\]
\end{lemma}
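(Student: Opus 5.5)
Write the commutator as an integral along the semigroup (Duhamel's formula) and bound it in weak form using Lemma \ref{lm:LMf-bound}. For $u,v\in\mathcal A$ (dense in $L^2(\mu)$), set
\[
\Phi(s):=\langle \rmP_{t-s}(f\,\rmP_s u),v\rangle = \langle f\,\rmP_s u,\rmP_{t-s}v\rangle,\qquad s\in[0,t].
\]
Then $\Phi(t)-\Phi(0)=\langle f\rmP_t u,v\rangle-\langle \rmP_t(fu),v\rangle=-\langle[\rmP_t,\M_f]u,v\rangle$.

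Differentiating in $s$ (for $s\in(0,t)$, where $\rmP_s u$ and $\rmP_{t-s}v$ lie in $\mathcal D(\L)$) gives
\[
\Phi'(s)=-\langle f\L\rmP_s u,\rmP_{t-s}v\rangle+\langle f\rmP_s u,\L\rmP_{t-s}v\rangle
=\mathcal E(w_1,fw_2)-\mathcal E(fw_1,w_2)\cdot(-1)\cdot(-1),
\]
with $w_1=\rmP_s u$ and $w_2=\rmP_{t-s}v$. Precisely, $\langle f\rmP_s u,\L w_2\rangle=\mathcal E(fw_1,w_2)$ and $\langle f\L w_1,w_2\rangle=\mathcal E(w_1,fw_2)$, so $\Phi'(s)=\mathcal B_f(w_1,w_2)$.

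This step needs $fw_1\in\mathcal D(\L^{1/2})$, which follows from (A3) after approximating by the core. I expect this domain/approximation issue to be the only technical point. The alternative is to work with the spectral truncations $u,v\in\mathrm{span}\{\phi_k\}$ from (A2): these are dense and invariant under $\rmP_s$, so $w_1,w_2\in\mathcal A$ and Lemma \ref{lm:LMf-bound} applies directly.

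Lemma \ref{lm:LMf-bound} then gives
\[
|\Phi'(s)|\le \|\Gamma(f)^{1/2}\|_\infty\bigl(\|\rmP_s u\|_2\|\L^{1/2}\rmP_{t-s}v\|_2+\|\rmP_{t-s}v\|_2\|\L^{1/2}\rmP_s u\|_2\bigr).
\]
Contractivity gives $\|\rmP_s u\|_2\le\|u\|_2$. By the spectral theorem,
\[
\|\L^{1/2}\rmP_r v\|_2\le \sup_{\lambda\ge0}\lambda^{1/2}e^{-r\lambda}\,\|v\|_2=(2er)^{-1/2}\|v\|_2\le (2r)^{-1/2}\|v\|_2 .
\]
Integrating over $s\in(0,t)$ and using $\int_0^t (2r)^{-1/2}\,\d r=\sqrt{2t}$ for each of the two terms yields
\[
|\langle[\rmP_t,\M_f]u,v\rangle|\le \|\Gamma(f)^{1/2}\|_\infty\, 2\sqrt{2t/e}\,\|u\|_2\|v\|_2\le 2\sqrt t\,\|\Gamma(f)^{1/2}\|_\infty\|u\|_2\|v\|_2 .
\]
The last inequality holds since $2/e<1$.

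Finally, by density of the span of eigenfunctions and boundedness of $\rmP_t$ and $\M_f$, taking the supremum over $u,v$ gives the stated operator-norm bound. The integrability of $\Phi'$ near the endpoints $s=0,t$ is ensured by the $r^{-1/2}$ singularity being integrable. For finite eigenfunction combinations, $\Phi$ is moreover smooth on $[0,t]$, so the fundamental theorem of calculus applies without issue.
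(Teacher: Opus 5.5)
Your proof is correct and is essentially the paper's argument. It uses the same interpolation $s\mapsto\langle f\rmP_s u,\rmP_{t-s}v\rangle$, identifies its derivative with $\mathcal{B}_f(\rmP_s u,\rmP_{t-s}v)$, applies Lemma~\ref{lm:LMf-bound} with the spectral bound $\|\L^{1/2}\rmP_r\|_{L^2\to L^2}=(2er)^{-1/2}$, and integrates to get $2\sqrt{2t/e}\le 2\sqrt{t}$. Your restriction to finite combinations of eigenfunctions (in $\mathcal{A}$ by (A2), invariant under $\rmP_s$, dense by (A1)) is a cleaner way to handle the domain issue $f\rmP_s u\in\mathcal{D}(\L^{1/2})$, which the paper passes over.

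Fix two small slips. You must keep the factor $e$ from $(2er)^{-1/2}$: the weakened bound $(2r)^{-1/2}$ you wrote only gives $2\sqrt{2t}>2\sqrt{t}$. Also, the sign in the displayed line containing $(-1)\cdot(-1)$ is garbled; this is harmless, since only $|\Phi'|$ is used.
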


\begin{proof}
    Fix $g,h \in L^2(\mu)$ and $t>0$, we define $\psi(s):= \langle f\rmP_s g, \rmP_{t-s}h\rangle $ for $s\in [0,t]$. 
    Strongly continuity of $(\rmP_s)$ and boundedness of $\M_f$ imply that $\psi(s)$ is continuous on $[0,t]$, with $\psi(0)=\langle fg, \rmP_t h\rangle $ and $\psi(t) = \langle f\rmP_t g , h\rangle$. 

    For $s\in (0,t)$, we have $\partial_s \rmP_s = -\L\rmP_s$ and $\partial_s \rmP_{t-s} = \L\rmP_{t-s}$. This implies
    \[ \psi'(s) = -\langle f \L\rmP_s g, \rmP_{t-s} h \rangle + \langle f\rmP_s g , \L \rmP_{t-s}h \rangle = \mathcal{B}_f(\rmP_s g,\rmP_{t-s}h). \]
    Applying Lemma \ref{lm:LMf-bound} for $w_1 = \rmP_s g, w_2 = \rmP_{t-s}h$ gives
    \[ |\psi'(s)| \le \|\Gamma(f)^{1/2}\|_{\infty} \big (\|\rmP_s g\|_2 \|\L^{1/2}\rmP_{t-s}h\|_2 + \|\rmP_{t-s}h\|_2 \|\L^{1/2}\rmP_s g\|_2 \big ). \]
    We note that $\|\rmP_{s}\|_{L^2\rightarrow L^2} \le 1$, and 
    \[ \|\L^{1/2}\rmP_s\|_{L^2\rightarrow L^2} = \sup_{\lambda\ge 0} \sqrt{\lambda}e^{-\lambda s} = (2es)^{-1/2}.\]
    This implies 
    \[ |\psi'(s)| \le  \|\Gamma(f)^{1/2}\|_{\infty} (2e)^{-1/2}\big (s^{-1/2}+(t-s)^{-1/2} \big ) \|g\|_2\|h\|_2, \quad \forall s\in (0,t).\]
    Now note that $\psi(t)-\psi(0) = \langle \M_f\rmP_tg,h\rangle - \langle \rmP_t \M_f g,h\rangle = -\langle [\rmP_t,\M_f]g,h\rangle $, which implies
    \[ |\langle [\rmP_t,\M_f] g,h\rangle| = |\psi(t)-\psi(0)| \le \int_0^t|\psi'(s)|\d s \le  \|\Gamma(f)^{1/2}\|_{\infty} 2\sqrt{2t}e^{-1/2}\|g\|_2\|h\|_2  .\]
    Since $e\ge2$, we deduce that 
    \[ |\langle [\rmP_t,\M_f] g,h\rangle|  \le  \|\Gamma(f)^{1/2}\|_{\infty} 2\sqrt{t}\|g\|_2\|h\|_2  .\]
\end{proof}

\begin{lemma} \label{lm:Balakrishnan-repn}
    For $\alpha \in (0,1)$ and $u\in \mathcal{D}(\L^{\alpha})$
    \[\L^{\alpha} u = c_{\alpha} \lim_{\substack{r\to 0^+ \\ R \to \infty}} \int_r^R(\I - \rmP_t)u t^{-1-\alpha} \d t \quad\text{ in } L^2(\mu), \]
    where $c_{\alpha}:=\alpha(\int_0^\infty s^{-\alpha} e^{-s} \d s)^{-1}$.
\end{lemma}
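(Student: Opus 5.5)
We prove Lemma \ref{lm:Balakrishnan-repn} by spectral calculus, using the eigenbasis from (A1). The starting point is a scalar identity: for every $\lambda\ge 0$,
\[
\int_0^\infty (1-e^{-t\lambda})\,t^{-1-\alpha}\,\d t \;=\; \lambda^{\alpha}\int_0^\infty (1-e^{-s})\,s^{-1-\alpha}\,\d s ,
\]
which follows from the substitution $s=t\lambda$ when $\lambda>0$ and is trivial when $\lambda=0$. An integration by parts, with boundary terms vanishing since $1-e^{-s}\sim s$ at $0$ and $s^{-\alpha}\to 0$ at $\infty$, gives
\[
\int_0^\infty (1-e^{-s})\,s^{-1-\alpha}\,\d s=\frac{1}{\alpha}\int_0^\infty e^{-s}s^{-\alpha}\,\d s = \frac{\gamma(1-\alpha)}{\alpha}=c_\alpha^{-1}.
\]
Hence $c_\alpha\int_0^\infty(1-e^{-t\lambda})t^{-1-\alpha}\,\d t=\lambda^\alpha$.

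Next we identify the truncated operator integral spectrally. Expand $u=\sum_k \langle u,\phi_k\rangle \phi_k$. For $0<r<R<\infty$, the map $t\mapsto (\I-\rmP_t)u\,t^{-1-\alpha}$ is continuous into $L^2(\mu)$ on $[r,R]$, so the Bochner (or Riemann) integral exists. Pairing with $\phi_k$ and using $\rmP_t\phi_k=e^{-t\lambda_k}\phi_k$ gives
\[
T_{r,R}u:=c_\alpha\int_r^R(\I-\rmP_t)u\,t^{-1-\alpha}\,\d t=\sum_k g_{r,R}(\lambda_k)\langle u,\phi_k\rangle\phi_k,
\qquad g_{r,R}(\lambda):=c_\alpha\int_r^R(1-e^{-t\lambda})t^{-1-\alpha}\,\d t .
\]
In functional-calculus terms, $T_{r,R}=g_{r,R}(\L)$.

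The final step is convergence. We have
\[
\|T_{r,R}u-\L^\alpha u\|_2^2=\sum_k |g_{r,R}(\lambda_k)-\lambda_k^\alpha|^2\,|\langle u,\phi_k\rangle|^2 .
\]
By the scalar identity, $0\le g_{r,R}(\lambda)\le \lambda^\alpha$ and $g_{r,R}(\lambda)\to\lambda^\alpha$ pointwise as $r\to0^+$, $R\to\infty$. The summand is therefore dominated by $\lambda_k^{2\alpha}|\langle u,\phi_k\rangle|^2$, which is summable precisely because $u\in\mathcal D(\L^\alpha)$. Dominated convergence for series then gives the limit in $L^2(\mu)$, jointly in $(r,R)$: monotonicity in $r$ and $R$ makes the double limit unambiguous, but domination alone already suffices along any joint sequence.

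The main point requiring care is this domination step, together with justifying that the vector-valued integral commutes with taking coefficients. The latter is routine, because $\langle\cdot,\phi_k\rangle$ is a bounded linear functional. The argument does not use the discreteness of the spectrum: for a general self-adjoint $\L$, the same proof works with the spectral measure of $u$ in place of the sum.
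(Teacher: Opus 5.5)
Your proof is correct and follows the same route as the paper. Both arguments derive the scalar identity $\lambda^\alpha = c_\alpha\int_0^\infty(1-e^{-t\lambda})t^{-1-\alpha}\,\d t$ by integration by parts, identify the truncated Bochner integral as a function of $\L$ applied to $u$, and conclude by dominated convergence with a bound of order $\lambda^{2\alpha}$, which is integrable because $u\in\mathcal D(\L^\alpha)$. The only cosmetic difference is that you expand in the eigenbasis from (A1), whereas the paper works directly with the spectral measure of $u$; as you note, the two are equivalent.
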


\begin{proof}
    By integration by parts, we have the following identity
    \[\lambda^\alpha = c_{\alpha}\int_0^\infty (1-e^{-t\lambda})t^{-1-\alpha}\d t, \quad \lambda \ge 0\]
    where $c_{\alpha}:=\alpha( \int_0^\infty s^{-\alpha} e^{-s} \d s)^{-1}$. 
    
    For $0<r<R<\infty$, the Bochner integral $\int_r^R (\I - \rmP_t)u t^{-1-\alpha} \d t $ converges in $L^2(\mu)$. By the spectral theorem, it equals to $m_{r,R}(\L)u$, where
    \[ m_{r,R}(\lambda) := \int_r^R(1-e^{-t\lambda})t^{-1-\alpha} \d t.\]
    As $r\rightarrow 0^+$ and $R\rightarrow\infty$, we have $\frac{\alpha}{\gamma(1-\alpha)}m_{r,R}(\lambda)$ converges to $\lambda^{\alpha}$ pointwise. Since the integrand is non-negative, $0\le \frac{\alpha}{\gamma(1-\alpha)}m_{r,R}(\lambda) \le \lambda^{\alpha}$. Thus, we have 
    \[ \Big | \frac{\alpha}{\gamma(1-\alpha)}m_{r,R}(\lambda) - \lambda^{\alpha}\Big |^2 \le 4\lambda^{2\alpha},\]
    which is integrable against the spectral measure $\d\nu_u$, since $u\in \mathcal{D}(\L^{\alpha})$. By the dominated convergence theorem in $L^2(\d\nu_u)$, the lemma follows.
\end{proof}

Now we are ready to prove Lemma \ref{lm:bounded-operator}.

\begin{proof}[Proof of Lemma \ref{lm:bounded-operator}]
   Let $g\in \mathcal{A}$. By Lemma \ref{lm:Balakrishnan-repn}, with $c_{\alpha}:=\alpha/\gamma(1-\alpha)$, we have the following with limits in $L^2(\mu)$:
   \begin{eqnarray*}
   \L^{\alpha} g &=& c_{\alpha} \lim_{\substack{r\to 0^+ \\ R \to \infty}} \int_r^R(g - \rmP_tg) t^{-1-\alpha} \d t , \quad \text{and}   \\
    \L^{\alpha}(fg) &=& c_{\alpha} \lim_{\substack{r\to 0^+ \\ R \to \infty}} \int_r^R(fg - \rmP_t(fg)) t^{-1-\alpha} \d t.
   \end{eqnarray*}
   The multiplier $\M_f$ is bounded on $L^2(\mu)$, so it commutes with these limits and with finite-interval Bochner integrals, namely
   \[ f\L^{\alpha}g = c_{\alpha} \lim_{\substack{r\to 0^+ \\ R \to \infty}} \int_r^R(fg - f\rmP_tg) t^{-1-\alpha} \d t.\]
   Subtracting these terms yields
   \begin{equation}
        [\L^{\alpha},\M_f]g = -c_{\alpha}  \lim_{\substack{r\to 0^+ \\ R \to \infty}} \int_r^R [\rmP_t,\M_f]g t^{-1-\alpha} \d t
   \end{equation}

    We now verify that the limit on the right is a genuine Bochner integral on $(0,\infty)$. Indeed, by Lemma \ref{lm:Ptf-bound} and the trivial bound $\|[\rmP_t,\M_f]\|_{L^2\rightarrow L^2} \le 2 \|f\|_{\infty}$, we have
   \[\|[\rmP_t,\M_f]g\|_{2}  \le 
   \min \big (2\|\Gamma(f)^{1/2}\|_{\infty} \sqrt{t}, 2 \|f\|_{\infty}\big ) \|g\|_2 .\]
   For convenience, we set
   \[C_1:=2\|\Gamma(f)^{1/2}\|_{\infty} \quad,\quad C_2:= 2 \|f\|_{\infty}.\]
   For any $T>0$, we have 
   \begin{eqnarray*}
       \int_0^\infty \|[\rmP_t,\M_f]g\|_{2} t^{-1-\alpha} \d t &\le& \|g\|_2\Big (\int_0^TC_1 t^{-\frac{1}{2}-\alpha}\d t + 
       \int_T^{\infty} C_2 t^{-1-\alpha}\d t\Big )\\
       &=& \|g\|_2 \Big (C_1\frac{T^{\frac{1}{2}-\alpha}}{\frac{1}{2}-\alpha}+ C_2 \frac{T^{-\alpha}}{\alpha}\Big) \cdot 
   \end{eqnarray*}
   Choosing $T=(C_2/C_1)^2$ yields
   \[ C_1\frac{T^{\frac{1}{2}-\alpha}}{\frac{1}{2}-\alpha}+ C_2 \frac{T^{-\alpha}}{\alpha} = C_2 T^{-\alpha} \Big (\frac{1}{\frac{1}{2}-\alpha}+\frac{1}{\alpha} \Big )= C_1^{2\alpha}C_2^{1-2\alpha} \frac{1}{\alpha(1-2\alpha)}\cdot\]

   Combining everything, we then have
   \[ \|[\L^{\alpha},\M_f]g \|_2 \le c_{\alpha} \frac{C_1^{2\alpha}C_2^{1-2\alpha}}{\alpha(1-2\alpha)} \|g\|_2, \quad \forall g\in \mathcal{A}.\]
   Since $\mathcal{A}$ is dense in $L^2(\mu)$, $[\L^{\alpha},\M_f]$ then extends to a bounded operator on $L^2(\mu)$ with
   \[ \|[\L^{\alpha},\M_f]\|_{L^2\rightarrow L^2} \le \frac{2}{(1-2\alpha)\gamma(1-\alpha)}\|\Gamma(f)^{1/2}\|_{\infty}^{2\alpha} \cdot \|f\|_{\infty}^{1-2\alpha}.\]
\end{proof}

\section{Application: Monte Carlo on manifolds}\label{sec:monte-carlo-manifold}
\subsection{Variance reduction for spectral DPPs on manifolds}\label{subsec:var-reduction-manifold}

We work with the setup as in Example \ref{ex:manifold}.
Let  $(\mathcal M,g)$ be a compact Riemannian manifold without boundary of dimension $m\ge 2$. 
Let $\mu(\d x) = \rho(x) \vol_g(\d x)$, where $\vol_g$ is the volume form on $(\mathcal M,g)$, and $\rho$ is a probability density on $\mathcal M$. 
We assume that $\vol_g(\mathcal{M})=1$, and
\begin{equation}\label{eq:assumption-rho}
    0 < \rho_{\min} \le \rho(x) \le \rho_{\max} <\infty \quad,\quad \rho \in C^{1,1}(\mathcal M).
\end{equation}
The generator we consider is the following
\[\L = - \Delta_g - \langle\nabla (\log \rho) , \nabla \rangle_g. \]
As we remarked earlier, $\L$ satisfies our assumptions (A1)--(A3).

Let $\mathcal{S}_n$ be the spectral DPP of $\L$ of rank $n$, defined in Section \ref{subsec:spectral-DPP}. The DPP kernel of $\mathcal{S}_n$ with respect to $\mu$ is
\[\K_n(x,y) = \sum_{k=1}^n \phi_k(x)\phi_k(y)\]
where $\{\phi_k\}_{k\ge 1}$ are orthonormal eigenfunctions of $\L$ as in Section \ref{subsec:spectral-DPP}.

\begin{theorem}[Variance reduction for spectral DPPs on manifolds] \label{thm:var-reduce-manifold}
    Let $f:\mathcal{M} \rightarrow \R$ be a Lipschitz function with Lipschitz constant $\Lip(f)$. Let $\alpha \in (0,1/2)$. Then 
    \[ \var[\mathcal{S}_n(f)] \le C \|f\|_{\alpha}^2  n^{1-2\alpha/m},\]
    where $C>0$ is a constant depending only on $(\mathcal{M},g), \rho$ and $\alpha$, and 
    \[ \|f\|^2_{\alpha}:= \|f\|_{\infty}^2 + \|f\|^{2-4\alpha}_{\infty} \Lip(f)^{4 \alpha}. \]
\end{theorem}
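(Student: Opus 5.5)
We plan to derive this directly from Theorem~\ref{thm:var-reduce-general}, so that everything reduces to bounding $R_{\L,\alpha}(n)$ via a Weyl law with remainder.

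\textbf{Step 1 (reduction to smooth $f$).} Theorem~\ref{thm:var-reduce-general} needs $f\in\mathcal A=C^\infty(\mathcal M)$. For Lipschitz $f$ we mollify: using a partition of unity, or the heat semigroup $\rmP_\varepsilon$ of $\Delta_g$, we produce smooth $f_\varepsilon\to f$ uniformly with $\|f_\varepsilon\|_\infty\le\|f\|_\infty$ and $\sup|\nabla f_\varepsilon|_g\le \Lip(f)(1+o(1))$. Since $\mathcal S_n$ has exactly $n$ points, $\mathcal S_n(f_\varepsilon)\to\mathcal S_n(f)$ uniformly, so the variances converge. It therefore suffices to treat smooth $f$, for which $\|\Gamma(f)\|_\infty=\sup|\nabla f|_g^2\le\Lip(f)^2$.

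\textbf{Step 2 (constant).} With $\|\Gamma(f)\|_\infty\le \Lip(f)^2$ we get $C_\alpha[f]^2\lesssim_\alpha \Lip(f)^{4\alpha}\|f\|_\infty^{2-4\alpha}$. Hence
\[
\|f\|_\infty^2+\tfrac{\pi^2}{6}C_\alpha[f]^2\lesssim_\alpha\|f\|_\alpha^2,
\]
and it remains to show $R_{\L,\alpha}(n)\le C n^{1-2\alpha/m}$.

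\textbf{Step 3 (Weyl law with remainder).} This is the main obstacle. We need
\[
n_\L(\lambda)=c_m\lambda^{m/2}+O(\lambda^{(m-1)/2}),
\]
where $c_m=\omega_m(2\pi)^{-m}$ since $\vol_g(\mathcal M)=1$. For smooth $\rho$ this is the Avakumović--Levitan--Hörmander theorem: $\L$ is a second-order elliptic self-adjoint operator with principal symbol $|\xi|_g^2$. The weighted $L^2(\mu)$ is handled by noting that $\L$ is unitarily equivalent, via $u\mapsto\rho^{1/2}u$, to a Schrödinger operator $-\Delta_g+V$ on $L^2(\vol_g)$.

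Only $\rho\in C^{1,1}$ is assumed, so $V$ (involving $\Delta\rho$) is merely $L^\infty$. Our first attempt is to avoid the sharp remainder altogether. Two-sided min-max comparison of the quadratic forms $\int|\nabla u|^2\rho\,\d\mathrm{vol}$ against $\int u^2\rho\,\d\mathrm{vol}$ is too crude, since it only changes constants. A bounded perturbation $V$, however, shifts eigenvalues by at most $\|V\|_\infty$ by min-max. Then $n_\L(\lambda)$ is sandwiched between $n_{-\Delta_g}(\lambda\mp\|V\|_\infty)$, which still yields the Hörmander remainder $O(\lambda^{(m-1)/2})$. So the route is: the unitary transform gives $-\Delta_g+V$ with $V\in L^\infty$; apply the sharp Weyl law for $-\Delta_g$; then apply eigenvalue perturbation.

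\textbf{Step 4 (counting in windows).} From Weyl, $\lambda_n\asymp n^{2/m}$, so $\lambda_n^\alpha\asymp n^{2\alpha/m}$. For $s=\lambda_n^\alpha-p\ge1$, the window $\big((s-1)^{1/\alpha},s^{1/\alpha}\big]$ has count bounded by
\[
c_m\big(s^{m/(2\alpha)}-(s-1)^{m/(2\alpha)}\big)+O\big(s^{(m-1)/(2\alpha)}\big)\lesssim s^{m/(2\alpha)-1}+s^{(m-1)/(2\alpha)}.
\]
Since $\alpha<1/2$, we have $1>2\alpha$, so $\tfrac{m}{2\alpha}-1\ge\tfrac{m-1}{2\alpha}$ and the main term dominates. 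With $s\le\lambda_n^\alpha\lesssim n^{2\alpha/m}$, this gives a bound of order $n^{(2\alpha/m)(m/(2\alpha)-1)}=n^{1-2\alpha/m}$. The term $n_\L(1)$ is a constant. Hence $R_{\L,\alpha}(n)\le Cn^{1-2\alpha/m}$, and combining with Steps 1--2 proves the theorem.
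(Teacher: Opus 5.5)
Your proposal is correct. Steps 1, 2 and 4 are the paper's argument essentially verbatim:
\begin{itemize}
\item smooth approximation, controlled by $|\mathcal S_n(f_j)-\mathcal S_n(f)|\le n\|f_j-f\|_\infty$;
\item the bound $C_\alpha[f]^2\lesssim_\alpha\|f\|_\infty^{2-4\alpha}\Lip(f)^{4\alpha}$;
\item the mean-value estimate of each window by $O(\lambda_n^{m/2-\alpha})$, combined with $n\asymp\lambda_n^{m/2}$.
\end{itemize}

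The genuine difference is Step 3, where you derive the two-term Weyl law $n_\L(\lambda)=c_m\lambda^{m/2}+O(\lambda^{(m-1)/2})$ for $\rho$ only $C^{1,1}$.

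\textbf{The paper's route.} It imports the pointwise Weyl law of Proposition~\ref{thm:loc-weyl-law}, $\sup_x|\rho(x)\K_{n_\L(\lambda)}(x,x)-c_m\lambda^{m/2}|\le C\lambda^{(m-1)/2}$, and integrates it against $\vol_g$.

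\textbf{Your route.} You conjugate by $\rho^{1/2}$ at the level of quadratic forms. With $v=\rho^{1/2}u$ one has $\int|\nabla u|_g^2\rho\,\d\vol_g=\int(|\nabla v|_g^2+Vv^2)\,\d\vol_g$, where $V=\tfrac14|\nabla\log\rho|_g^2+\tfrac12\operatorname{div}_g\nabla\log\rho\in L^\infty$. The integration by parts is legitimate because $\nabla\log\rho$ is Lipschitz. Min--max then gives $|\lambda_k(\L)-\lambda_k(-\Delta_g)|\le\|V\|_\infty$ for every $k$. The Avakumovi\'c--Levitan--H\"ormander remainder for the smooth operator $-\Delta_g$ survives this fixed shift, which costs only $O(\lambda^{m/2-1})$.

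\textbf{Comparison.} Your argument is more elementary and self-contained. It suffices because only the integrated counting function enters $R_{\L,\alpha}(n)$, and for that quantity a bounded potential is a trivial perturbation. In fact $\rho(x)\K_n(x,x)$ is exactly the diagonal of the spectral projector of the conjugated operator, so the two routes differ only in using integrated versus pointwise spectral asymptotics. The paper's heavier input buys uniform control of the diagonal $\K_n(x,x)$ itself. That is the kind of information needed when weights like $n/\K_n(x,x)$ appear, as in the Monte Carlo theorem.

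\textbf{A caveat you share with the paper.} For $\rho\in C^{1,1}$ the eigenfunctions are $C^{2,\beta}$ but in general not $C^\infty$. So (A2) should be checked with, e.g., $\mathcal A=C^2(\mathcal M)$ rather than $C^\infty(\mathcal M)$. The Leibniz verification of (A3) goes through unchanged.
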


\begin{remark}
    By choosing \(\alpha\) arbitrarily close to \(1/2\), Theorem~\ref{thm:var-reduce-manifold} shows that \(\var[\mathcal S_n(f)]\) is (up to an arbitrarily small loss in the exponent) of order \(n^{1-1/m}\), where \(m\) is the dimension of the underlying space. This rate matches the known variance decay rate \(n^{1-1/d}\) for multivariate OPEs \cite{bardenet2020monte}, where the underlying space is \(\mathbb R^d\), as well as the rate \(n^{1-1/m}\) for harmonic ensembles on spheres \cite{levi2024linear}, where the underlying space is the \(m\)-dimensional sphere. 
\end{remark}

Theorem \ref{thm:var-reduce-manifold} will follow by applying Theorem \ref{thm:var-reduce-general} to this setting. To this end, we need to study the eigenvalues of $\L$ and the eigenvalue counting function $n_{\L}(\lambda)$, which connects to the celebrated Weyl's law, which has been intensively studied in the literature. We use the following result on the pointwise Weyl law:

\begin{proposition}[Theorem 2 in \cite{huang2022pointwise}] \label{thm:loc-weyl-law}
    Let  $m\ge2$ and $\rho\in C^{1,1}(\mathcal M)$. Then for $\lambda \ge 1$
    \[ \sup_{x\in \mathcal M} \Big | \rho(x)\K_{n_{\L}(\lambda)}(x,x) - \frac{\omega_m}{(2\pi)^m} \lambda^{m/2} \Big | \le C \lambda^{(m-1)/2},\]
    where $C>0$ is a constant depending only on $(\mathcal{M},g)$ and $\rho$.
\end{proposition}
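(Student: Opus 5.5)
The plan is to reduce the drifted operator $\L$ to a Schr\"odinger operator on $L^2(\vol_g)$, and then run the standard Hörmander wave-kernel/Tauberian argument for the local Weyl law, treating the potential as a bounded perturbation.

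\emph{Reduction.} The map $U:L^2(\mu)\to L^2(\vol_g)$, $Uu=\rho^{1/2}u$, is unitary. A direct computation gives
\[
U\L U^{-1} = -\Delta_g + V, \qquad V=\frac{\Delta_g(\rho^{1/2})}{\rho^{1/2}} .
\]
Since $\rho\in C^{1,1}(\mathcal M)$ and $\rho\ge\rho_{\min}>0$, we get $V\in L^\infty(\mathcal M)$; this is exactly where the $C^{1,1}$ hypothesis enters. The eigenfunctions of $H:=-\Delta_g+V$ are $\psi_k=\rho^{1/2}\phi_k$, with the same eigenvalues and the same counting function $n_{\L}$. Hence
\[
\rho(x)\K_{n_{\L}(\lambda)}(x,x)=\sum_{\lambda_k\le\lambda}|\psi_k(x)|^2=:e_H(x,x,\lambda),
\]
the spectral function of $H$ on the diagonal. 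The statement therefore becomes the pointwise Weyl law
\[
e_H(x,x,\lambda)=(2\pi)^{-m}\omega_m\lambda^{m/2}+O(\lambda^{(m-1)/2}),
\]
uniformly in $x$, for a Schr\"odinger operator with bounded potential.

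\emph{Tauberian step.} Set $\mu_k=\sqrt{\lambda_k+c}$, with $c$ large enough that $H+c\ge1$, and put $P=\sqrt{H+c}$. For each fixed $x$, $e_H(x,x,\cdot)$ is a nondecreasing function of $\mu$. By Hörmander's Fourier Tauberian lemma, the remainder bound $O(\mu^{m-1})$ follows from two facts:
\begin{enumerate}
\item[(i)] the smoothed quantity $\sum_k\hat\chi(\mu-\mu_k)|\psi_k(x)|^2$ equals its model value up to $O(\mu^{m-2})$, or at least $O(\mu^{m-1})$ after integration, where $\chi\in C_c^\infty((-\epsilon,\epsilon))$ with $\epsilon$ below the injectivity radius;
\item[(ii)] the unit-band bound $\sum_{|\mu_k-\mu|\le1}|\psi_k(x)|^2\lesssim \mu^{m-1}$ holds uniformly in $x$.
\end{enumerate}
For $V=0$, both facts are classical: the small-time parametrix for $e^{-itP_0}$ with $P_0=\sqrt{-\Delta_g+c}$ gives them.

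\emph{Handling the rough potential (main obstacle).} We cannot build a Hadamard parametrix for $H$ directly, because $V$ is only $L^\infty$. The plan is to compare $P$ with $P_0$.

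First, $P-P_0=(P+P_0)^{-1}V$. This follows from the identity $P^2-P_0^2=V$ together with a resolvent-integral representation $P-P_0=\pi^{-1}\int_0^\infty s^{1/2}\bigl(s+P_0^2\bigr)^{-1}V\bigl(s+P^2\bigr)^{-1}\,\d s$. It shows that $P-P_0$ is bounded $L^2\to H^1$, i.e.\ it gains one derivative.

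Second, Duhamel's formula gives
\[
e^{-itP}-e^{-itP_0}=-i\int_0^t e^{-i(t-s)P}(P-P_0)e^{-isP_0}\,\d s.
\]
Apply this to $\chi(t)$ integrated against $e^{it\mu}$, and evaluate on the diagonal by sandwiching between spectral-window projectors. The error term involves one factor $(P-P_0)$, which gains $\mu^{-1}$. The two diagonal evaluations each cost $\mu^{(m-1)/2}$, by the unit-band $L^2\to L^\infty$ bound. Altogether the error is $O(\mu^{m-2})$ per unit band.

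The unit-band bound (ii) for $H$ itself should follow from the same Duhamel argument in a bootstrap: start from the a priori bound $e_H(x,x,\mu)\lesssim\mu^m$, which comes from Sobolev embedding and $V\in L^\infty$, and improve it using finite propagation speed on $|t|<\epsilon$. Finite propagation speed holds for $\cos(t\sqrt{H})$ even with $V\in L^\infty$, by energy estimates.

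Getting this perturbation estimate uniform in $x$, without losing a power of $\mu$, is the delicate part. If the direct operator-norm argument loses, I would instead iterate Duhamel once more; the second-order term has two factors of $V$ and is easier to control. Once (i) and (ii) are in place, the Tauberian lemma yields the claimed bound with a constant depending only on $(\mathcal M,g)$ and $\|V\|_\infty$, hence only on $(\mathcal M,g)$ and $\rho$. Finally, converting from $\mu$ back to $\lambda$, via $\mu^2=\lambda+c$ and $\lambda\ge1$, changes the leading term only by $O(\lambda^{m/2-1})$, which is absorbed into $C\lambda^{(m-1)/2}$.
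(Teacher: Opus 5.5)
\textbf{Verdict: the reduction is correct and is the paper's route, but step (i) is a genuine gap --- the argument you sketch for it cannot work.}

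Your reduction is correct, and it is exactly what makes the paper's citation applicable. Conjugating by $\rho^{1/2}$ gives $-\Delta_g+V$ on $L^2(\vol_g)$ with $V=\Delta_g(\rho^{1/2})/\rho^{1/2}$. The identity $\rho(x)\K_{n_{\L}(\lambda)}(x,x)=\sum_{\lambda_k\le\lambda}|\rho^{1/2}\phi_k(x)|^2$ holds. The hypothesis $\rho\in C^{1,1}$ with $\rho\ge\rho_{\min}>0$ is precisely what makes $V$ bounded.

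The paper gives no proof of its own. It imports the sharp pointwise Weyl law for Schr\"odinger operators with singular (in particular bounded) potentials from Huang--Sogge. Stopping after your reduction and citing that theorem would reproduce the paper's argument. The gap is in your attempt to re-derive that theorem: step (i), which is its whole content, is not established.

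\textbf{Why the norm bookkeeping in (i) fails.}
\begin{itemize}
\item Integrated against $\chi(t)e^{it\mu}$, your Duhamel term is exactly $g(P)-g(P_0)$ with $g(a)=\check\chi(\mu-a)$. This is a double operator integral with symbol $\bigl(g(\mu_j)-g(\nu_k)\bigr)/(\mu_j-\nu_k)$, where the $\nu_k$ are the eigenvalues of $P_0$.
\item That symbol localizes only one of the two spectral variables near $\mu$. In the other it decays merely like $|\mu_j-\nu_k|^{-1}$. So the error cannot be sandwiched between two unit-band projectors.
\item Multiplication by a merely bounded $V$ has no quantitative off-diagonal decay in frequency. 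With norms alone, a $P_0$-band at distance $a\gg\mu$ from the window contributes about $a^{-1}\cdot a^{(m-1)/2}\cdot a^{-1}\cdot\mu^{(m-1)/2}$. The four factors are the divided difference, the far-side evaluation, the derivative gain, and the window evaluation. The series over $a$ diverges once $m\ge3$.
\item Refinements do not rescue this. You could work with $V$ and the squared operators, so that the divided difference is $\lesssim 1/(a(2\mu+a))$, and group dyadically using $\|\mathbf 1_{[0,\Lambda]}(P_0)\|_{L^2\to L^\infty}\lesssim\Lambda^{m/2}$. The sum still diverges for $m\ge4$.
\item The underlying reason is that an $L^2$-based argument only sees $H^2$-type control from a bounded $V$, and $H^2\not\subset L^\infty$ when $m\ge4$.
\end{itemize}

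\textbf{What is missing.} You need a pointwise, kernel-level treatment of the perturbation. The time-averaged free propagator $\sin((t-s)P_0)/P_0$, or equivalently a frequency-localized free resolvent, has a kernel from the Hadamard parametrix with local singularity $d(x,y)^{2-m}$ (logarithmic if $m=2$). That singularity is integrable against $|V|$. This lets you bound the Duhamel term in $L^\infty$ directly, by $\|V\|_\infty$ times sup-norm bounds for unit-band clusters of $H$. This is the kind of input the cited work provides.

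Your proposal defers exactly this point (``the delicate part''). Iterating Duhamel once more does not help. The first-order term, which carries the difficulty, is still there, and it has the same divided-difference structure.

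\textbf{Two smaller errors.}
\begin{itemize}
\item The identity $P-P_0=(P+P_0)^{-1}V$ is false, because $P$ and $P_0$ do not commute; in fact $(P+P_0)(P-P_0)=V-[P,P_0]$. Your resolvent formula is correct. However, bounding it termwise only gives $\|P_0(P-P_0)\|\le C\|V\|_\infty\int_0^\infty(1+s)^{-1}\,\d s$, which is infinite. So even the one-derivative gain needs a second resolvent expansion plus a paraproduct-type bound.
\item For $m\ge4$, the a priori bound $e_H(x,x,\mu)\lesssim\mu^m$ does not follow from Sobolev embedding. A bounded $V$ gives only $H^2$-type regularity in $L^2$, since $\mathcal D(H^k)\not\subset H^{2k}$ for rough $V$. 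Use heat-kernel domination instead: $0\le e^{-tH}(x,y)\le e^{t\|V\|_\infty}e^{t\Delta_g}(x,y)$.
\end{itemize}
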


Given the pointwise Weyl law, we now prove Theorem \ref{thm:var-reduce-manifold}.

\begin{proof}[Proof of Theorem \ref{thm:var-reduce-manifold}]
  By Proposition \ref{thm:loc-weyl-law}, we have for every $\lambda\ge 1$
  \begin{eqnarray*}
      \Big |n_{\L}(\lambda) - \frac{\omega_m}{(2\pi)^m} \lambda^{m/2} \Big | &=& \Big |\int\K_{n_{\L}(\lambda)}(x,x)\mu(\d x) - \frac{\omega_m}{(2\pi)^m} \lambda^{m/2} \Big | \\
      &\le& \sup_{x\in \mathcal M} \Big |\rho(x)\K_{n_{\L}(\lambda)}(x,x) - \frac{\omega_m}{(2\pi)^m} \lambda^{m/2} \Big | \le C_0 \lambda^{(m-1)/2}
  \end{eqnarray*}
  for some $C_0>0$  depending only on $(\mathcal{M},g)$ and $\rho$.
  
  Fix $\alpha\in (0,1/2)$. For $p\le \lfloor \lambda_n^{\alpha}\rfloor -1$, we have
    \begin{eqnarray*}
       &&n_{\L}((\lambda_n^{\alpha} -p)^{1/\alpha}) - n_{\L}((\lambda_n^{\alpha}-(p+1))^{1/\alpha}) \\
         &=& \frac{\omega_m}{(2\pi)^m}\big ((\lambda_n^{\alpha} -p)^{m/2\alpha} - (\lambda_n^{\alpha} -p-1)^{m/2\alpha}\big ) + O(\lambda_n^{(m-1)/2}) =O (\lambda_n^{m/2-\alpha}),
    \end{eqnarray*}
    where the implicit constant depends only on $\alpha$, $(\mathcal{M},g)$ and $\rho$. On the other hand,
    since $n_{\L}(\lambda_n - \delta) <n \le n_{\L}(\lambda_n)$ for any $\delta>0$, letting $\delta\downarrow 0$ gives
    \[\big |n - \frac{\omega_m}{(2\pi)^m}\lambda_n^{m/2}\big | \le C_1\lambda_n^{(m-1)/2}\]
    for some $C_1$ depending only on $(\mathcal{M},g)$ and $\rho$. In particular $n \asymp \lambda_n^{m/2}$.

    Now recall that
    \[R_{\L,\alpha}(n):= \max\Big \{n_{\L}(1);\max_{0 \le p \le \lfloor\lambda_n^{\alpha}\rfloor-1} \big \{n_{\L}\big ((\lambda_{n}^{\alpha}-p)^{1/\alpha} \big ) - n_{\L}\big ((\lambda_{n}^{\alpha}-p-1)^{1/\alpha} \big )\big \} \Big \}.\]
    Combining with the arguments above, we deduce that there exists $C_2>0$  depending only on  $\alpha$, $(\mathcal{M},g)$ and $\rho$ such that 
    \[R_{\L, \alpha}(n) \le C_2 n^{1-2\alpha/m}.\]

    Finally, for smooth $f:\mathcal{M} \rightarrow \R$, we have 
    \[ \|\Gamma(f)\|_{\infty} = \|\nabla f\|^2_{\infty} = \Lip(f)^2.\]
    Thus 
    \[C_{\alpha}[f]^2 \le C_3(\|f\|^2_{\infty}+\|f\|^{2-4\alpha}_{\infty} \Lip(f)^{4\alpha})=: C_3\|f\|^2_{\alpha}\]
    for some $C_3>0$ depending only on $\alpha$. Applying Theorem \ref{thm:var-reduce-general}, we deduce that for any smooth function $f$ on $\mathcal{M}$
    \[\var[\mathcal{S}_n(f)] \le C \|f\|^2_{\alpha} n^{1-2\alpha/m}, \]
    where $C>0$ depending only on  $\alpha$, $(\mathcal{M},g)$ and $\rho$. 

    Finally, let $f$ be Lipschitz. Choose $f_j\in C^\infty(\mathcal M)$ such that
\[
f_j\to f \quad\text{uniformly},
\qquad
\limsup_{j\to\infty}\Lip(f_j)\le \Lip(f).
\]
Applying the preceding estimate to $f_j$ and passing to the limit gives the desired result. Indeed, since \(\mathcal S_n\) has exactly \(n\) points,
\[
|\mathcal S_n(f_j)-\mathcal S_n(f)|
\le n\|f_j-f\|_\infty,
\]
so \(\var[\mathcal S_n(f_j)]\to \var[\mathcal S_n(f)]\) for each fixed \(n\). This completes the proof.
\end{proof}

\subsection{Spectral DPPs for Monte Carlo}

Let $\mathcal{F}$ be the class of Lipschitz functions on $\mathcal{M}$, and let
\[ L(f) := \int_{\mathcal{M}} f(x)\mu(\d x), \quad f\in \mathcal{F}.\]
In this section, we study the performance of the quadrature using the spectral DPP $\mathcal{S}_n$ as nodes in Monte Carlo integration. 
For simplicity, we shall assume that $\rho=1$ in this section, so that $\mu = \vol_g$.

We choose the weight to be $w(x)=n/\K_n(x,x)$, which yields the estimator
\begin{equation}
    L_{\mathcal{S}_n}(f)=\frac{1}{n}\sum_{x\in \mathcal{S}_n}\frac{n}{\K_n(x,x)}f(x), \quad f\in\mathcal{F}.
\end{equation}
Then we have the following result.
\begin{theorem}[Monte Carlo with spectral DPPs] \label{thm:monte-carlo-manifold}
Let \((\mathcal M,g)\) be a compact Riemannian manifold without boundary of
dimension \(m\ge 2\), and assume \(\vol_g(\mathcal M)=1\).
Then, for every \(\alpha\in(0,1/2)\), there exists \(C>0\), depending only on
\((\mathcal M,g)\) and \(\alpha\), such that for every Lipschitz function \(f\),
\[
\mathbb E\left[|L_{\mathcal S_n}(f)-L(f)|^2\right]^{1/2}
\le
C\big(\|f\|_\infty+\Lip(f)\big)n^{-1/2-\alpha/m}.
\]
\end{theorem}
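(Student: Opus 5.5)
The plan is a bias--variance decomposition, followed by Theorem \ref{thm:var-reduce-manifold} applied to the weighted test function $h:=f/\K_n(\cdot,\cdot)$ (diagonal of the kernel), so that $L_{\mathcal S_n}(f)=\mathcal S_n(h)$.

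\textbf{Unbiasedness.} By the first-order correlation formula with $\mu=\vol_g$,
\[
\E[\mathcal S_n(h)]=\int_{\mathcal M} h(x)\K_n(x,x)\,\vol_g(\d x)=\int_{\mathcal M} f\,\d\mu=L(f),
\]
provided $\K_n(x,x)>0$ everywhere. Hence the MSE equals $\var[\mathcal S_n(h)]$. Theorem \ref{thm:var-reduce-manifold} (with $\rho=1$) gives
\[
\var[\mathcal S_n(h)]\le C\|h\|_\alpha^2\, n^{1-2\alpha/m}.
\]
It therefore suffices to show
\[
\|h\|_\infty\lesssim \|f\|_\infty/n,\qquad \Lip(h)\lesssim (\|f\|_\infty+\Lip(f))/n .
\]
Then $\|h\|_\alpha^2\lesssim(\|f\|_\infty+\Lip(f))^2n^{-2}$, and the bound becomes $(\|f\|_\infty+\Lip(f))^2 n^{-1-2\alpha/m}$, which is the claim after taking square roots.

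\textbf{Lower bound on the diagonal.} The pointwise Weyl law (Proposition \ref{thm:loc-weyl-law}) controls $\K_{n_\L(\lambda)}(x,x)$, but only along the counting indices $n_\L(\lambda)$, whereas we need general $n$. I would sandwich instead. Set $\lambda_-:=\lambda_n-\delta$ with $\delta\downarrow0$, so that $n_\L(\lambda_-)<n\le n_\L(\lambda_n)$. Since $\K_n(x,x)$ is nondecreasing in $n$,
\[
\K_{n_\L(\lambda_-)}(x,x)\le \K_n(x,x)\le \K_{n_\L(\lambda_n)}(x,x).
\]
Both ends are $\frac{\omega_m}{(2\pi)^m}\lambda_n^{m/2}+O(\lambda_n^{(m-1)/2})$ uniformly in $x$. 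Combined with $n\asymp\lambda_n^{m/2}$ from the proof of Theorem \ref{thm:var-reduce-manifold}, this gives $c\,n\le\K_n(x,x)\le C n$ for $n\ge n_0$. Finitely many small $n$ need care: $\K_n(x,x)$ could vanish at some point. I would restrict to $n\ge n_0$ and absorb the small cases into the constant. This is harmless once positivity holds, which is itself guaranteed for $n\ge n_0$; otherwise I would state the result for $n$ large.

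\textbf{Gradient bound, the main obstacle.} We need $|\nabla_x \K_n(x,x)|\lesssim n^{1+1/m}$, equivalently $\lambda_n^{(m+1)/2}$. Then
\[
\Lip(h)\le \frac{\Lip(f)}{\K_n}+\|f\|_\infty\frac{|\nabla\K_n|}{\K_n^2}\lesssim \frac{\Lip(f)}{n}+\|f\|_\infty\frac{n^{-1+1/m}}{n}.
\]
The extra $n^{1/m}$ is a real problem: raised to the power $4\alpha$ in $\|h\|_\alpha$, it costs a factor $n^{4\alpha/m}$ and destroys the rate. So a crude derivative bound is not enough. What is needed is that $\nabla \K_n(x,x)$ is of lower order, $O(\lambda_n^{m/2})=O(n)$. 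This would give $\Lip(h)\lesssim(\Lip(f)+\|f\|_\infty)/n$.

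I would first try the local Weyl law with remainder for derivatives (H\"ormander's theorem: $\K_n(x,x)=c\lambda^{m/2}+O(\lambda^{(m-1)/2})$ in $C^1$ is false in general, but the leading term is constant in $x$). Since the main term is constant in $x$, the gradient comes only from the remainder. H\"ormander's pointwise estimates for derivatives of the spectral function give $|\nabla_x\nabla_y$-type remainders$|\lesssim \lambda^{m/2}$, i.e.\ the remainder's gradient is $O(\lambda^{(m-1)/2}\cdot\lambda^{1/2})=O(n)$. This is the bound needed, and I would invoke it as a standard result for the smooth Laplacian with $\rho=1$. The sandwich trick must be reapplied carefully here, since gradients are not monotone in $n$. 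If this fails, my fallback is to mollify: replace $w$ by $n/\tilde\K_n$ with $\tilde\K_n$ the smoothed Weyl main term. That, however, breaks unbiasedness and introduces a bias term $\int f(1-\K_n/\tilde\K_n)\,\d\mu=O(n^{-1/m})$, which is too large. So the H\"ormander derivative estimate is the route I would commit to.
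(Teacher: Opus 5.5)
Your proposal is correct and follows essentially the same route as the paper: unbiasedness of the weighted estimator, Theorem~\ref{thm:var-reduce-manifold} applied to $f/\K_n(x,x)$, and a Lipschitz bound that rests on $\K_n(x,x)\asymp n$ together with the derivative form of H\"ormander's spectral-function estimate, $\sup_x|\nabla\K_n(x,x)|=O(n)$, which the paper simply cites. The paper does not address the partial-eigenspace issue you flag (the gradient is not monotone in $n$), but it can be closed by Cauchy--Schwarz on the block $J=\{k\le n:\lambda_k=\lambda_n\}$: the jumps of the spectral function and of its mixed derivative give $\sum_{J}\phi_k(x)^2=O(\lambda_n^{(m-1)/2})$ and $\sum_{J}|\nabla\phi_k(x)|^2=O(\lambda_n^{(m+1)/2})$, so this block contributes only $O(\lambda_n^{m/2})=O(n)$.
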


\begin{proof}[Proof of Theorem \ref{thm:monte-carlo-manifold}]
    We first observe that $L_{\mathcal{S}_n}(f)$ is an unbiased estimator for $L(f)$ since 
    \[ \E L_{\mathcal{S}_n}(f)= \frac{1}{n}\int \frac{n}{\K_n(x,x)}f(x)\K_{n}(x,x)\mu(\d x)= L(f).\]
    Then we have 
  \[\E[|L_{\mathcal{S}_n}(f)- L(f)|^2] = \var[L_{\mathcal{S}_n}(f)] = \frac{1}{n^2}\var[\mathcal{S}_n(wf)],\]
  where $w(x)=n/\K_n(x,x)$. Applying Theorem \ref{thm:var-reduce-manifold} for $wf$ would yield 
  \[\E[|L_{\mathrm{unbias}}(f)- L(f)|^2]^{1/2} = \frac{1}{n}\var[\mathcal{S}_n(wf)]^{1/2} = O(n^{-1/2-\alpha/m}) \]
  as desired, provided that $wf$ is Lipschitz. 
  
  It remains to check that $wf$ is Lipschitz. To this end, we observe that 
  \[ \lip(wf) = \lip\Big (\frac{nf}{\K_n}\Big ) \le \frac{\lip(f)}{\inf_{x} (\K_n(x,x)/n)} + \frac{\|f\|_{\infty} (\sup_{x}|\nabla \K_n(x,x)|/n)}{\inf_x (\K_n(x,x)/n)^2}\cdot  \]
 Note that $\sup_{x}|\frac{\K_{n}(x,x)}{n} -1|=o(1)$. Furthermore, by the derivative form of H\"ormander's spectral-function estimate, one has $\sup_{x}|\nabla \K_n(x,x)| =O(n)$ (see \cite{hormander1968spectral}). Thus $\lip(wf)=O(1)$, this completes the proof.
\end{proof}

\section{Application: minibatch sampling with low dimensional data} \label{sec:minibatch-lowdim}
The variance reduction results of Sections~4.1--4.2 concern the DPP
associated with the spectrum of the generator $L$ on the manifold
$(\mathcal M,g)$. In the minibatch problem of Section~1.1, the manifold and
the density $\rho$ are not observed. One only has access to the i.i.d.\
sample $X=\{x_1,\dots,x_N\}$ from $\mu$, and a sampling scheme must be
computable from $X$ alone. We therefore consider two neighbourhood graphs
on the data: the $\eps$-graph, joining points at Euclidean distance at most
$\eps$, and the symmetric $k$-nearest-neighbour graph, joining each point to
its $k$ nearest neighbours. We then sample a projection DPP whose kernel
projects onto the eigenvectors of the graph laplacian eigenvalue at most
$\lambda$. A finite graph is a Dirichlet space of the form of Example~\ref{ex:graph}, so
Theorem~\ref{thm:var-reduce-general} already bounds the variance of a linear statistic in terms of the
eigenvalue counting function of the graph Laplacian. It therefore remains to
understand this counting function.

Here we use the convergence of graph Laplacians to Laplace--Beltrami
operators, is a well-studied subject with important implications in manifold learning; see, for example, \cite{burago2014graph},\cite{garciatrillos2018variational}\cite{garciatrillos2020error},\cite{calder2022improved}. We build on the constructions and results of \cite{calder2022improved} to state and prove our results. A key ingredient in their argument is an
$L^\infty$ optimal-transport construction: on a high-probability event
$\mathcal G$, one constructs an auxiliary measure $\widetilde\mu_N$, with
density close to $\rho$, and an optimal transport map from $\widetilde\mu_N$
to the empirical measure $\mu_N$. This transport map induces a discretization
operator $\widetilde P\colon L^2(\mu)\to L^2(\mu_N)$ and an interpolation
operator $\widetilde{\mathcal I}\colon L^2(\mu_N)\to L^2(\mu)$ that are almost
isometries and almost preserve the respective Dirichlet energies, uniformly
over all functions. We show that for both $\eps-$graphs and $k-NN$ graphs, the eigenvalue counting function of the graph Laplacian can be sandwiched by dilated values of the eigenvalue counting function between dilates of that of
the limiting operator, which can be controlled using Weyl's law. Combined with
Theorem~\ref{thm:var-reduce-general}, this yields the rate
\[
\Var\big[\mathcal S_n(f)\big]=O\big(n^{1-1/m}(\log n)^2\big),
\]
on a high probability event $\mathcal{G}. $This result, resting on the
eigenvalue counting-function estimates of Theorems~\ref{thm:Ralpha-eps}
and~\ref{thm:Ralpha-knn}, is stated precisely, and in terms of parameters introduced in \cite{calder2022improved}, in Theorems \ref{thm:var-eps} and \ref{thm:varknn} for $\eps-$graphs and $k-NN$ graphs, respectively. These results recover the variance reduction of
Section~4.1, upto logarithms. Finally, for a canonical choice of parameters, so that the event $\mathcal{G}$ occurs with a polynomially high probability, we state more precise rates in Theorems \ref{thm:varepspoly} and \ref{thm:varknnpoly}.

\subsection{$\eps$-graphs}
\label{subsec:epsgraph}
 
Throughout Sections~\ref{subsec:epsgraph}--\ref{subsec:knn} we assume that $(\mathcal{M},g)$ is a compact, connected, orientable, smooth $m$-dimensional manifold. We consider the probability measure $\mu(dx)=\rho(x)\,\mathrm{vol}_g(dx)$ on $\mathcal{M}$ with density $\rho$. We further assume that 
$\rho\in C^{\infty}(\mathcal{M})$.
Let $X=\{x_1,\dots,x_N\}$ be a set of i.i.d.\ samples from $\mu$, and let
$\mu_N$ denote the associated empirical measure,
\[
\mu_N := \frac{1}{N}\sum_{i=1}^N \delta_{x_i}.
\]
We write $L^2(\mu)$ for the $L^2$ functions with respect to $\mu$, and
$L^2(\mu_N)$ for the space of functions $u\colon X\to\R$, endowed with
\[
\langle f,\tilde f\rangle_{L^2(\mu)}=\int_M f(x)\tilde f(x)\,d\mu(x),
\qquad
\langle u,\widetilde u\rangle_{L^2(\mu_N)}
:=\frac{1}{N}\sum_{i=1}^N u(x_i)\widetilde u(x_i).
\]
 
Let $\eps>0$. We construct a weighted graph $G^\eps=(X,w^\eps)$ as follows.
We put an edge between $x_i$ and $x_j$ (and write $x_i\sim x_j$) provided that
$|x_i-x_j|\le\eps$, where $|x_i-x_j|$ is the \emph{Euclidean} distance, and we
let $E=\{(i,j)\in\{1,\dots,N\}^2:x_i\sim x_j\}$. Let
$\eta\colon[0,\infty)\to[0,\infty)$ be non-increasing with support in $[0,1]$
and Lipschitz on $[0,1]$; without loss of generality
\begin{equation}\label{eq:normalizedkernel-eps}
\int_{\R^m}\eta(|x|)\,dx=1,
\end{equation}
and for convenience $\eta(1/2)>0$. We introduce the constant
\begin{equation}\label{def:sigma-eps}
\sigma_\eta:=\int_{\R^m}|y_1|^2\,\eta(|y|)\,dy,
\end{equation}
where $y_1$ is the first coordinate of $y\in\R^m$; a computation in radial
coordinates shows that when $\eta=\mathds{1}_{[0,1]}$ then
$\sigma_\eta=\tfrac{\alpha_m}{m+2}$, with $\alpha_m$ the volume of the unit
$m$-ball. To each edge $(i,j)\in E$ we assign the weight
\begin{equation}\label{eqn:weights-eps}
w^\eps_{xy}=\eta\!\left(\frac{|x-y|}{\eps}\right),
\end{equation}
so that $w^\eps_{x_ix_j}=0$ if $x_i,x_j$ are not joined by an edge.
The associated graph Laplacian $L^\eps$ acts on $u\in L^2(\mu_N)$ by
\begin{equation}\label{eq:gL-eps}
L^\eps u(x)=\frac{1}{N\eps^{m+2}}\sum_{j=1}^N
w^\eps_{x_jx}\big(u(x)-u(x_j)\big).
\end{equation}
We list its eigenvalues as $0=\lambda_1^\eps\le\lambda_2^\eps\le\cdots\le
\lambda_N^\eps$. The graph Dirichlet energy is defined as
\begin{equation}\label{eq:dirichlet-eps}
b_\eps(u):=\frac{1}{N\eps^{m+2}}\sum_{i,j}w^\eps_{x_ix_j}\big(u(x_i)-u(x_j)\big)^2
=2\langle L^\eps u,u\rangle_{L^2(\mu_N)},\quad u\in L^2(\mu_N),
\end{equation}
This functional can be used to give a variational characterization of the eigenvalues of $L^\eps$:
\begin{equation}\label{eq:varchar-eps}
\lambda_l^\eps=\frac{1}{2}\min_{S\in\mathfrak{S}_l}
\max_{u\in S\setminus\{0\}}\frac{b_\eps(u)}{\lVert u\rVert_{L^2(\mu_N)}^2},
\end{equation}
where $\mathfrak{S}_l$ denotes the set of $l$-dimensional linear subspaces of
$L^2(\mu_N)$.
 
In \cite{calder2022improved} it is shown that the eigenvalues of
$L^\eps$ converge to those of the operator $\Delta_\rho$, defined for
smooth functions by
\begin{equation}\label{eq:Deltarho}
\Delta_\rho f:=-\frac{1}{2\rho}\div(\rho^2\nabla f).
\end{equation}
The operator $\Delta_\rho$ is positive semi-definite with point spectrum
following the usual properties (see, for example, \cite{calder2022improved} for a detailed description; its eigenvalues are described variationally
through the Dirichlet energy
\begin{equation*}
    D_2(f):=\begin{cases}
\displaystyle\int_M|\nabla f(x)|^2\rho^2(x)\,d\mathrm{vol}_M(x), & f\in H^1(\mu),\\[1mm]
+\infty,&\text{otherwise,}
\end{cases}
\end{equation*}
\begin{equation}
\label{eq:varchar}
    \lambda_l=\frac{1}{2}\min_{S\in\mathfrak{S}_l}\max_{f\in S\setminus\{0\}}
\frac{D_2(f)}{\lVert f\rVert_{L^2(\mu)}^2}
\end{equation}
where $H^1(\mu)$ is the space of functions with weak gradient in $L^2(\mu)$ and
the minimum is attained on the span of the first $l$ eigenfunctions of
$\Delta_\rho$. When $f\in H^2(\mu)$, $D_2(f)=2\langle\Delta_\rho f,f\rangle_{L^2(\mu)}$.

We now state some of the notations, constructions and results provided in \cite{calder2022improved} that we shall use to prove our results regarding variance reduction in $\eps-$graphs. We state our results in terms of parameters $\widetilde{\delta}$ and $\theta$, which can be varied freely as long as they satisfy the following assumption. We refer the reader to $\cite{calder2022improved}$ for a discussion regarding the geometric interpretation of these parameters.
 
\begin{assumption}\label{ass:eps}
In the $\eps$-graph setting, for $\theta,\widetilde\delta$ and $\eps$ we assume
\begin{enumerate}
\item $\eps$ is small enough and in particular satisfies
$2\eps<\min\{1,i_0,K^{-1/2},R/2\}=:2\eps_M$,
where $K$ is an upper bound on the sectional curvatures of $\mathcal{M}$, $R$ is the reach, and $i_0$ is a lower bound on the injectivity radius;
\item $\widetilde\delta\le\tfrac14\eps$;
\item $\widetilde\delta$ is larger than $\tfrac{1}{N^{1/m}}$;
\item $C(\theta+\widetilde\delta)\le\tfrac{\rho_{\min}}{2}$.
\end{enumerate}
\end{assumption}
  
\begin{proposition}[Proposition 2.12 in {\cite{calder2022improved}}]\label{prop:AuxiliaryDensity}
Suppose that
$\widetilde\delta,\theta,\eps$ satisfy Assumption~\ref{ass:eps}. Then, with
probability greater than $1-N\exp(-CN\theta^2\widetilde\delta^m)$, there exists a
probability measure $\widetilde\mu_N$ with density $\widetilde\rho_N\colon M\to\R$
such that
\[
\min_{T_\sharp\widetilde\mu_N=\mu_N}\sup_{x\in M}d_M(x,T(x))\le\widetilde\delta,
\quad
\lVert\rho-\widetilde\rho_N\rVert_{L^\infty(\mu)}\le C(\theta+\widetilde\delta).
\]
We shall write $\widetilde T$ for an optimal transport map.
\end{proposition}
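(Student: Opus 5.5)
The plan is a cell-by-cell construction: partition $\mathcal M$ into small cells of controlled geometry, use concentration to show each cell receives roughly its expected share of sample points, then build $\widetilde\mu_N$ by reweighting $\mu$ on each cell so that the cell masses match the empirical ones exactly. Inside a cell the transport can be done arbitrarily, since every point moves by at most the cell's diameter.

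\emph{Step 1 (partition).} Take a maximal $\widetilde\delta/2$-separated set $\{z_1,\dots,z_K\}\subset\mathcal M$ in the geodesic distance $d_M$. Let $U_1,\dots,U_K$ be the associated Voronoi cells, with ties broken measurably so the cells form a Borel partition.
\begin{itemize}
\item By maximality, each $U_i$ lies in the geodesic ball $B(z_i,\widetilde\delta/2)$, so $\operatorname{diam}(U_i)\le\widetilde\delta$.
\item By separation, each $U_i$ contains $B(z_i,\widetilde\delta/4)$.
\item Because $\widetilde\delta\le\eps/4<\eps_M$ lies below the injectivity radius and curvature scale, geodesic balls of radius $r\le\widetilde\delta$ have volume comparable to $r^m$, with constants depending only on $(\mathcal M,g)$.
\end{itemize}
Hence $p_i:=\mu(U_i)\ge c\,\rho_{\min}\widetilde\delta^m$. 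Since the cells have disjoint interiors of volume $\gtrsim \widetilde\delta^m$, a volume count gives $K\le C\widetilde\delta^{-m}$. Assumption~\ref{ass:eps}(3), $\widetilde\delta\ge N^{-1/m}$, then gives $K\le CN$. I expect this geometric step, namely uniform two-sided volume control of the cells with constants independent of $\widetilde\delta$, to be the main technical obstacle. It is exactly what the smallness conditions on $\eps$ in Assumption~\ref{ass:eps}(1) are for, via Bishop--Gromov/Günther comparison.

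\emph{Step 2 (concentration).} Let $N_i:=\#\{j: x_j\in U_i\}$, so that $N_i\sim\mathrm{Bin}(N,p_i)$. The multiplicative Chernoff bound gives
\[
\mathbb P\bigl(|N_i/N-p_i|>\theta p_i\bigr)\le 2\exp(-cN\theta^2p_i)\le 2\exp(-cN\theta^2\widetilde\delta^m).
\]
A union bound over the $K\le CN$ cells yields the event $\mathcal G$ on which
\[
|N_i/(Np_i)-1|\le\theta\quad\text{for all } i,
\]
with probability at least $1-N\exp(-CN\theta^2\widetilde\delta^m)$ after adjusting constants.

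\emph{Step 3 (density and transport map).} On $\mathcal G$, define on each cell $U_i$
\[
\widetilde\rho_N:=\frac{N_i/N}{\vol_g(U_i)}.
\]
This is a probability density with $\widetilde\mu_N(U_i)=N_i/N$. Write $\bar\rho_i:=p_i/\vol_g(U_i)$ for the average of $\rho$ over $U_i$. Then on $U_i$,
\[
|\widetilde\rho_N-\rho|\le\Bigl|\tfrac{N_i}{Np_i}-1\Bigr|\,\bar\rho_i+|\bar\rho_i-\rho|\le\rho_{\max}\theta+\Lip(\rho)\,\widetilde\delta,
\]
which is the claimed bound $C(\theta+\widetilde\delta)$. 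Assumption~\ref{ass:eps}(4) additionally guarantees $\widetilde\rho_N\ge\rho_{\min}/2>0$.

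For the map, note that $\widetilde\mu_N|_{U_i}$ is nonatomic with total mass $N_i/N$. Split $U_i$ into $N_i$ measurable pieces of $\widetilde\mu_N$-mass $1/N$ each, which is possible because the measure is nonatomic. Send each piece to a distinct sample point lying in $U_i$.

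This defines $T$ with $T_\sharp\widetilde\mu_N=\mu_N$. Since both $x$ and $T(x)$ lie in the same cell, $d_M(x,T(x))\le\operatorname{diam}(U_i)\le\widetilde\delta$. Hence the minimum over transport maps is at most $\widetilde\delta$; this minimum is attained by the standard compactness argument for $\infty$-transport, which gives an optimal map $\widetilde T$.
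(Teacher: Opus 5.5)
Your proposal is correct and is essentially the standard argument behind this result, which the paper does not prove itself but imports from \cite{calder2022improved}: partition $\mathcal M$ into cells of diameter at most $\widetilde\delta$ and volume of order $\widetilde\delta^m$, control all cell counts at once by a Chernoff bound and a union bound over $O(\widetilde\delta^{-m})=O(N)$ cells, take $\widetilde\rho_N$ piecewise constant so that the cell masses match the empirical ones (the $\widetilde\delta$ in the density error comes from the Lipschitz variation of $\rho$ across a cell), and transport within cells. The closing attainment argument is not needed, since your explicit cellwise map already achieves displacement at most $\widetilde\delta$ and can itself be used as $\widetilde T$ in everything that follows.
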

 
Now, let $\eps,\widetilde\delta,\theta$ satisfy Assumption~\ref{ass:eps} and let
$\widetilde\rho_N$ be the density of Proposition~\ref{prop:AuxiliaryDensity}, with
$\widetilde T$ an $\infty$-OT map between $\widetilde\mu_N$ and $\mu_N$. Setting
$\widetilde U_i:=\widetilde T^{-1}(\{x_i\})$, we define the
\emph{contractive discretization} map $\widetilde P\colon L^2(\mu)\to L^2(\mu_N)$ by
\begin{equation}\label{def:P-eps}
(\widetilde Pf)(x_i):= N\cdot\int_{\widetilde U_i}f(x)\,\widetilde\rho_N(x)\,dx,
\quad f\in L^2(\mu),
\end{equation}
the extension map $\widetilde P^{*}\colon L^2(\mu_N)\to L^2(\widetilde\mu_N)$ by
\begin{equation}\label{def:Pstar-eps}
(\widetilde P^{*}u)(x):=\sum_{i=1}^N u(x_i)\,\mathds{1}_{\widetilde U_i}(x),
\quad u\in L^2(\mu_N),
\end{equation}
(equivalently $\widetilde P^{*}u=u\circ\widetilde T$), and the
\emph{interpolation} map
$\widetilde{\mathcal{I}}\colon L^2(\mu_N)\to\Lip(\mathcal{M})$ by
\begin{equation}\label{eqn:InterpolatingOp-eps}
\widetilde{\mathcal{I}}u:=\Lambda_{\eps-2\widetilde\delta}\widetilde P^{*}u,
\end{equation}
where $\Lambda_{\eps-2\widetilde\delta}$ is a convolution operator with kernel
$K_r$ of bandwidth $\eps-2\widetilde\delta$. Here,
\begin{equation}\label{eqn:psi-eps}
\psi(t):=\frac{1}{\sigma_\eta}\int_t^\infty\eta(s)s\,ds,
\quad
K_r(x,y):=\frac{1}{r^m}\psi\!\left(\frac{d_M(x,y)}{r}\right),
\end{equation}
and the operator $\Lambda_r$ is
\[
\Lambda_rf(x):=\frac{1}{\tau(x)}\int_M K_r(x,y)f(y)\,d\mu(y),
\]
with 
\[
\tau(x):=\int_M K_r(x,y)\,d\mu(y).
\]
 
\begin{proposition}[Inequality for Dirichlet energies; Proposition 4.1 in {\cite{calder2022improved}}]
\label{prop:localnonlocaleps}
Let $\eps,\widetilde\delta,\theta$ satisfy Assumption~\ref{ass:eps}. Then, with
probability greater than $1-CN\exp(-CN\theta^2\widetilde\delta^m)$:
\begin{enumerate}
\item for any $f\in L^2(\mu)$,
$\displaystyle b_\eps(\widetilde Pf)\le\Big(1+C\big(\tfrac{\widetilde\delta}{\eps}+\eps+\theta\big)\Big)\sigma_\eta D_2(f)$;
\item for any $u\in L^2(\mu_N)$,
$\displaystyle\sigma_\eta D_2(\widetilde{\mathcal{I}}u)\le\Big(1+C\big(\tfrac{\widetilde\delta}{\eps}+\eps+\theta\big)\Big)b_\eps(u)$.
\end{enumerate}
\end{proposition}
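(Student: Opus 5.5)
This is Proposition 4.1 of \cite{calder2022improved}. We would reproduce its proof by working on the high-probability event of Proposition~\ref{prop:AuxiliaryDensity}. On this event the transport map $\widetilde T$ moves points by at most $\widetilde\delta$, and the density satisfies $\|\rho-\widetilde\rho_N\|_\infty\le C(\theta+\widetilde\delta)$. The strategy is to compare the graph energy $b_\eps$ with a \emph{nonlocal} continuum energy
\[
E_r(f):=\frac{1}{r^{m+2}}\int_{\mathcal M}\int_{\mathcal M}\eta\Big(\frac{d_M(x,y)}{r}\Big)\big(f(x)-f(y)\big)^2\,\d\mu(x)\,\d\mu(y),
\]
and then to compare $E_r$ with the local energy $\sigma_\eta D_2$. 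The bandwidth is perturbed, $r=\eps\pm 2\widetilde\delta$, to absorb the transport error. We also use that Euclidean and geodesic distances agree up to $1+C\eps^2$ at scale $\eps<R/2$ (reach assumption).

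\textbf{Part 1 (discretization).} Write $\widetilde Pf(x_i)$ as an average of $f$ over the cell $\widetilde U_i$ with respect to $\widetilde\mu_N$. By Jensen's inequality,
\[
b_\eps(\widetilde Pf)\le \frac{1}{\eps^{m+2}}\sum_{i,j}\int_{\widetilde U_i}\int_{\widetilde U_j}\eta\Big(\frac{|x_i-x_j|}{\eps}\Big)\big(f(x)-f(y)\big)^2\,\d\widetilde\mu_N\,\d\widetilde\mu_N .
\]
Since $|x_i-x_j|\ge d_M(x,y)-2\widetilde\delta-C\eps^3$ and $\eta$ is nonincreasing, the kernel is dominated by $\eta\big(d_M(x,y)/(\eps+2\widetilde\delta)\big)$. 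Replacing $\widetilde\rho_N$ by $\rho$ costs a factor $1+C\theta+C\widetilde\delta$. This gives
\[
b_\eps(\widetilde Pf)\le\big(1+C(\theta+\widetilde\delta/\eps+\eps)\big)\,E_{\eps+2\widetilde\delta}(f).
\]
It then remains to show $E_r(f)\le(1+Cr)\,\sigma_\eta D_2(f)$. We would prove this for smooth $f$ by writing $f(y)-f(x)=\int_0^1\nabla f\cdot\dot\gamma$ along geodesics and using normal coordinates, where the metric distortion is $O(r^2)$ and the variation of $\rho$ is $O(r)$. This yields the constant $\sigma_\eta=\int|y_1|^2\eta$ together with an $O(r)$ correction. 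The bound extends to $H^1$ by density.

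\textbf{Part 2 (interpolation).} This is the main obstacle. We must show that the smoothing operator $\Lambda_{r}$, with $r=\eps-2\widetilde\delta$, satisfies
\[
\sigma_\eta D_2(\Lambda_r g)\le(1+Cr)\,E_r(g)\qquad\text{for } g=\widetilde P^*u .
\]
We would compute $\nabla\Lambda_r g(x)$ directly, using the following identity for the kernel: $\partial_x K_r(x,y)$ is proportional to $r^{-m-2}\eta(d/r)\,\exp_x^{-1}(y)$, because $\psi'(t)=-\eta(t)t/\sigma_\eta$. Subtracting $g(x)$ via the normalization $\tau$, we get
\[
\nabla\Lambda_r g(x)\approx\frac{1}{\sigma_\eta r^{m+2}}\int \eta\Big(\frac{d}{r}\Big)\big(g(y)-g(x)\big)\exp_x^{-1}(y)\,\d\mu(y)+O(r)\text{ errors}.
\]
Cauchy--Schwarz in the form $|\int a\,v|^2\le \int |a|^2\eta\cdot\int|\langle v,e\rangle|^2\eta$, tested in the direction $e$ of the gradient, produces exactly the factor $\sigma_\eta r^{m+2}$. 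After integrating in $x$, this gives $\sigma_\eta D_2(\Lambda_rg)\le(1+Cr)E_r(g)$; the $\tau$-derivative terms are handled carefully as lower-order errors.

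Finally, $E_{\eps-2\widetilde\delta}(\widetilde P^*u)$ is compared back to $b_\eps(u)$. Points in cells $\widetilde U_i,\widetilde U_j$ satisfy $d_M(x,y)\ge|x_i-x_j|-2\widetilde\delta$, so monotonicity of $\eta$ again dominates the kernel, with a density-replacement factor $1+C(\theta+\widetilde\delta)$. The bandwidth rescaling $(\eps/(\eps-2\widetilde\delta))^{m+2}=1+O(\widetilde\delta/\eps)$ is absorbed into the constant. Multiplying the three factors gives the claimed bound $1+C(\widetilde\delta/\eps+\eps+\theta)$. The failure probability is inherited from Proposition~\ref{prop:AuxiliaryDensity}, up to constants.
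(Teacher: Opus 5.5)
The paper does not prove this statement; it imports it from \cite{calder2022improved}. Your outline is the standard architecture behind that result: Jensen on the cells $\widetilde U_i$, a nonlocal energy as intermediary, the identity $\psi'(t)=-t\eta(t)/\sigma_\eta$ giving $\nabla_x K_r(x,y)=\sigma_\eta^{-1}r^{-m-2}\eta(d_M(x,y)/r)\exp_x^{-1}(y)$, Cauchy--Schwarz producing $\sigma_\eta r^{m+2}$, and the density swap at cost $1+C(\theta+\widetilde\delta)$. Those steps are sound.

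The gap is in the two kernel comparisons, which you justify by monotonicity of $\eta$ alone. That suffices only for $\eta=\mathds 1_{[0,1]}$. The transport error is additive, since $|x_i-x_j|$ and $d_M(x,y)$ differ by up to $2\widetilde\delta$, whereas your continuum kernels are rescaled. For instance, $(d-2\widetilde\delta)/\eps\ge d/(\eps+2\widetilde\delta)$ holds only when $d\ge\eps+2\widetilde\delta$, i.e.\ off the support. Take the admissible kernel $\eta$ equal to a multiple of $(\tfrac34-t)_+$ (nonincreasing, Lipschitz, $\eta(1/2)>0$). In both parts there are configurations compatible with $\sup_x d_M(x,\widetilde T(x))\le\widetilde\delta$ in which the kernel you need to be larger vanishes while the other is positive. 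So not even a $(1+C\widetilde\delta/\eps)$-domination holds pointwise.

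In Part~1 the repair is easy. Dominate $\eta(|x_i-x_j|/\eps)$ by the shifted kernel $\eta\big((d_M(x,y)-2\widetilde\delta')_+/\eps\big)$, with $\widetilde\delta'=\widetilde\delta+C\eps^3$. Then run your nonlocal-to-local lemma for that kernel, and note that its second moment is $(1+O(\widetilde\delta/\eps))\sigma_\eta$.

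In Part~2 you cannot change the kernel, because $\eta(d_M/r)$ is dictated by $\nabla\Lambda_r$. This is where the hypotheses you never invoke are needed. Since $d_M(x,y)\le r=\eps-2\widetilde\delta$ forces $(d_M(x,y)+2\widetilde\delta)/\eps\le 1$, Lipschitz continuity of $\eta$ on $[0,1]$ gives $\eta(d_M/r)\le\eta(|x_i-x_j|/\eps)+2\Lip(\eta)\widetilde\delta/\eps$ on $\{d_M\le r\}$. Hence
\[
E_r(\widetilde P^*u)\le\big(1+C(\theta+\widetilde\delta)\big)\Big(\frac{\eps}{r}\Big)^{m+2}b_\eps(u)+C\frac{\widetilde\delta}{\eps}\,E^{\mathds 1}_r(\widetilde P^*u),
\]
where $E^{\mathds 1}_r$ is the indicator-kernel energy. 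The last term must be absorbed. A chaining argument (averaging over intermediate points near midpoints, using $\rho_{\min}\le\rho\le\rho_{\max}$) bounds $E^{\mathds 1}_r$ by a constant times the indicator energy at scale $r/2$. That energy is at most $CE_r/\eta(1/2)$, since $\eta\ge\eta(1/2)\mathds 1_{[0,1/2]}$. Absorbing, for $\widetilde\delta/\eps$ small, gives $E_r(\widetilde P^*u)\le(1+C(\widetilde\delta/\eps+\theta))\,b_\eps(u)$, and the rest of your Part~2 then goes through.
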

 
\begin{proposition}[Almost Isometries; Proposition 4.2 in {\cite{calder2022improved}}]
\label{prop:almostisometrieseps}
Let $\eps,\widetilde\delta,\theta$ satisfy Assumption~\ref{ass:eps}. Then, with
probability at least $1-CN\exp(-CN\theta^2\widetilde\delta^m)$:
\begin{enumerate}
\item for every $f\in L^2(\mu)$,
\[
\big\lvert\lVert f\rVert_{L^2(\mu)}^2-\lVert\widetilde Pf\rVert_{L^2(\mu_N)}^2\big\rvert
\le C\widetilde\delta\lVert f\rVert_{L^2(\mu)}\sqrt{D_2(f)}
+C(\theta+\widetilde\delta)\lVert f\rVert_{L^2(\mu)}^2;
\]
\item for every $u\in L^2(\mu_N)$,
\[
\big\lvert\lVert u\rVert_{L^2(\mu_N)}^2-\lVert\widetilde{\mathcal{I}}u\rVert_{L^2(\mu)}^2\big\rvert
\le C\eps\lVert u\rVert_{L^2(\mu_N)}\sqrt{b_\eps(u)}
+C(\theta+\widetilde\delta)\lVert u\rVert_{L^2(\mu_N)}^2.
\]
\end{enumerate}
\end{proposition}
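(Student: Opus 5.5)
The plan is to prove the two parts of Proposition~\ref{prop:almostisometrieseps} separately. Both rest on the same algebraic device: for any $v$ and any contraction-type operator $Q$,
\[
\big|\|v\|^2-\|Qv\|^2\big|\le\|v-Qv\|\,(\|v\|+\|Qv\|)\le 2\|v\|\,\|v-Qv\|.
\]
So in each part it suffices to bound an $L^2$ defect $\|v-Qv\|$ by (scale)$\times\sqrt{\text{energy}}$. The cost of replacing $\widetilde\rho_N$ by $\rho$ is paid separately. Throughout we work on the event of Proposition~\ref{prop:AuxiliaryDensity}, and we use Assumption~\ref{ass:eps}(4), which gives $\widetilde\rho_N\ge\rho_{\min}/2$. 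This yields the pointwise comparison
\[
\big|\|g\|^2_{L^2(\widetilde\mu_N)}-\|g\|^2_{L^2(\mu)}\big|\le C(\theta+\widetilde\delta)\|g\|^2_{L^2(\mu)},
\]
which accounts for the $C(\theta+\widetilde\delta)$ terms in both statements.

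\textbf{Part 2 (interpolation).} Since $\widetilde\mu_N(\widetilde U_i)=1/N$, the extension map is an exact isometry: $\|\widetilde P^*u\|_{L^2(\widetilde\mu_N)}=\|u\|_{L^2(\mu_N)}$. Put $v=\widetilde P^*u$ and $r=\eps-2\widetilde\delta$. It then remains to compare $\|\Lambda_r v\|_{L^2(\mu)}$ with $\|v\|_{L^2(\mu)}$. By Jensen (with the normalisation $\tau$), $\Lambda_r$ is a contraction up to density factors. Moreover,
\[
\|v-\Lambda_r v\|^2_{L^2(\mu)}\le C r^{-m}\iint_{d_M(x,y)\le r}|v(x)-v(y)|^2\,d\mu(x)\,d\mu(y).
\]
The key observation is that $v(x)-v(y)=u(\widetilde T x)-u(\widetilde T y)$. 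Also, $d_M(x,y)\le \eps-2\widetilde\delta$ forces $|\widetilde Tx-\widetilde Ty|\le\eps$, since Euclidean distance is at most geodesic distance. Because $\eta(1/2)>0$ and $\eta$ is non-increasing, we may further restrict to $d_M(x,y)\le r/2$ at the cost of a constant, so that $\eta(|\widetilde Tx-\widetilde Ty|/\eps)\ge c>0$. Pushing the double integral forward by $\widetilde T\otimes\widetilde T$, using the density bounds and the fact that $r\asymp\eps$ (since $\widetilde\delta\le\eps/4$), the double integral becomes at most $C\eps^{m+2}b_\eps(u)$. Hence
\[
\|v-\Lambda_r v\|\le C\eps\sqrt{b_\eps(u)},
\]
and the device above gives part 2.

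\textbf{Part 1 (discretisation).} The quantity $\widetilde Pf(x_i)$ is the $\widetilde\mu_N$-average of $f$ on $\widetilde U_i$. Therefore
\[
\|\widetilde Pf\|^2_{L^2(\mu_N)}=\|\widetilde P^*\widetilde Pf\|^2_{L^2(\widetilde\mu_N)},
\]
and $\widetilde P^*\widetilde P$ is the conditional expectation onto the partition $\{\widetilde U_i\}$. In particular it is an orthogonal projection in $L^2(\widetilde\mu_N)$, which gives the Jensen upper bound immediately, and
\[
\|f\|^2_{\widetilde\mu_N}-\|\widetilde Pf\|^2_{\mu_N}=\langle f-\widetilde P^*\widetilde Pf,\,f\rangle_{\widetilde\mu_N}\le\|f\|\,\|f-\widetilde P^*\widetilde Pf\|.
\]
So one needs
\[
\|f-\widetilde P^*\widetilde Pf\|_{L^2(\widetilde\mu_N)}\le C\widetilde\delta\sqrt{D_2(f)}.
\]
I would first prove this for smooth $f$ and conclude by density, since both sides are continuous in $H^1$.

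\textbf{Main obstacle.} This last estimate is the hard step. The cells $\widetilde U_i$ lie in $B(x_i,\widetilde\delta)$, but they may be irregular, so no cellwise Poincaré inequality is available. A naive pair-integral bound over pairs in the same cell loses a factor $N\widetilde\delta^m\ge 1$. What I would try first is to insert a mollification $g=\Lambda_{h}f$ at scale $h\asymp\widetilde\delta$ and split
\[
f-\widetilde P^*\widetilde Pf=(f-g)-\widetilde P^*\widetilde P(f-g)+(g-\widetilde P^*\widetilde Pg).
\]
The first two terms are bounded by $2\|f-g\|\le C\widetilde\delta\sqrt{D_2(f)}$, using the projection property and the same nonlocal-energy estimate as in part 2, now applied to a genuine $H^1$ function. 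For the third term, the oscillation of $g$ on a set of diameter $2\widetilde\delta$ is controlled pointwise by $\widetilde\delta$ times a local average of $|\nabla f|$ over a ball of radius $\asymp\widetilde\delta$. Squaring and integrating then gives $C\widetilde\delta^2 D_2(f)$, using only the bounded overlap of such balls, which is a purely geometric fact independent of the cells. If the averaged-gradient bound proves delicate on $\mathcal M$, the fallback is to work in normal coordinates within the injectivity radius, using Assumption~\ref{ass:eps}(1).
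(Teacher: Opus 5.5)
The paper gives no proof of this proposition; it is quoted from \cite{calder2022improved}, which relies on the discretization/interpolation scheme of Burago--Ivanov--Kurylev and Garc\'{\i}a Trillos et al. Your outline is a sound standalone reconstruction of that scheme, with one error in part~2 that needs fixing.

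The following steps are all correct: the exact isometry of $\widetilde P^*$, the conditional-expectation structure of $Q=\widetilde P^*\widetilde P$, the bound $\|v-\Lambda_r v\|^2\le Cr^{-m}\iint_{d_M(x,y)\le r}|v(x)-v(y)|^2\,d\mu(x)\,d\mu(y)$, the pushforward under $\widetilde T\otimes\widetilde T$, and the density comparison that produces the $(\theta+\widetilde\delta)$ terms. Your mollification at scale $h\asymp\widetilde\delta$ in part~1 is the right device for irregular cells.

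Two details in part~1 should be stated precisely. First, the gradient bound $|\nabla g(z)|\le Ch^{-m}\int_{B(z,Ch)}|\nabla f|$ for $g=\Lambda_h f$ follows from three facts:
$\nabla g(z)=\tau(z)^{-1}\int\nabla_zK_h(z,y)\,(f(y)-g(z))\,d\mu(y)$; the boundedness of $\psi'(t)=-t\eta(t)/\sigma_\eta$; and an $L^1$ Poincar\'e inequality on small geodesic balls. Second, the ``bounded overlap'' must use balls centred at the integration variable. For $x\in\widetilde U_i$ you get $|g-Qg|(x)\le C\widetilde\delta^{1-m}\int_{B(x,C\widetilde\delta)}|\nabla f|$, and Fubini finishes. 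The cell-indexed balls $B(x_i,C\widetilde\delta)$ overlap with multiplicity $\asymp N\widetilde\delta^m$, which is exactly the loss you set out to avoid.

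The error is in part~2. Restricting to $d_M(x,y)\le r/2$ with $r=\eps-2\widetilde\delta$ only gives $|\widetilde Tx-\widetilde Ty|\le r/2+2\widetilde\delta=\eps/2+\widetilde\delta$. Under Assumption~\ref{ass:eps}(2) this can reach $3\eps/4$. The facts $\eta(1/2)>0$ and monotonicity say nothing about $\eta$ on $(1/2,3/4]$, where it may vanish. Indeed, when $\widetilde\delta=\eps/4$, every positive scale produces transported distances above $\eps/2$, so monotonicity alone cannot work. You need some positivity of $\eta$ past $1/2$, and the Lipschitz assumption on $\eta$ supplies it.

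To fix it, choose $\kappa\in(0,1/4]$ with $\mathrm{Lip}(\eta)\,\kappa\le\eta(1/2)/2$, and restrict to $d_M(x,y)\le\kappa\eps$. Then $|\widetilde Tx-\widetilde Ty|\le(1/2+\kappa)\eps$, so $\eta(|\widetilde Tx-\widetilde Ty|/\eps)\ge\eta(1/2)/2$. The reduction from scale $r\le\eps$ to scale $\kappa\eps$ is the usual chaining estimate for $\iint_{d_M\le s}|v(x)-v(y)|^2\,d\mu\,d\mu$. It holds for arbitrary $v\in L^2(\mu)$ below the convexity radius and costs a constant depending only on $\kappa$, i.e.\ on $\eta$. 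With this change the rest of part~2 goes through as you wrote it.
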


We let $\mathcal{G}$ be the event that Proposition~\ref{prop:AuxiliaryDensity}
holds; the maps $\widetilde P,\widetilde{\mathcal{I}}$ and
Propositions~\ref{prop:localnonlocaleps} and \ref{prop:almostisometrieseps}
hold on $\mathcal{G}$.

We define
\begin{equation}\label{def:Eeps}
\mathcal{E}(\lambda):=C_1\Big(\frac{\widetilde\delta}{\eps}+\eps+\theta+\eps\sqrt{\lambda}\Big),
\end{equation}
where $C_1$ is the smallest constant such that whenever the Rayleigh quotients are bounded by $\lambda$, i.e, 
$\tfrac{b_\eps(u)}{2\lVert u\rVert_{L^2(\mu_N)}^2}\le\lambda$ and
$\tfrac{D_2(f)}{2\lVert f\rVert_{L^2(\mu)}^2}\le\lambda$, the quantity
$\mathcal{E}(\lambda)$ bounds the relative error in
Propositions~\ref{prop:localnonlocaleps} and \ref{prop:almostisometrieseps}:
\begin{enumerate}
\item $b_\eps(\widetilde Pf)\le(1+\mathcal{E}(\lambda))\sigma_\eta D_2(f)$ for $f\in L^2(\mu)$;
\item $\sigma_\eta D_2(\widetilde{\mathcal{I}}u)\le(1+\mathcal{E}(\lambda))b_\eps(u)$ for $u\in L^2(\mu_N)$;
\item $\big\lvert\lVert f\rVert_{L^2(\mu)}^2-\lVert\widetilde Pf\rVert_{L^2(\mu_N)}^2\big\rvert\le\mathcal{E}(\lambda)\lVert f\rVert_{L^2(\mu)}^2$ for $f\in L^2(\mu)$;
\item $\big\lvert\lVert u\rVert_{L^2(\mu_N)}^2-\lVert\widetilde{\mathcal{I}}u\rVert_{L^2(\mu)}^2\big\rvert\le\mathcal{E}(\lambda)\lVert u\rVert_{L^2(\mu_N)}^2$ for $u\in L^2(\mu_N)$.
\end{enumerate}
 
In view of Theorem \ref{thm:var-reduce-general} we are interested in asymptotics for the eigenvalue counting function
\begin{equation}\label{eq:counting-eps}
n^{\eps}(\lambda):=\#\{\lambda_l^\eps:\lambda_l^\eps\le\lambda\}
\end{equation}
of $L^\eps$, which we shall sandwich between dilations of
\begin{equation}\label{eq:counting-Deltarho}
n_{\Delta_\rho}(\lambda)=\#\{\lambda_l:\lambda_l\le\lambda\},
\end{equation}
the eigenvalue counting function of $\Delta_\rho$, whose asymptotics are governed by
Weyl's law.
 
\begin{proposition}\label{prop:sandwicheps}
Let $\mathcal{E}(\lambda),n_{\Delta_\rho}$, and $n^\eps$ be as in equations \ref{def:Eeps}, \ref{eq:counting-eps}, \ref{eq:counting-Deltarho}, respectively. Then, on the
event $\mathcal{G}$, for all $\lambda\ge0$ with $\mathcal{E}(\lambda)<1/4$,
\begin{equation}\label{eq:sandwich-eps}
n_{\Delta_\rho}\!\Big(\frac{\lambda}{\sigma_\eta}(1-4\mathcal{E}(\lambda))\Big)
\le n^\eps(\lambda)
\le n_{\Delta_\rho}\!\Big(\frac{\lambda}{\sigma_\eta}(1+4\mathcal{E}(\lambda))\Big).
\end{equation}
\end{proposition}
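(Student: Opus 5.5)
The plan is to prove both inequalities in \eqref{eq:sandwich-eps} by a min--max transfer argument. We compare the variational characterizations \eqref{eq:varchar-eps} and \eqref{eq:varchar}. Eigenspaces are pushed across using $\widetilde P$ for the lower bound and $\widetilde{\mathcal I}$ for the upper bound. The almost-isometry estimates (items 3 and 4 in the definition of $\mathcal E(\lambda)$) guarantee injectivity and control the norm loss, and the energy estimates (items 1 and 2) control the Dirichlet energy. Throughout we work on $\mathcal G$ and write $\mathcal E=\mathcal E(\lambda)$.

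\emph{Lower bound.} Set $\lambda'=\frac{\lambda}{\sigma_\eta}(1-4\mathcal E)$ and $l=n_{\Delta_\rho}(\lambda')$. Let $S$ be the span of the first $l$ eigenfunctions of $\Delta_\rho$. Every $f\in S$ satisfies $D_2(f)/(2\|f\|^2_{L^2(\mu)})\le\lambda'\le\lambda/\sigma_\eta$.

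A first care point is the hypothesis of the definition of $\mathcal E$. It bounds Rayleigh quotients by $\lambda$, whereas we have them bounded by $\lambda/\sigma_\eta$. Two remedies are possible: apply the estimates at level $\max(\lambda,\lambda/\sigma_\eta)$, or note that $C_1$ can absorb the factor $\sigma_\eta^{-1/2}$ coming from $\eps\sqrt{\lambda}$. I will assume the intended normalization makes these consistent and remark on it.

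By item 3, $\|\widetilde Pf\|^2\ge(1-\mathcal E)\|f\|^2>0$ for $f\ne0$, so $\widetilde P$ is injective on $S$ and $\widetilde P S$ is $l$-dimensional. For $u=\widetilde Pf$, item 1 gives
\[
\frac{b_\eps(u)}{2\|u\|^2}\le\frac{(1+\mathcal E)\sigma_\eta D_2(f)}{2(1-\mathcal E)\|f\|^2}\le\frac{1+\mathcal E}{1-\mathcal E}\,\sigma_\eta\lambda'=\frac{(1+\mathcal E)(1-4\mathcal E)}{1-\mathcal E}\,\lambda\le\lambda .
\]
The last inequality holds because $(1+\mathcal E)(1-4\mathcal E)\le 1-\mathcal E$ for $\mathcal E\ge0$. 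By \eqref{eq:varchar-eps}, $\lambda^\eps_l\le\lambda$, hence $n^\eps(\lambda)\ge l$.

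\emph{Upper bound.} Set $l=n^\eps(\lambda)$ and let $S$ be the span of the first $l$ eigenvectors of $L^\eps$. Each $u\in S$ has $b_\eps(u)/(2\|u\|^2)\le\lambda$. By item 4, $\|\widetilde{\mathcal I}u\|^2\ge(1-\mathcal E)\|u\|^2$, so $\widetilde{\mathcal I}$ is injective on $S$. Item 2 then gives
\[
\frac{D_2(\widetilde{\mathcal I}u)}{2\|\widetilde{\mathcal I}u\|^2}\le\frac{1+\mathcal E}{1-\mathcal E}\cdot\frac{\lambda}{\sigma_\eta}\le\frac{\lambda}{\sigma_\eta}(1+4\mathcal E).
\]
The final step uses $\frac{1+\mathcal E}{1-\mathcal E}\le1+4\mathcal E$, valid when $\mathcal E<1/4$ (indeed $(1+4\mathcal E)(1-\mathcal E)=1+3\mathcal E-4\mathcal E^2\ge1+\mathcal E$ iff $\mathcal E\le1/2$). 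Since $\widetilde{\mathcal I}u$ is Lipschitz, it lies in $H^1$, so \eqref{eq:varchar} applies and yields $\lambda_l\le\frac{\lambda}{\sigma_\eta}(1+4\mathcal E)$. Therefore $n_{\Delta_\rho}\big(\frac{\lambda}{\sigma_\eta}(1+4\mathcal E)\big)\ge l$.

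\emph{Main obstacle.} The only delicate step is checking that every vector in the transported subspace satisfies the Rayleigh-quotient hypothesis under which $\mathcal E(\lambda)$ is valid, including the $\sigma_\eta$ scaling mismatch discussed above. Monotonicity of $\mathcal E$ in $\lambda$ handles the comparison, since all quotients involved are at most $\lambda$ up to that normalization. The algebraic inequalities and the injectivity are then routine.
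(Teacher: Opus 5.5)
Your proposal is correct and is essentially the paper's proof. For the upper bound, $\widetilde{\mathcal I}$ carries the span of the low eigenvectors of $L^\eps$ into $H^1(\mu)$; for the lower bound, $\widetilde P$ carries the low eigenfunctions of $\Delta_\rho$ into $L^2(\mu_N)$. In both directions the almost-isometries give injectivity, and the same elementary inequalities $\frac{1+\mathcal E}{1-\mathcal E}\le 1+4\mathcal E$ and $(1+\mathcal E)(1-4\mathcal E)\le 1-\mathcal E$ close the min--max comparison.

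The $\sigma_\eta$ mismatch you flag is genuine, since the normalization $\int_{\R^m}\eta(|x|)\,dx=1$ forces $\sigma_\eta\le 1/m<1$, and the paper passes over it silently. Of your two remedies, only the second keeps the statement with $\mathcal E(\lambda)$ intact; the first would replace it with $\mathcal E(\lambda/\sigma_\eta)$. The second remedy works because the only estimate applied above level $\lambda$ is item 3 in the lower bound. Its $\lambda$-dependent term $C\widetilde\delta\sqrt{D_2(f)}/\|f\|_{L^2(\mu)}$ picks up only a fixed factor $\sigma_\eta^{-1/2}$, which can be absorbed into $C_1$ because $\widetilde\delta\le\eps/4$.
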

 
\begin{proof}
\emph{Upper bound.} Put $l=n^\eps(\Lambda)$ and let $u_1,\dots,u_l$ be
orthonormal eigenvectors of $L^\eps$ with eigenvalues $\le\Lambda$. 
Let
$V=\mathrm{span}(u_1,\dots,u_l)$. For $u=\sum_i a_iu_i$,
$b_\eps(u)=2\sum_i a_i^2\lambda_i^\eps\le2\lambda\lVert u\rVert_{L^2(\mu_N)}^2$.
By Proposition~\ref{prop:almostisometrieseps}(2),
\[
\lVert\widetilde{\mathcal{I}}u\rVert_{L^2(\mu)}^2
\ge\big(1-C\eps\sqrt\lambda-C(\theta+\widetilde\delta)\big)\lVert u\rVert^2
\ge\big(1-\mathcal{E}(\lambda)\big)\lVert u\rVert^2>0.
\]
Hence $\widetilde{\mathcal{I}}$ is injective on $V$, which also implies that is it bijective, as
$\dim\widetilde{\mathcal{I}}(V)=l < \infty$. By Proposition~\ref{prop:localnonlocaleps}(2),
\[
\sigma_\eta D_2(\widetilde{\mathcal{I}}u)
\le\big(1+\mathcal{E}(\lambda)\big)b_\eps(u)
\le\big(1+\mathcal{E}(\lambda)\big)2\lVert u\rVert^2.
\]
Therefore, for every $f=\widetilde{\mathcal{I}}u\in\widetilde{\mathcal{I}}(V)\subset H^1(\mu)$,
\[
\frac{\tfrac12 D_2(f)}{\lVert f\rVert^2}
\le\frac{\tfrac{\lambda}{\sigma_\eta}(1+\mathcal{E}(\lambda))}{1-\mathcal{E}(\lambda)}
\le\frac{\lambda}{\sigma_\eta}\big(1+4\mathcal{E}(\lambda)\big)
\qquad(\mathcal{E}(\lambda)<\tfrac14).
\]
By the variational characterization of $\lambda_l$ on the $l$-dimensional space
$\widetilde{\mathcal{I}}(V)$, $\lambda_l(\Delta_\rho)\le\tfrac{\Lambda}{\sigma_\eta}(1+4\mathcal{E}(\lambda))$,
i.e.\ $n_{\Delta_\rho}\big(\tfrac{\Lambda}{\sigma_\eta}(1+4\mathcal{E}(\lambda))\big)\ge l=n^\eps(\Lambda)$.
 
\emph{Lower bound.} We let $\Theta:=\tfrac{\Lambda}{\sigma_\eta}(1-4\mathcal{E}(\lambda))$. Write 
$l=n_{\Delta_\rho}(\Theta)$, and let $f_1,\dots,f_l$ be orthonormal eigenfunctions
of $\Delta_\rho$ with eigenvalues $\le\Theta$. 

Set $W=\mathrm{span}(f_1,\dots,f_l)$.
For $f=\sum_i a_if_i$, $D_2(f)=2\sum_i a_i^2\lambda_i\le2\Theta\lVert f\rVert^2$.
By Proposition~\ref{prop:localnonlocaleps}(1) together with
Proposition~\ref{prop:almostisometrieseps}(1),
$\lVert\widetilde Pf\rVert^2\ge(1-\mathcal{E}(\lambda))\lVert f\rVert^2>0$, so $\widetilde P$ is
injective on $W$ and $\dim\widetilde P(W)=l$. By
Proposition~\ref{prop:localnonlocaleps}(1),
\[
b_\eps(\widetilde Pf)\le(1+\mathcal{E}(\lambda))\sigma_\eta D_2(f)
\le(1+\mathcal{E}(\lambda))\,2\lambda(1-4\mathcal{E})\lVert f\rVert^2.
\]
Consequently, 
\[
\tfrac{\tfrac12 b_\eps(\widetilde Pf)}{\lVert\widetilde Pf\rVert^2}
\le\lambda\tfrac{(1+\mathcal{E})(1-4\mathcal{E}(\lambda))}{1-\mathcal{E}(\lambda)}\le\lambda.
\]
By considering the variational characterization \eqref{eq:varchar-eps} on the space
$\widetilde P(W)\subset L^2(\mu_N)$, $\lambda_l^\eps\le\lambda$, and hence,
$n^\eps(\lambda)\ge l=n_{\Delta_\rho}(\Theta)$.
\end{proof}
 
\begin{lemma}\label{lem:windoweps}
There exist constants $C_\rho>0$ and $\lambda_0\ge1$ such that for all
$\lambda\ge\lambda_0$ and $0\le w\lesssim \lambda$,
\begin{equation}\label{eq:window-eps}
n_{\Delta_\rho}(\lambda+w)-n_{\Delta_\rho}(\lambda)
\le C_\rho\big(\lambda^{(m-2)/2}w+\lambda^{(m-1)/2}\big).
\end{equation}
\end{lemma}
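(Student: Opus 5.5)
The plan is to reduce the window estimate \eqref{eq:window-eps} to a sharp Weyl law with remainder for the counting function $n_{\Delta_\rho}$, exactly as in the proof of Theorem~\ref{thm:var-reduce-manifold}. First observe that $\Delta_\rho=-\frac{1}{2\rho}\div(\rho^2\nabla\,\cdot)$ is, up to the factor $\tfrac12\rho$, a drifted Laplacian. Write $\Delta_\rho f=-\tfrac12\rho\big(\Delta_g f+\langle\nabla\log\rho^2,\nabla f\rangle_g\big)$. This is a second-order elliptic operator with smooth coefficients, self-adjoint on $L^2(\mu)=L^2(\rho\,\vol_g)$, with principal symbol $\tfrac12\rho(x)|\xi|_g^2$. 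Since $\rho\in C^\infty$ and $\rho\ge\rho_{\min}>0$, it is a positive elliptic operator on a compact manifold without boundary.

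Hence the classical H\"ormander--Avakumovi\'c--Levitan sharp Weyl law \cite{hormander1968spectral} applies. For an elliptic self-adjoint second-order operator with principal symbol $p(x,\xi)$ it gives
\[
n_{\Delta_\rho}(\lambda)=c_\rho\,\lambda^{m/2}+O(\lambda^{(m-1)/2}),\qquad
c_\rho=(2\pi)^{-m}\,\vol\{(x,\xi)\in T^*\mathcal M:\ p(x,\xi)\le1\}.
\]
Here $c_\rho=(2\pi)^{-m}\omega_m\int_{\mathcal M}(2/\rho)^{m/2}\,\d\vol_g$ is finite and positive. One only needs the remainder to be uniformly $\le C\lambda^{(m-1)/2}$ for $\lambda\ge\lambda_0$. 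Alternatively, one can conjugate by a change of metric: $\Delta_\rho$ is a weighted Laplacian for the metric $\tilde g=2\rho^{-1}g$ with a suitable smooth weight. Proposition~\ref{thm:loc-weyl-law} (integrated over $\mathcal M$, as in the proof of Theorem~\ref{thm:var-reduce-manifold}) then gives the same remainder directly.

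Given this, take $\lambda\ge\lambda_0$ and $0\le w\le A\lambda$ and subtract the two asymptotics:
\[
n_{\Delta_\rho}(\lambda+w)-n_{\Delta_\rho}(\lambda)\le c_\rho\big((\lambda+w)^{m/2}-\lambda^{m/2}\big)+2C\,(\lambda+w)^{(m-1)/2}.
\]
By the mean value theorem, $(\lambda+w)^{m/2}-\lambda^{m/2}\le\tfrac m2(\lambda+w)^{(m-2)/2}w\le\tfrac m2(1+A)^{(m-2)/2}\lambda^{(m-2)/2}w$. This uses $m\ge2$ so that the exponent is nonnegative. Likewise, $(\lambda+w)^{(m-1)/2}\le(1+A)^{(m-1)/2}\lambda^{(m-1)/2}$. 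Combining the two bounds gives \eqref{eq:window-eps}, with $C_\rho$ depending on $A$, $(\mathcal M,g)$ and $\rho$; here the implicit constant in $w\lesssim\lambda$ is the fixed constant $A$.

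The only real obstacle is justifying the sharp remainder $O(\lambda^{(m-1)/2})$ for $\Delta_\rho$ rather than for the drifted Laplacian treated in Proposition~\ref{thm:loc-weyl-law}. The operator differs by the multiplicative factor $\rho/2$ in front, so one must either invoke H\"ormander's general theorem for elliptic pseudodifferential operators (the route I would take first) or perform the conformal change of metric. In the latter route, one checks that the resulting operator is unitarily equivalent to a drifted Laplacian $-\Delta_{\tilde g}-\langle\nabla\log\tilde\rho,\nabla\cdot\rangle_{\tilde g}$ on $L^2(\tilde\rho\,\vol_{\tilde g})$, which requires $\tilde\rho\,\vol_{\tilde g}=\rho\,\vol_g$ up to constants. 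The weaker remainder $o(\lambda^{m/2})$ would not suffice for the $\lambda^{(m-1)/2}$ term.
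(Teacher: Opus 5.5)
Your proposal is correct and follows the paper's proof. Both invoke the two-term Avakumovi\'c--Levitan--H\"ormander Weyl law $n_{\Delta_\rho}(\Lambda)=c\,\Lambda^{m/2}+O(\Lambda^{(m-1)/2})$ with $c=\frac{\omega_m 2^{m/2}}{(2\pi)^m}\int_{\mathcal M}\rho^{-m/2}\,d\mathrm{vol}_g$ (your constant agrees), then subtract the two asymptotics and apply the mean value theorem using $w\lesssim\lambda$. Your handling of the remainder at $\lambda+w$ is slightly more careful than the paper's, and your alternative conformal-change route also works: with $\tilde g=2\rho^{-1}g$ and $\tilde\rho=\rho(\rho/2)^{m/2}$, the operator $\Delta_\rho$ is exactly the drifted Laplacian of $(\tilde g,\tilde\rho)$ on the same space $L^2(\mu)$.
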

 
\begin{proof}
The operator $\Delta_\rho$ obeys a two-term Weyl law
\[
n_{\Delta_\rho}(\Lambda)=c_M\Lambda^{m/2}+O(\Lambda^{(m-1)/2}),
\quad
c_M=\frac{\omega_m\,2^{m/2}}{(2\pi)^m}\int_M\rho^{-m/2}\,d\mathrm{vol}_g.
\]
Hence
\[
\begin{aligned}
n_{\Delta_\rho}(\lambda+w)-n_{\Delta_\rho}(\lambda)
&\le c_M\big((\lambda+w)^{m/2}-\lambda^{m/2}\big)+C\lambda^{(m-1)/2}\\
&\le C\big((m/2)(\lambda+w)^{m/2-1}w+\lambda^{(m-1)/2}\big)
\le C_\rho\big(\lambda^{(m-2)/2}w+\lambda^{(m-1)/2}\big).
\end{aligned}
\]
\end{proof}
 
\begin{theorem}[Control of $M_n^{(\alpha)}(\lambda)$]\label{thm:Ralpha-eps}
Fix $\alpha\in(0,\tfrac12)$. Under Assumption \ref{ass:eps}, on the event $\mathcal G$,
for every $\lambda\ge1$ with $\mathcal E(\lambda)<\tfrac14$,
\[
M_n^{(\alpha)}(\lambda)\;\le\; C_\alpha\Big(\lambda^{\frac m2-\alpha}
+\Big(\tfrac{\tilde\delta}{\varepsilon}+\varepsilon+\theta\Big)\lambda^{m/2}
+\varepsilon\,\lambda^{(m+1)/2}\Big),
\]
where $C_\alpha$ depends on $(M,g)$, $\rho$ and $\alpha$. Moreover, for any $0<\ell < 1/2$, $C_\alpha$ is bounded uniformly over $\alpha\in[\ell,\tfrac12)$.
\end{theorem}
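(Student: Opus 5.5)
Here I read $M_n^{(\alpha)}(\lambda)$ as the graph analogue of $R_{\L,\alpha}(n)$ from Theorem~\ref{thm:var-reduce-general}, with $n_{\L}$ replaced by $n^\eps$ and $\lambda_n$ replaced by $\lambda$:
\[
M_n^{(\alpha)}(\lambda)=\max\Big\{n^\eps(1);\ \max_{0\le p\le\lfloor\lambda^\alpha\rfloor-1}\big[n^\eps(\mu_p)-n^\eps(\mu_{p+1})\big]\Big\},
\qquad \mu_p:=(\lambda^\alpha-p)^{1/\alpha}.
\]
The plan is to transfer each window of $n^\eps$ to a slightly enlarged window of $n_{\Delta_\rho}$ using Proposition~\ref{prop:sandwicheps}. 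I would then bound that enlarged window with the two-term Weyl estimate of Lemma~\ref{lem:windoweps}.

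\textbf{Step 1 (monotonicity of the error).} $\mathcal E(\cdot)$ in \eqref{def:Eeps} is nondecreasing. Hence every $\mu\le\lambda$ satisfies $\mathcal E(\mu)\le\mathcal E(\lambda)<\tfrac14$, and the sandwich \eqref{eq:sandwich-eps} applies at every $\mu_p$. It gives
\[
n^\eps(\mu_p)-n^\eps(\mu_{p+1})\le n_{\Delta_\rho}\Big(\tfrac{\mu_p}{\sigma_\eta}(1+4\mathcal E(\lambda))\Big)-n_{\Delta_\rho}\Big(\tfrac{\mu_{p+1}}{\sigma_\eta}(1-4\mathcal E(\lambda))\Big).
\]
The right-hand side is a Weyl window with base point $a=\mu_{p+1}(1-4\mathcal E(\lambda))/\sigma_\eta$ and width
\[
w=\frac{\mu_p-\mu_{p+1}}{\sigma_\eta}+\frac{4\mathcal E(\lambda)(\mu_p+\mu_{p+1})}{\sigma_\eta}.
\]
The map $t\mapsto t^{1/\alpha}$ has derivative $\tfrac1\alpha t^{1/\alpha-1}$, so the mean value theorem gives $\mu_p-\mu_{p+1}\le \tfrac1\alpha\mu_p^{1-\alpha}\le\tfrac1\alpha\lambda^{1-\alpha}$. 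Therefore
\[
w\le C\big(\alpha^{-1}\lambda^{1-\alpha}+\mathcal E(\lambda)\lambda\big).
\]
This is $\lesssim\lambda$, so Lemma~\ref{lem:windoweps} applies. The factor $1/\alpha$ is bounded by $1/\ell$ for $\alpha\in[\ell,\tfrac12)$, which gives the claimed uniformity of $C_\alpha$.

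\textbf{Step 2 (Weyl window).} The main case is $a\ge\lambda_0$. Since $a\le\lambda/\sigma_\eta$, Lemma~\ref{lem:windoweps} gives
\[
n_{\Delta_\rho}(a+w)-n_{\Delta_\rho}(a)\le C\big(\lambda^{(m-2)/2}w+\lambda^{(m-1)/2}\big)\le C\big(\lambda^{m/2-\alpha}+\mathcal E(\lambda)\lambda^{m/2}+\lambda^{(m-1)/2}\big).
\]
Because $\alpha<\tfrac12$ and $\lambda\ge1$, we have $\lambda^{(m-1)/2}\le\lambda^{m/2-\alpha}$. Expanding $\mathcal E(\lambda)\lambda^{m/2}$ produces exactly the terms $(\tilde\delta/\eps+\eps+\theta)\lambda^{m/2}$ and $\eps\lambda^{(m+1)/2}$ in the statement.

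\textbf{Step 3 (edge cases).} If $a<\lambda_0$, I bound the increment by $n_{\Delta_\rho}(\lambda_0+w)-n_{\Delta_\rho}(\lambda_0)$ plus the constant $n_{\Delta_\rho}(\lambda_0)$, and apply the same estimate at base point $\lambda_0$. For the term $n^\eps(1)$, the sandwich gives $n^\eps(1)\le n_{\Delta_\rho}(2/\sigma_\eta)$, which is a constant. Both constants are absorbed into $C_\alpha\lambda^{m/2-\alpha}$ since $\lambda\ge1$. Taking the maximum over $p$ then yields the theorem.

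I expect the main obstacle to be Step 2's bookkeeping: the multiplicative distortion $1\pm4\mathcal E$ must be converted into an additive window width of order $\mathcal E(\lambda)\lambda$, and Lemma~\ref{lem:windoweps} must remain applicable uniformly in $p$, including when the base point is small. The route I would try first is to bound every base point above by $\lambda/\sigma_\eta$ and to treat small base points with the crude constant bound of Step 3.
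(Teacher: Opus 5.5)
The skeleton of your argument matches the paper's: you sandwich each window of $n^\eps$ between windows of $n_{\Delta_\rho}$ using Proposition~\ref{prop:sandwicheps} and the monotonicity of $\mathcal E$, control the width by the mean value theorem, count with Weyl's law, and absorb $\lambda^{(m-1)/2}$ into $\lambda^{m/2-\alpha}$. You depart from the paper in Step~2, and that step does not hold as written. There you replace the base and width of every window by their values at the top of the spectrum, $a\le\lambda/\sigma_\eta$ and $w\lesssim\alpha^{-1}\lambda^{1-\alpha}+\mathcal E(\lambda)\lambda$, and then check $w\lesssim\lambda$.

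Two things go wrong. First, Lemma~\ref{lem:windoweps} requires the width to be controlled by the \emph{base point} of the window, $w\lesssim a$. Your bound on $w$ says nothing about $w/a$. In Step~3 you also use the base point $\lambda_0$ with a width that need not be $O(\lambda_0)$. Second, and more seriously, passing from $a^{(m-2)/2}$ to $\lambda^{(m-2)/2}$ requires $m\ge2$. Section~\ref{subsec:epsgraph} does not exclude $m=1$, and the circle experiments are $m=1$. In that case the inequality $n_{\Delta_\rho}(a+w)-n_{\Delta_\rho}(a)\le C(\lambda^{-1/2}w+1)$ asserted in Step~2 is false. Take your windows with $\lambda^\alpha-p\approx\lambda^{\alpha/2}$:
\begin{itemize}
\item both endpoints are $O(\lambda^{1/2})$, while $w\gtrsim\lambda^{(1-\alpha)/2}$;
\item Weyl's law $n_{\Delta_\rho}(\Lambda)=c_M\Lambda^{1/2}+O(1)$ then gives a count $\gtrsim w/\lambda^{1/4}\gtrsim\lambda^{1/4-\alpha/2}\to\infty$;
\item yet $\lambda^{-1/2}w+1=O(1)$.
\end{itemize}
The target bound $\lambda^{1/2-\alpha}$ is not violated there. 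The problem is that a window's width must be compared with its own base, and your global bounds discard that information.

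For $m\ge2$ the fix is easy, and the repaired argument is in one respect more robust than the paper's. Skip the lemma and apply the two-term Weyl law, in the form $|n_{\Delta_\rho}(\Lambda)-c_M\Lambda^{m/2}|\le C(1+\Lambda)^{(m-1)/2}$ for all $\Lambda\ge0$, directly to $[a,a+w]\subset[0,2\lambda/\sigma_\eta]$. This gives $C(\lambda^{(m-2)/2}w+\lambda^{(m-1)/2})$ with no lower bound on $a$, so only the $n^\eps(1)$ part of Step~3 is needed. The paper instead verifies $\Sigma^+-\Sigma^-\lesssim_\alpha\Sigma^-$, which needs $1-4\mathcal E$ bounded away from $0$ (its ``$\mathcal E<\tfrac14-r$''), and that is not among the hypotheses.

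To cover all $m\ge1$ you must keep track of where each window sits. The paper does this with $s=\lambda^\alpha-p$. For $s\ge2$ the base is $\asymp s^{1/\alpha}$ and the width is $\lesssim\alpha^{-1}s^{1/\alpha-1}+\mathcal E(\lambda)s^{1/\alpha}$. The lemma then applies legitimately and yields $s^{m/(2\alpha)-1}+\mathcal E(\lambda)s^{m/(2\alpha)}+s^{(m-1)/(2\alpha)}$, which is increasing in $s$ for every $m\ge1$.

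A simpler fix, needing neither $r$ nor $\lambda_0$, is to apply the mean value theorem to the Weyl main term itself. With your $\mu_p$, first use $\mu_p^{m/2}-\mu_{p+1}^{m/2}=(\lambda^\alpha-p)^{m/(2\alpha)}-(\lambda^\alpha-p-1)^{m/(2\alpha)}\le\tfrac{m}{2\alpha}\lambda^{m/2-\alpha}$, which holds because $m/(2\alpha)>1$. Then use $(1+x)^{m/2}-(1-x)^{m/2}\le C_m x$ on $[0,1]$. Together these bound $c_M\sigma_\eta^{-m/2}\big[(1+4\mathcal E(\lambda))^{m/2}\mu_p^{m/2}-(1-4\mathcal E(\lambda))^{m/2}\mu_{p+1}^{m/2}\big]$ by $C(\alpha^{-1}\lambda^{m/2-\alpha}+\mathcal E(\lambda)\lambda^{m/2})$. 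The Weyl remainders add $O(\lambda^{(m-1)/2})$, and the bound is uniform in $p$ and in $\alpha\in[\ell,\tfrac12)$.
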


\begin{proof}
For $s\in[1,\lambda^\alpha]$, set
$N(s):=n^\varepsilon(s^{1/\alpha})-n^\varepsilon((s-1)^{1/\alpha}_+)$
so that $M_n^{(\alpha)}(\lambda)\le\sup_{1\le s\le\lambda^\alpha}N(s)$.
We first consider the case of $s\ge2$. 

Since $s^{1/\alpha},(s-1)^{1/\alpha}\le\lambda$ and $\mathcal E$
is non-decreasing, Proposition \ref{prop:sandwicheps} gives
\begin{equation*}
    N(s)\;\le\; n_{\Delta_\rho}[\Sigma^+]-n_{\Delta_\rho}[\Sigma^-],
    \atop
\Sigma^+:=\sigma_\eta^{-1}\,s^{1/\alpha}
\big(1+4\mathcal E(s^{1/\alpha})\big), \quad \Sigma^-:=\sigma_\eta^{-1}\,(s-1)^{1/\alpha}
\big(1-4\mathcal E(s^{1/\alpha})\big).
\end{equation*}
As $\mathcal E(s^{1/\alpha})<\tfrac14-r$ for some $r > 0$ and
$s-1\ge s/2$,
\begin{equation}\label{eq:Sig-lower}
\Sigma^-\;\ge\;\frac{C_r}{\sigma_\eta}(s-1)^{1/\alpha}
\;\ge\;\frac{C_r\,2^{-1/\alpha}}{\sigma_\eta}\,s^{1/\alpha}.
\end{equation}
Moreover, by the mean value theorem $s^{1/\alpha}-(s-1)^{1/\alpha}\le
\alpha^{-1}s^{1/\alpha-1}$. Hence, we have that
\begin{equation}\label{eq:Sig-width}
\begin{aligned}
    \Sigma^+-\Sigma^-
=\sigma_\eta^{-1}\Big(s^{1/\alpha}-(s-1)^{1/\alpha}
+4\mathcal E(s^{1/\alpha})\big(s^{1/\alpha}+(s-1)^{1/\alpha}\big)\Big)
\\
\le \sigma_\eta^{-1}\Big(\alpha^{-1}s^{1/\alpha-1}
+8\,\mathcal E(\lambda)\,s^{1/\alpha}\Big).
\end{aligned}
\end{equation}
We first restrict ourselves to $s \ge \hat{\lambda}$, where $\hat{\lambda}$ is chosen such that $\lambda_0 \leq \sigma_\eta^{-1}(\hat{\lambda} -1)^{1/\alpha}(1-4\mathcal{E}(2^{1/\alpha})) \leq \Sigma^{-1}(\hat{\lambda})$. 

By \eqref{eq:Sig-lower}–\eqref{eq:Sig-width},
$\Sigma^+-\Sigma^-\lesssim_\alpha \,\Sigma^-$, so we can apply Lemma \ref{lem:windoweps} with
$\lambda=\Sigma^-$, $w=\Sigma^+-\Sigma^-$ to get
\[
N(s)\;\le\; C_\alpha\Big(s^{\frac{m-2}{2\alpha}}
\big(s^{\frac1\alpha-1}+\mathcal E(\lambda)s^{\frac1\alpha}\big)
+s^{\frac{m-1}{2\alpha}}\Big)
= C_\alpha\Big(s^{\frac{m}{2\alpha}-1}
+\mathcal E(\lambda)\,s^{\frac{m}{2\alpha}}+s^{\frac{m-1}{2\alpha}}\Big).
\]
The right-hand side is increasing in $s$, and at $s=\lambda^\alpha$ it equals
$C_\alpha\big(\lambda^{m/2-\alpha}+\mathcal E(\lambda)\lambda^{m/2}
+\lambda^{(m-1)/2}\big)$, and $\lambda^{(m-1)/2}\le\lambda^{m/2-\alpha}$
since $\alpha<\tfrac12$. 

For $s\le s' := \max(2,\hat{\lambda}),$,
$N(s)\le n^\varepsilon((s')^{1/\alpha})=O_\alpha(1)$ by Proposition \ref{prop:sandwicheps} and
Weyl's law. We now conclude the proof by substituting the value $\mathcal E(\lambda)$ defined in \ref{def:Eeps}.
\end{proof} 
 
\begin{theorem}[Variance bound]\label{thm:var-eps}
On $\mathcal G$, for every Lipschitz $f$ and every $\lambda\ge1$ with
$\mathcal E(\lambda)<1/4$,
\begin{equation}\label{eq:var-endpoint}
\begin{aligned}
    &\operatorname{Var}\big[\mathcal{S}_{n^{\eps}(\lambda)}(f)\big]
\\
&\le C\,\max\{\|f\|_\infty^2,\|\Gamma(f)\|_\infty\}\,
(1+\log\lambda)^2
\Big(\lambda^{\frac{m-1}{2}}
+\Big(\tfrac{\tilde\delta}{\varepsilon}+\varepsilon+\theta\Big)\lambda^\frac{m}{2}
+\varepsilon\,\lambda^{\frac{(m+1)}{2}}\Big),
\end{aligned}
\end{equation}
where $C=C(M,g,\rho,\eta)$. In particular, the leading term
$\lambda^{(m-1)/2}$ dominates precisely when
\begin{equation}\label{eq:leading-regime}
\lambda\;\lesssim\;\min\Big\{\varepsilon^{-1},\;
\Big(\tfrac{\tilde\delta}{\varepsilon}+\varepsilon+\theta\Big)^{-2}\Big\},
\end{equation}
in which case, since $n=n^\varepsilon(\lambda_n^\eps)\asymp(\lambda_n^\eps)^{m/2}$ on
$\mathcal G$,
\[
\operatorname{Var}\big[\mathcal S_n(f)\big]
\;\le\; C\,\max\{\|f\|_\infty^2,\|\Gamma(f)\|_\infty\}\;
n^{1-1/m}(1+\log n)^2.
\]
\end{theorem}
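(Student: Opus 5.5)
The plan is to apply Theorem~\ref{thm:var-reduce-general} to the finite graph Dirichlet space of Example~\ref{ex:graph}, with $\L=L^\eps$ on $L^2(\mu_N)$. This operator has edge weights $b(x_i,x_j)=w^\eps_{x_ix_j}/(N^2\eps^{m+2})$, vertex measure $1/N$ and $V=0$, so (A1)--(A3) hold automatically. I would take the rank to be $n=n^\eps(\lambda)$. Then $\lambda_n^\eps\le\lambda$, and $R_{L^\eps,\alpha}(n)$ is bounded by the quantity $M_n^{(\alpha)}(\lambda)$ of Theorem~\ref{thm:Ralpha-eps}, i.e.\ the maximal number of eigenvalues in a window $((s-1)^{1/\alpha},s^{1/\alpha}]$ with $s\le\lambda^\alpha$, together with the term $n^\eps(1)=O(1)$ from Proposition~\ref{prop:sandwicheps} and Weyl's law. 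This gives, for each $\alpha\in(0,1/2)$,
\[
\Var[\mathcal S_n(f)]\le\Big(\|f\|_\infty^2+\tfrac{\pi^2}{6}C_\alpha[f]^2\Big)\,C_\alpha\Big(\lambda^{\frac m2-\alpha}+\Big(\tfrac{\tilde\delta}{\eps}+\eps+\theta\Big)\lambda^{m/2}+\eps\lambda^{\frac{m+1}{2}}\Big).
\]

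The key step is optimising over $\alpha$ to reach the endpoint exponent $\alpha=1/2$. The constant $C_\alpha[f]$ carries the factor $(1-2\alpha)^{-1}$, so $C_\alpha[f]^2\lesssim(1-2\alpha)^{-2}\max\{\|f\|_\infty^2,\|\Gamma(f)\|_\infty\}$. Here $\|\Gamma(f)\|_\infty^{2\alpha}\|f\|_\infty^{2-4\alpha}$ is bounded by the max by weighted AM--GM, and $\gamma(1-\alpha)$ stays bounded below on $[1/4,1/2)$. Theorem~\ref{thm:Ralpha-eps} states that its constant $C_\alpha$ is uniform for $\alpha\in[1/4,1/2)$. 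I would choose $1-2\alpha=1/(1+\log\lambda)$. Then
\[
\lambda^{m/2-\alpha}=\lambda^{(m-1)/2}\lambda^{(1-2\alpha)/2}\le e^{1/2}\lambda^{(m-1)/2},
\]
while $(1-2\alpha)^{-2}=(1+\log\lambda)^2$. This produces \eqref{eq:var-endpoint} with a constant depending only on $(M,g,\rho,\eta)$. For small $\lambda$, where this $\alpha$ falls below $1/4$, I would simply take $\alpha=1/4$ and absorb everything into the constant.

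For the consequence, I would compare the three terms. The second and third terms are at most a constant times $\lambda^{(m-1)/2}$ exactly when $(\tilde\delta/\eps+\eps+\theta)\lambda^{1/2}\lesssim1$ and $\eps\lambda\lesssim1$, which is \eqref{eq:leading-regime}. Next I need $n\asymp\lambda^{m/2}$ on $\mathcal G$. I would apply Proposition~\ref{prop:sandwicheps} with $\mathcal E(\lambda)<1/4$, which gives $n_{\Delta_\rho}(c\lambda)\le n\le n_{\Delta_\rho}(C\lambda)$, and then Weyl's law for $\Delta_\rho$ gives $n\asymp\lambda^{m/2}$ for $\lambda\ge\lambda_0$. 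Hence $\lambda^{(m-1)/2}\asymp n^{1-1/m}$ and $\log\lambda\lesssim1+\log n$, which gives the final bound.

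The main obstacle is the uniformity of the constants as $\alpha\uparrow1/2$, which is what turns the loss $(1-2\alpha)^{-2}$ into a logarithm. This relies on the uniformity claim in Theorem~\ref{thm:Ralpha-eps}, including the low-$s$ threshold $\hat\lambda$, which must not blow up. I would also check that the bookkeeping $\lambda_n^\eps\le\lambda$, rather than equality, only helps: the counting windows in $R_{L^\eps,\alpha}(n)$ are indexed by $s\le(\lambda_n^\eps)^\alpha\le\lambda^\alpha$, so they are covered by those of $M_n^{(\alpha)}(\lambda)$.
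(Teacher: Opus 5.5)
Your proposal is correct and follows the paper's route. You apply Theorem~\ref{thm:var-reduce-general} on the graph Dirichlet space and control $R_{L^\varepsilon,\alpha}(n)$ via Theorem~\ref{thm:Ralpha-eps} with its constant uniform in $\alpha$. You then take $1-2\alpha\asymp 1/\log\lambda$, so that the loss $(1-2\alpha)^{-2}$ becomes $(\log\lambda)^2$ while $\lambda^{1/2-\alpha}$ stays bounded. The differences are cosmetic: you normalise by $1-2\alpha=1/(1+\log\lambda)$ and fix $\alpha=1/4$ for small $\lambda$, whereas the paper takes $1-2\alpha=4/\log\lambda$ for $\lambda\ge e^8$ and uses the trivial bound $n^\varepsilon(\lambda)\|f\|_\infty^2\le C\|f\|_\infty^2$ below that.
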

\begin{proof}
Suppose first that $\lambda \ge e^{8}$, and set
\[
\alpha \;=\; \alpha(\lambda) \;:=\; \frac12 - \frac{2}{\log\lambda},
\]
so that $\alpha \in [\tfrac14, \tfrac12)$. We first record the elementary estimate
\begin{equation}\label{eq:Calpha-max}
C_\alpha[f]^2 \;=\; \frac{4}{(1-2\alpha)^2\,\gamma(1-\alpha)^2}\,
\|\Gamma(f)\|_\infty^{2\alpha}\,\|f\|_\infty^{2(1-2\alpha)}
\;\le\; \frac{C}{(1-2\alpha)^2}\,
\max\{\|f\|_\infty^2, \|\Gamma(f)\|_\infty\}.
\end{equation}
Indeed, writing $M := \max\{\|f\|_\infty^2, \|\Gamma(f)\|_\infty\}$, we have
$\|\Gamma(f)\|_\infty^{\alpha}\,\|f\|_\infty^{1-2\alpha}
\le M^{\alpha}\,(M^{1/2})^{1-2\alpha} = M^{1/2}$,
while $\gamma(1-\alpha)^{-1} \le \gamma(3/4)^{-1}$ for
$\alpha \in [\tfrac14,\tfrac12)$. Combining Theorem \ref{thm:var-reduce-general} with
Theorem \ref{thm:Ralpha-eps}, whose constant is uniform over
$\alpha \in [\tfrac14, \tfrac12)$, and using \eqref{eq:Calpha-max}, we obtain
\begin{equation}\label{eq:pre-opt}
\begin{aligned}
&\operatorname{Var}\big[\mathcal{S}_{n^{\eps}}(f)\big]
\\
&\;\le\; \frac{C}{(1-2\alpha)^2}\,
\max\{\|f\|_\infty^2, \|\Gamma(f)\|_\infty\}
\Big(\lambda^{\frac m2 - \alpha}
+ \Big(\tfrac{\tilde\delta}{\varepsilon}+\varepsilon+\theta\Big)\lambda^{m/2}
+ \varepsilon\,\lambda^{(m+1)/2}\Big).
\end{aligned}
\end{equation}
For our choice of $\alpha$, we have $1-2\alpha = 4/\log\lambda$ and
$\lambda^{\frac12-\alpha} = e^{(\frac12-\alpha)\log\lambda} = e^{2}$, so that
\[
\frac{\lambda^{\frac m2-\alpha}}{(1-2\alpha)^2}
\;=\; \frac{(\log\lambda)^2}{16}\,\lambda^{\frac12-\alpha}\cdot
\lambda^{\frac{m-1}{2}}
\;=\; \frac{e^2}{16}\,(\log\lambda)^2\,\lambda^{\frac{m-1}{2}}.
\]
Since the factor $(1-2\alpha)^{-2} = (\log\lambda)^2/16 \le
(1+\log\lambda)^2$ also multiplies the remaining two terms of
\eqref{eq:pre-opt}, the bound \eqref{eq:var-endpoint} follows for
$\lambda \ge e^8$.

Now suppose $1 \le \lambda < e^{8}$. Since
$\|\mathrm{Proj}_{H_n^\perp}(f\phi_k)\|_2 \le \|f\|_\infty$ for every $k$,
 we have the trivial bound
\[
\operatorname{Var}\big[\mathcal{S}_{n^{\eps}(\lambda)}\big]
\;\le\; n^{\varepsilon}(\lambda)\,\|f\|_\infty^2.
\]
By hypothesis $\mathcal E(\lambda) < \tfrac14$, so Proposition \ref{prop:sandwicheps} applies
at $\lambda$ and gives, by the monotonicity of $n_{\Delta_\rho}$ and
Weyl's law,
\[
n^{\varepsilon}(\lambda)
\;\le\; n_{\Delta_\rho}\Big(\frac{\lambda}{\sigma_\eta}\big(1+4\mathcal E(\lambda)\big)\Big)
\;\le\; n_{\Delta_\rho}\Big(\frac{2e^8}{\sigma_\eta}\Big)
\;\le\; C(M, \rho, \eta).
\]
On the other hand, the right-hand side of \eqref{eq:var-endpoint} is at
least $C \max\{\|f\|_\infty^2, \|\Gamma(f)\|_\infty\}
\ge C\|f\|_\infty^2$, since
$(1+\log\lambda)^2\,\lambda^{(m-1)/2} \ge 1$ for $\lambda \ge 1$. Enlarging
$C$ if necessary, the trivial bound is absorbed into
\eqref{eq:var-endpoint}, which completes the proof.
\end{proof}

By Remark~2.13 in \cite{calder2022improved}, the event $\mathcal{G}$ occurs with
probability $1-N^{-\beta}$ if we take
$\theta=\sqrt{\tfrac{(\beta+1)\log N}{CN\widetilde\delta^m}}$. For this canonical
choice of $\theta$, minimizing $\tfrac{\widetilde\delta}{\eps}+\theta$ over
$\widetilde\delta$ yields
\[
\widetilde\delta_*\asymp\Big(\eps^{2}\,\tfrac{\log N}{N}\Big)^{1/(m+2)},\qquad
\frac{\widetilde\delta_*}{\eps}\asymp\theta\asymp
\eps^{-m/(m+2)}\Big(\tfrac{\log N}{N}\Big)^{1/(m+2)}.
\]
Setting this equal to $\eps^{1/2}$ (so that both terms in the min are equal) gives
$\eps_*\asymp\big(\tfrac{\log N}{N}\big)^{2/(3m+2)}$.

In particular, if we let 
\[
A_{n,N} := n\big(\tfrac{\log N}{N}\big)^{m/(3m+2)},
\]
we get the following result
\begin{theorem}\label{thm:varepspoly} 
Fix $\beta > 0$. Then, on the event $\mathcal{G}$, which occurs with a probability of at least $1-N^{-\beta}$, for every Lipschitz $f$ and every $\lambda \ge 1$ such that $\mathcal{E}(\lambda) < 1/4$, 
    \[
\operatorname{Var}\big[\mathcal S_n(f)\big]
\;\le\; C\,\max\{\|f\|_\infty^2,\|\Gamma(f)\|_\infty\}\;
n^{1-1/m}(1+\log n)^2(1+ A_{n,N}^{1/m} + A_{n,N}^{2/m}),
\]
where $n = n^{\eps}(\lambda)$, and the constant $C = C(\mathcal{M},g,\rho,\eta,\beta)$.
\end{theorem}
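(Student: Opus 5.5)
The plan is to start from the variance bound \eqref{eq:var-endpoint} in Theorem~\ref{thm:var-eps} and substitute the canonical parameter choice. By Remark~2.13 in \cite{calder2022improved}, the choice $\theta=\sqrt{(\beta+1)\log N/(CN\widetilde\delta^m)}$ makes $\mathcal G$ occur with probability at least $1-N^{-\beta}$. We then set $\widetilde\delta=\widetilde\delta_*$ and $\eps=\eps_*\asymp(\log N/N)^{2/(3m+2)}$. With these choices
\[
\frac{\widetilde\delta}{\eps}+\eps+\theta\asymp \eps_*^{1/2}\asymp \Big(\tfrac{\log N}{N}\Big)^{1/(3m+2)} .
\]
For large $N$ the side conditions $\widetilde\delta_*\le \eps_*/4$ and $\widetilde\delta_*\ge N^{-1/m}$ of Assumption~\ref{ass:eps} hold, and the constants of $\beta$ are absorbed into $C$. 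Theorem~\ref{thm:var-eps} then bounds the variance by
\[
C\max\{\|f\|_\infty^2,\|\Gamma(f)\|_\infty\}(1+\log\lambda)^2\,\lambda^{\frac{m-1}{2}}\Big(1+\eps_*^{1/2}\lambda^{1/2}+\eps_*\lambda\Big),
\]
where the middle term comes from $(\tfrac{\widetilde\delta}{\eps}+\eps+\theta)\lambda^{m/2}$ and the last from $\eps\lambda^{(m+1)/2}$.

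Next, convert $\lambda$ into $n=n^\eps(\lambda)$. Since $\mathcal E(\lambda)<1/4$, Proposition~\ref{prop:sandwicheps} sandwiches $n$ between $n_{\Delta_\rho}(c\lambda)$ and $n_{\Delta_\rho}(c'\lambda)$. Weyl's law (Lemma~\ref{lem:windoweps}) then gives $n\asymp\lambda^{m/2}$ for $\lambda\ge\lambda_0$, hence $\lambda\asymp n^{2/m}$ and $\log\lambda\lesssim 1+\log n$. The small-$\lambda$ range is handled as in the proof of Theorem~\ref{thm:var-eps}: there $n$ is bounded and the bound is trivial. This turns $\lambda^{(m-1)/2}$ into $n^{1-1/m}$. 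Also,
\[
\eps_*\lambda\asymp \Big(\tfrac{\log N}{N}\Big)^{2/(3m+2)}n^{2/m}=A_{n,N}^{2/m},
\qquad
\eps_*^{1/2}\lambda^{1/2}\asymp A_{n,N}^{1/m},
\]
which yields exactly the factor $(1+A_{n,N}^{1/m}+A_{n,N}^{2/m})$.

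The main obstacle is bookkeeping rather than new ideas. The delicate point is that Proposition~\ref{prop:sandwicheps} needs $\mathcal E(\lambda)<1/4$, and $\mathcal E(\lambda)$ includes the term $C_1\eps\sqrt\lambda$. This is why it is kept as a hypothesis, and it also guarantees the uniform comparison $n\asymp\lambda^{m/2}$ with constants independent of $N$. One must also check that the two-sided Weyl comparison uses constants depending only on $(\mathcal M,g,\rho,\eta)$, and that the $\beta$-dependence enters only through $\theta$, hence only through the implicit constant in $\eps_*^{1/2}$.
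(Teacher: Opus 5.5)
Your argument follows the paper's route. You substitute the canonical $\theta$ (from Remark~2.13 of \cite{calder2022improved}), $\widetilde\delta_*$ and $\eps_*$ into \eqref{eq:var-endpoint}, use $\tfrac{\widetilde\delta_*}{\eps_*}+\eps_*+\theta\asymp\eps_*^{1/2}$, and convert $\lambda$ to $n$ so that $\eps_*^{1/2}\lambda^{1/2}$ and $\eps_*\lambda$ become $A_{n,N}^{1/m}$ and $A_{n,N}^{2/m}$.

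One claim in your last paragraph is wrong as stated; the paper glosses over the same point. The hypothesis $\mathcal E(\lambda)<1/4$ does not by itself give $n\gtrsim\lambda^{m/2}$ with $N$-independent constants. The lower bound $n\ge n_{\Delta_\rho}\big(\tfrac{\lambda}{\sigma_\eta}(1-4\mathcal E(\lambda))\big)$ from Proposition~\ref{prop:sandwicheps} degenerates as $\mathcal E(\lambda)\uparrow 1/4$. Yet this lower bound, i.e.\ $\lambda\lesssim n^{2/m}$, is exactly the direction you need, since every factor of the bound is increasing in $\lambda$.

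The fix is short. For $N$ large, which Assumption~\ref{ass:eps} already requires, you have $C_1\big(\tfrac{\widetilde\delta_*}{\eps_*}+\eps_*+\theta\big)\le 1/8$. Then $\mathcal E(\lambda)<1/4$ forces $\mathcal E(\lambda/4)<1/8+1/16=3/16$. Applying Proposition~\ref{prop:sandwicheps} at $\lambda/4$ gives
\[
n=n^\eps(\lambda)\ge n^\eps(\lambda/4)\ge n_{\Delta_\rho}\big(\lambda/(16\sigma_\eta)\big)\gtrsim\lambda^{m/2}
\]
by Weyl's law. For bounded $\lambda$, simply use $n\ge 1$. With this repair the rest of your bookkeeping goes through as written.
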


\subsection{$k$-NN graphs}
\label{subsec:knn}

A second graph construction on $X$ specifies, for each point, a set of nearest
neighbours rather than a fixed length-scale. For $k\in\N$ we declare
$x_i\sim_k x_j$ if $x_j$ is among the $k$ nearest neighbours (in Euclidean
distance) of $x_i$, and we symmetrize: an edge joins $x_i$ and $x_j$ if
$x_i\sim_k x_j$ or $x_j\sim_k x_i$. Writing $\eps_k(x)$ for the Euclidean
distance from $x$ to its $k$-th nearest neighbour among $x_1,\dots,x_N$, and
\begin{equation}\label{eqn:rk}
r_k(x,y):=\max\{\eps_k(x),\eps_k(y)\},
\end{equation}
the points $x_i,x_j$ are joined by an edge of the symmetric $k$-NN graph iff
$|x_i-x_j|\le r_k(x_i,x_j)$. With $\eta$ as above, the (unnormalized) $k$-NN
graph Laplacian acting on $u\in L^2(\mu_N)$ is
\begin{equation}\label{eq:gLknn}
L^k u(x)=\frac{1}{N}\Big(\frac{N\alpha_m}{k}\Big)^{1+2/m}
\sum_{j=1}^N w^{r_k(x_j,x)}_{x_jx}\big(u(x)-u(x_j)\big),
\end{equation}
where $w^{r}_{xy}=\eta(|x-y|/r)$ and $\alpha_m$ is the volume of the unit
$m$-ball. As shown in \cite{calder2022improved}, the rescaling factor
$(N\alpha_m/k)^{1+2/m}$ equals $1/r^{m+2}$ for $r$ the radius of an $m$-ball of
volume $k/N$, the natural counterpart of $\eps$. Thus $L^k$ is a
positive semi-definite self-adjoint operator on $L^2(\mu_N)$, with eigenvalues
$0=\lambda_1^k\le\lambda_2^k\le\cdots\le\lambda_N^k$, and graph Dirichlet energy
\begin{equation}\label{eq:dirichlet-knn}
b_k(u):=\frac{1}{N^2}\Big(\frac{N\alpha_m}{k}\Big)^{1+2/m}
\sum_{i,j}w^{r_k(x_i,x_j)}_{x_ix_j}\big(u(x_i)-u(x_j)\big)^2
=2\langle L^k u,u\rangle_{L^2(\mu_N)},\qquad u\in L^2(\mu_N).
\end{equation}
This functional can be used to give a variational characterization of the
eigenvalues of $L^k$:
\begin{equation}\label{eq:varchar-knn}
\lambda_l^k=\frac{1}{2}\min_{S\in\mathfrak{S}_l}
\max_{u\in S\setminus\{0\}}\frac{b_k(u)}{\lVert u\rVert_{L^2(\mu_N)}^2},
\end{equation}
where $\mathfrak{S}_l$ denotes the set of $l$-dimensional linear subspaces of
$L^2(\mu_N)$.

In \cite{calder2022improved} it is shown that the eigenvalues of $L^k$
converge to those of the weighted Laplace--Beltrami operator $\DNN$, defined
for smooth $f\colon M\to\R$ by
\begin{equation}\label{eq:DNN}
\DNN f:=-\frac{1}{2\rho}\div\!\big(\rho^{1-2/m}\nabla f\big).
\end{equation}
The operator $\DNN$ is positive semi-definite with point spectrum following the
usual properties (see, for example, \cite{calder2022improved} for a detailed
description); its eigenvalues are described variationally through the Dirichlet
energy
\begin{equation*}
D_{1-2/m}(f):=\begin{cases}
\displaystyle\int_M|\nabla f(x)|^2\rho^{1-2/m}(x)\,d\mathrm{vol}_M(x), & f\in H^1(\mu),\\[1mm]
+\infty,&\text{otherwise,}
\end{cases}
\end{equation*}
\begin{equation}\label{eq:varchar-DNN}
\lambda_l=\frac{1}{2}\min_{S\in\mathfrak{S}_l}\max_{f\in S\setminus\{0\}}
\frac{D_{1-2/m}(f)}{\lVert f\rVert_{L^2(\mu)}^2}.
\end{equation}
As before, when $f\in H^2(\mu)$,
$D_{1-2/m}(f)=2\langle\DNN f,f\rangle_{L^2(\mu)}$.

We now state the analogues, for the $k$-NN graph, of the notations,
constructions and results of Section~\ref{subsec:epsgraph} that we shall use to
prove our results regarding variance reduction in $k$-NN graphs.

\begin{assumption}\label{ass:knn}
In the $k$-NN setting, for $\theta,\widetilde\delta$ and $k$ we assume
\begin{enumerate}
\item $(k/N)^{1/m}$ is small enough and in particular satisfies
$2(k/N)^{1/m} < $$ \\ C_\rho\min\{1,i_0,K^{-1/2},R/2\}=:2\eps_M$,
where $C_\rho$ depends on $\rho$ and $i_0,K,R$ are the geometric quantities defined in Assumption \ref{ass:eps};
\item $\widetilde\delta\le c_\rho(k/N)^{1/m}$;
\item $\widetilde\delta$ is larger than $\tfrac{1}{N^{1/m}}$;
\item $C(\theta+\widetilde\delta)\le\tfrac{\rho_{\min}}{2}$.
\end{enumerate}
\end{assumption}

The auxiliary density $\widetilde\rho_N$ and the high-probability event of
Proposition~\ref{prop:AuxiliaryDensity} are common to both graph models. For the
$k$-NN graph we use the discretization map $\widetilde P$ of \eqref{def:P-eps}
together with the interpolation map
$\widetilde{\mathcal{I}}:=\Lambda_r\widetilde P^{*}$ built from the
\emph{spatially varying} bandwidth $r(y)=\eps(y)-2\widetilde\delta$, where
$\eps(\cdot)$ is determined by $k=\alpha_m\,\rho(\cdot)\,N\,\eps(\cdot)^m$ (so
that $\eps(x)$ and $(k/N)^{1/m}$ are of the same order); see
\cite[Section~4.2]{calder2022improved}. With these maps the following two
propositions replace Propositions~\ref{prop:localnonlocaleps} and
\ref{prop:almostisometrieseps}.

\begin{proposition}[Inequality for Dirichlet energies, $k$-NN; {\cite{calder2022improved}}]
\label{prop:localnonlocalknn}
Let $k,\widetilde\delta,\theta$ satisfy Assumption~\ref{ass:knn}. Then, with
probability greater than $1-CN\exp(-CN\theta^2\widetilde\delta^m)$:
\begin{enumerate}
\item for any $f\in L^2(\mu)$,
$\displaystyle b_k(\widetilde Pf)\le\Big(1+C\big(\tfrac{\widetilde\delta}{(k/N)^{1/m}}+(k/N)^{1/m}+\theta\big)\Big)\sigma_\eta D_{1-2/m}(f)$;
\item for any $u\in L^2(\mu_N)$,
$\displaystyle\sigma_\eta D_{1-2/m}(\widetilde{\mathcal{I}}u)\le\Big(1+C\big(\tfrac{\widetilde\delta}{(k/N)^{1/m}}+(k/N)^{1/m}+\theta\big)\Big)b_k(u)$.
\end{enumerate}
\end{proposition}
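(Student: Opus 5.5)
This is the $k$-NN analogue of Proposition~\ref{prop:localnonlocaleps}, and I would prove it by following the proof of \cite[Proposition~4.1]{calder2022improved}, replacing the fixed bandwidth $\eps$ with the local scale $\eps(x)$ defined by $k=\alpha_m\rho(x)N\eps(x)^m$. Work on the event of Proposition~\ref{prop:AuxiliaryDensity}. On this event the $\infty$-OT map $\widetilde T$ moves points by at most $\widetilde\delta$, and $\widetilde\rho_N$ is $C(\theta+\widetilde\delta)$-close to $\rho$. The first step is to localise the random radii. A Bernstein/union bound shows that the number of samples in the ball $B(x,r)$ concentrates around $N\mu(B(x,r))$, which is of order $\alpha_m\rho(x)Nr^m(1+O(r))$. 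Consequently, with the stated probability,
\[
\eps_k(x)=\eps(x)\bigl(1+O(\theta+\widetilde\delta/(k/N)^{1/m}+(k/N)^{1/m})\bigr)
\]
uniformly in $x$, and the same holds for $r_k(x,y)$ whenever $|x-y|\lesssim (k/N)^{1/m}$. This is possible because $\eps(\cdot)$ is Lipschitz with $|\eps(x)-\eps(y)|\lesssim \eps(x)|x-y|$.

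For part~1, write $b_k(\widetilde Pf)$ as a double integral against $\widetilde\mu_N\otimes\widetilde\mu_N$ of $\eta(|\widetilde T x-\widetilde T y|/r_k)(\widetilde Pf(\widetilde Tx)-\widetilde Pf(\widetilde Ty))^2$. Because $\eta$ is non-increasing, we can replace $\eta(|\widetilde Tx-\widetilde Ty|/r_k)$ by $\eta\bigl((|x-y|-2\widetilde\delta)/r_k\bigr)$, and then by the kernel at the deterministic scale $\eps(x)(1+O(\cdot))$. Next, apply Jensen to the cell averages that define $\widetilde P$. The result is bounded by a nonlocal energy
\[
\iint \eta\bigl(|x-y|/\bar\eps(x)\bigr)(f(x)-f(y))^2\,\rho(x)\rho(y)\,dx\,dy
\]
at the slightly enlarged scale $\bar\eps=\eps(\cdot)(1+C\widetilde\delta/(k/N)^{1/m})$, multiplied by $1+C\theta$. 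Comparing this nonlocal energy with the local one uses normal coordinates and a Taylor expansion of $\rho$, as in \cite{calder2022improved}. With the normalisation $(N\alpha_m/k)^{1+2/m}$ it gives $\sigma_\eta\int|\nabla f|^2\rho^2\eps^{m+2}(N\alpha_m/k)^{1+2/m}$. Since $\eps(x)^{m+2}=(k/(N\alpha_m\rho(x)))^{1+2/m}$, the integrand becomes $\rho^{2-1-2/m}=\rho^{1-2/m}$, which yields $\sigma_\eta D_{1-2/m}(f)$ with relative error $C\bigl(\widetilde\delta/(k/N)^{1/m}+(k/N)^{1/m}+\theta\bigr)$.

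For part~2 I would proceed in the reverse order. First bound $D_{1-2/m}(\Lambda_r\widetilde P^*u)$ by the nonlocal energy of $\widetilde P^*u$ at the spatially varying scale $r(y)=\eps(y)-2\widetilde\delta$. This uses the kernel $\psi$ built from $\eta$, via the identity $\nabla_x K_r=-x\,\eta/(\sigma_\eta r^{m+2})$ together with Cauchy--Schwarz, exactly as in the constant-bandwidth case. The only new contribution is from the derivative of $r(\cdot)$ and of $\tau$, and it is of order $\|\nabla\eps\|_\infty=O((k/N)^{1/m})$. Then transport back to the graph: because $r+2\widetilde\delta\le\eps(y)\lesssim r_k$, the kernel is dominated by the graph weights, giving $b_k(u)$ up to a factor $1+C\theta$. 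The main obstacle is controlling the variable bandwidth. This requires both the uniform concentration of $\eps_k$ around $\eps(\cdot)$ and the bookkeeping of the extra gradient terms coming from $\nabla r$ in the interpolation, so that all errors remain of relative size $\widetilde\delta/(k/N)^{1/m}+(k/N)^{1/m}+\theta$ rather than picking up a factor of $N$.
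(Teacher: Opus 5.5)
Your outline is correct and is essentially the argument the paper relies on. The paper gives no proof of its own and imports the statement from \cite{calder2022improved}, where it is obtained as you describe: the $\varepsilon$-graph comparison (graph energy versus nonlocal energy through the $\infty$-OT map, then nonlocal versus local energy, or the smoothing $\Lambda_r$ in the reverse direction) is run at the local scale $\varepsilon(x)$, and the weight $\rho^{1-2/m}$ comes from $\varepsilon(x)^{m+2}(N\alpha_m/k)^{1+2/m}=\rho(x)^{-1-2/m}$. The one small deviation is that no separate Bernstein bound is needed to localise $\varepsilon_k$: on the transport event the number of samples in $B(x,r)$ is sandwiched between $N\widetilde\mu_N(B(x,r-\widetilde\delta))$ and $N\widetilde\mu_N(B(x,r+\widetilde\delta))$, which gives your estimate $\varepsilon_k(x)=\varepsilon(x)\bigl(1+O(\theta+\widetilde\delta/(k/N)^{1/m}+(k/N)^{1/m})\bigr)$ deterministically, including the $\widetilde\delta/(k/N)^{1/m}$ term.
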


\begin{proposition}[Almost Isometries, $k$-NN; {\cite{calder2022improved}}]
\label{prop:almostisometriesknn}
Let $k,\widetilde\delta,\theta$ satisfy Assumption~\ref{ass:knn}. Then, with
probability at least $1-CN\exp(-CN\theta^2\widetilde\delta^m)$:
\begin{enumerate}
\item for every $f\in L^2(\mu)$,
\[
\big\lvert\lVert f\rVert_{L^2(\mu)}^2-\lVert\widetilde Pf\rVert_{L^2(\mu_N)}^2\big\rvert
\le C\widetilde\delta\lVert f\rVert_{L^2(\mu)}\sqrt{D_{1-2/m}(f)}
+C(\theta+\widetilde\delta)\lVert f\rVert_{L^2(\mu)}^2;
\]
\item for every $u\in L^2(\mu_N)$,
\[
\big\lvert\lVert u\rVert_{L^2(\mu_N)}^2-\lVert\widetilde{\mathcal{I}}u\rVert_{L^2(\mu)}^2\big\rvert
\le C(k/N)^{1/m}\lVert u\rVert_{L^2(\mu_N)}\sqrt{b_k(u)}
+C(\theta+\widetilde\delta)\lVert u\rVert_{L^2(\mu_N)}^2.
\]
\end{enumerate}
\end{proposition}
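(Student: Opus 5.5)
This proposition is imported from \cite{calder2022improved}, but I would reprove it by following the $\eps$-graph argument behind Proposition~\ref{prop:almostisometrieseps}. Throughout I work on the event $\mathcal G$ of Proposition~\ref{prop:AuxiliaryDensity}. Three facts will be used repeatedly.
\begin{itemize}
\item The weight $\rho^{1-2/m}$ is bounded above and below, since $\rho_{\min}\le\rho\le\rho_{\max}$. Hence $D_{1-2/m}(f)\asymp\int_{\mathcal M}|\nabla f|^2\,\d\mu$, with constants depending only on $\rho$ and $m$.
\item $\|\rho-\widetilde\rho_N\|_\infty\le C(\theta+\widetilde\delta)$. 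This lets me pass between $L^2(\mu)$ and $L^2(\widetilde\mu_N)$ norms at a multiplicative cost $1+C(\theta+\widetilde\delta)$, which produces the additive $C(\theta+\widetilde\delta)\|\cdot\|^2$ terms in both parts.
\item Each cell satisfies $\widetilde U_i\subset B(x_i,\widetilde\delta)$.
\end{itemize}

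\textbf{Part (1).} This part does not involve the graph at all, so the proof is the same as in the $\eps$ case.
\begin{itemize}
\item \emph{Exact identity.} Write $g:=\widetilde P^*\widetilde Pf$, which is the $\widetilde\rho_N$-average of $f$ on each cell $\widetilde U_i$. Since each cell has $\widetilde\mu_N$-mass $1/N$, we get $\|\widetilde Pf\|^2_{L^2(\mu_N)}=\|g\|^2_{L^2(\widetilde\mu_N)}$.
\item \emph{Splitting.} Write
\[
\|f\|^2_{L^2(\widetilde\mu_N)}-\|g\|^2_{L^2(\widetilde\mu_N)}=\langle f-g,\,f+g\rangle_{L^2(\widetilde\mu_N)} .
\]
By Jensen, $\|g\|\le\|f\|$, so the right-hand side is at most $2\|f\|\,\|f-g\|$.
\item \emph{Oscillation estimate.} It then suffices to show $\|f-g\|^2_{L^2(\widetilde\mu_N)}\le C\widetilde\delta^2\int|\nabla f|^2\,\d\mu$. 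I would bound the within-cell variance by
\[
N\sum_i\int_{\widetilde U_i}\int_{\widetilde U_i}|f(x)-f(y)|^2\,\widetilde\rho_N(x)\,\widetilde\rho_N(y)\,\d x\,\d y ,
\]
and dominate this by the nonlocal energy
\[
\widetilde\delta^{-m}\int\int_{d_M(x,y)\le2\widetilde\delta}|f(x)-f(y)|^2\,\d\mu(x)\,\d\mu(y).
\]
This last step needs $\widetilde\mu_N(\widetilde U_i)=1/N\gtrsim\widetilde\delta^m$ up to constants, which is where the assumption $\widetilde\delta\ge N^{-1/m}$ enters. The nonlocal energy is then at most $C\widetilde\delta^2\int|\nabla f|^2$ by the standard averaging of $|f(\gamma(1))-f(\gamma(0))|^2\le|\int_0^1\nabla f(\gamma(t))\cdot\dot\gamma(t)\,\d t|^2$ along geodesics, using the injectivity radius and curvature bounds.
\item \emph{Change of measure.} Finally I pass from $\widetilde\mu_N$ back to $\mu$ at cost $C(\theta+\widetilde\delta)$, which gives the stated bound.
\end{itemize}

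\textbf{Part (2).} Set $h:=\widetilde P^*u$, so that $\|h\|_{L^2(\widetilde\mu_N)}=\|u\|_{L^2(\mu_N)}$ exactly and $\widetilde{\mathcal I}u=\Lambda_rh$.
\begin{itemize}
\item \emph{Splitting.} As in Part (1), it suffices to show
\[
\|\Lambda_rh-h\|^2_{L^2(\mu)}\le C\,(k/N)^{2/m}\,b_k(u),
\]
together with $\|\Lambda_rh\|\lesssim\|h\|$. The norm bound comes from $\psi\ge0$, the normalization by $\tau$, and Jensen.
\item \emph{Reduction to a nonlocal energy.} Pointwise,
\[
|\Lambda_rh(x)-h(x)|^2\le\tau(x)^{-1}\int K_{r}(x,y)\,|h(y)-h(x)|^2\,\d\mu(y),
\]
and $\tau\asymp1$ because $\rho$ is bounded below. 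So the task reduces to bounding the variable-bandwidth nonlocal energy of $h$ at scale $r(\cdot)\asymp(k/N)^{1/m}$ by $r^2\,b_k(u)$.
\item \emph{Edge matching.} For $x\in\widetilde U_i$ and $y\in\widetilde U_j$ with $d_M(x,y)\le r(y)=\eps(y)-2\widetilde\delta$, I need $|x_i-x_j|\le r_k(x_i,x_j)$, and also $\psi$-weights dominated by $\eta$-weights. This would follow from the concentration $\eps_k(x_i)\ge\eps(x_i)\big(1-C(\theta+\widetilde\delta/(k/N)^{1/m})\big)$, the $\rho$-Lipschitz variation $|\eps(x)-\eps(y)|\lesssim\eps\,d_M(x,y)$, and the triangle inequality $|x_i-x_j|\le d_M(x,y)+2\widetilde\delta$. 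The hypothesis $\widetilde\delta\le c_\rho(k/N)^{1/m}$ in Assumption~\ref{ass:knn} is what absorbs the $2\widetilde\delta$ shift.
\item \emph{Conclusion.} The edge matching compares the continuum double integral to
\[
\frac1{N^2}\sum_{i,j}\eta\Big(\frac{|x_i-x_j|}{r_k(x_i,x_j)}\Big)\,\big(u(x_i)-u(x_j)\big)^2 ,
\]
which is $(k/N)^{1+2/m}\,b_k(u)$ up to constants. Dividing by $r^{m}\asymp k/N$ then gives $\|\Lambda_rh-h\|^2\le C(k/N)^{2/m}b_k(u)$. Converting norms between $\mu$ and $\widetilde\mu_N$ again costs $C(\theta+\widetilde\delta)$.
\end{itemize}

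The main obstacle is the edge-matching step. Both the interpolation bandwidth $r(y)$ and the graph's connectivity radius $r_k(x_i,x_j)$ vary in space, and I must show that on $\mathcal G$ the empirical $k$-NN radii $\eps_k(x_i)$ dominate the deterministic $\eps(\cdot)$ uniformly. I would first derive this from the transport map: the ball $B(x_i,\eps(x_i)(1-\kappa))$ contains at most $k$ sample points, because its $\widetilde\mu_N$-mass, enlarged by $\widetilde\delta$, is at most $k/N$ once $\widetilde\rho_N\approx\rho$. Then I would choose $r(y)$ slightly smaller, exactly as the shift $-2\widetilde\delta$ is designed to do.
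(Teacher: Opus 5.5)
The paper gives no proof of this statement. It imports it from \cite{calder2022improved} as the $k$-NN analogue of Proposition~\ref{prop:almostisometrieseps}. Your architecture is the standard one behind that result: the exact identities $\|\widetilde Pf\|_{L^2(\mu_N)}=\|\widetilde P^*\widetilde Pf\|_{L^2(\widetilde\mu_N)}$ and $\|\widetilde P^*u\|_{L^2(\widetilde\mu_N)}=\|u\|_{L^2(\mu_N)}$, the swap $\widetilde\rho_N\leftrightarrow\rho$ at cost $C(\theta+\widetilde\delta)$, and oscillation bounds at scales $\widetilde\delta$ and $\beta=(k/N)^{1/m}$. However, the oscillation step of Part (1) fails as written. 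You enlarge the domain from $\bigcup_i\widetilde U_i\times\widetilde U_i$ to $\{d_M(x,y)\le 2\widetilde\delta\}$ and trade the prefactor $N$ for $\widetilde\delta^{-m}$. As you note, this needs $1/N\gtrsim\widetilde\delta^m$, i.e.\ $\widetilde\delta\lesssim N^{-1/m}$. Assumption~\ref{ass:knn}(3) says the opposite, $N\widetilde\delta^m\ge1$. Moreover, the probability $1-CN\exp(-CN\theta^2\widetilde\delta^m)$ is only nontrivial when $N\theta^2\widetilde\delta^m\gtrsim\log N$. Since $\theta$ is bounded, this forces $N\widetilde\delta^m\gtrsim\log N\gg1$, and for the canonical parameter choices it is a positive power of $N$. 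So your chain only yields $\|f-\widetilde P^*\widetilde Pf\|^2_{L^2(\widetilde\mu_N)}\lesssim N\widetilde\delta^{m}\cdot\widetilde\delta^2\int|\nabla f|^2\,\d\mu$, which is too weak by the factor $N\widetilde\delta^m$. The cause is that you normalise by the cell mass $1/N$, which is much smaller than the mass $\asymp\widetilde\delta^m$ of the ball the cell lives in.

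The repair is a one-line change: normalise by the ball rather than the cell. The cell mean minimises $c\mapsto\int_{\widetilde U_i}|f-c|^2\,\d\widetilde\mu_N$, so you may replace it by the $\mu$-mean $\bar f_{B_i}$ of $f$ over $B_i:=B(x_i,\widetilde\delta)\supset\widetilde U_i$. Then Jensen with $\mu(B_i)\gtrsim\widetilde\delta^m$, the inclusion $B_i\subset B(x,2\widetilde\delta)$ for $x\in\widetilde U_i$, disjointness of the cells, and $\widetilde\rho_N\le C\rho$ give
\[
\|f-\widetilde P^*\widetilde Pf\|^2_{L^2(\widetilde\mu_N)}\le\sum_i\int_{\widetilde U_i}|f-\bar f_{B_i}|^2\,\d\widetilde\mu_N\le\frac{C}{\widetilde\delta^{m}}\iint_{d_M(x,z)\le 2\widetilde\delta}|f(x)-f(z)|^2\,\d\mu(x)\,\d\mu(z),
\]
and the rest of Part (1) goes through. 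No relation between $N$ and $\widetilde\delta$ is used here. The bound $\widetilde\delta\ge N^{-1/m}$ is simply what is needed for an $\infty$-transport map of cost $\widetilde\delta$ to exist at all.

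Part (2) has a lesser issue. The bandwidth $r(y)=\eps(y)-2\widetilde\delta$ is fixed by the definition of $\widetilde{\mathcal I}$, and with it, pairs near the edge of the kernel support need not be $k$-NN edges. The radii $\eps_k(x_i)$ are only controlled up to a relative error of order $\theta+\widetilde\delta/\beta+\beta$, which the $2\widetilde\delta$ shift does not absorb. Also $\eta$ may vanish near $1$, so ``$\psi$-weights dominated by $\eta$-weights'' is not available pointwise. You need not shrink $r$, which would change the operator in the statement, because only an unspecified constant is claimed. Bound $K_r\le C\beta^{-m}\mathbf 1_{\{d_M\le C\beta\}}$, reduce the nonlocal energy of $h=\widetilde P^*u$ to a small scale $c\beta$ by chaining, and only then match pairs to edges with $|x_i-x_j|\le\frac12 r_k(x_i,x_j)$, where $\eta\ge\eta(1/2)>0$.
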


Propositions~\ref{prop:localnonlocalknn} and \ref{prop:almostisometriesknn} are
exact analogues of Propositions~\ref{prop:localnonlocaleps} and
\ref{prop:almostisometrieseps} under the substitutions
$\eps\rightsquigarrow(k/N)^{1/m}$, $D_2\rightsquigarrow D_{1-2/m}$, and
$b_\eps\rightsquigarrow b_k$. Crucially, like their $\eps$-graph counterparts,
they hold \emph{uniformly over all functions}, which is what permits the
counting-function argument below.

As before, we let $\mathcal{G}$ be the (common) event that
Proposition~\ref{prop:AuxiliaryDensity} holds; the maps
$\widetilde P,\widetilde{\mathcal{I}}$ and
Propositions~\ref{prop:localnonlocalknn} and \ref{prop:almostisometriesknn}
hold on $\mathcal{G}$.

We define the natural length-scale
\begin{equation}\label{def:beta}
\beta:=(k/N)^{1/m},
\end{equation}
which plays the role of $\eps$, and set
\begin{equation}\label{def:Eknn}
\mathcal{E}_k(\lambda):=C_1\Big(\frac{\widetilde\delta}{\beta}+\beta+\theta+\beta\sqrt{\lambda}\Big),
\end{equation}
where $C_1$ is the smallest constant such that whenever the Rayleigh quotients
are bounded by $\lambda$, i.e.,
$\tfrac{b_k(u)}{2\lVert u\rVert_{L^2(\mu_N)}^2}\le\lambda$ and
$\tfrac{D_{1-2/m}(f)}{2\lVert f\rVert_{L^2(\mu)}^2}\le\lambda$, the quantity
$\mathcal{E}_k(\lambda)$ bounds the relative error in
Propositions~\ref{prop:localnonlocalknn} and \ref{prop:almostisometriesknn}:
\begin{enumerate}
\item $b_k(\widetilde Pf)\le(1+\mathcal{E}_k(\lambda))\sigma_\eta D_{1-2/m}(f)$ for $f\in L^2(\mu)$;
\item $\sigma_\eta D_{1-2/m}(\widetilde{\mathcal{I}}u)\le(1+\mathcal{E}_k(\lambda))b_k(u)$ for $u\in L^2(\mu_N)$;
\item $\big\lvert\lVert f\rVert_{L^2(\mu)}^2-\lVert\widetilde Pf\rVert_{L^2(\mu_N)}^2\big\rvert\le\mathcal{E}_k(\lambda)\lVert f\rVert_{L^2(\mu)}^2$ for $f\in L^2(\mu)$;
\item $\big\lvert\lVert u\rVert_{L^2(\mu_N)}^2-\lVert\widetilde{\mathcal{I}}u\rVert_{L^2(\mu)}^2\big\rvert\le\mathcal{E}_k(\lambda)\lVert u\rVert_{L^2(\mu_N)}^2$ for $u\in L^2(\mu_N)$.
\end{enumerate}
That such a $C_1$ exists follows exactly as in the $\eps$-graph case: the
parameter errors $\tfrac{\widetilde\delta}{\beta}+\beta+\theta$ come from
Proposition~\ref{prop:localnonlocalknn} and the $(\theta+\widetilde\delta)$
terms of Proposition~\ref{prop:almostisometriesknn}, while the energy terms of
Proposition~\ref{prop:almostisometriesknn} contribute, after dividing by the
relevant squared norm, at most $C\beta\sqrt\lambda$ (the term
$C\widetilde\delta\sqrt\lambda$ from part~(1) being dominated since
$\widetilde\delta\le c_\rho\beta$).

In view of Theorem \ref{thm:var-reduce-general} we are interested in asymptotics for the eigenvalue
counting function
\begin{equation}\label{eq:counting-knn}
n^{k}(\lambda):=\#\{\lambda_l^k:\lambda_l^k\le\lambda\}
\end{equation}
of $L^k$, which we shall sandwich between
\begin{equation}\label{eq:counting-DNN}
n_{\DNN}(\lambda)=\#\{\lambda_l:\lambda_l\le\lambda\},
\end{equation}
the eigenvalue counting function of $\DNN$, whose asymptotics are governed by
Weyl's law.

\begin{proposition}\label{prop:sandwichknn}
Let $\mathcal{E}_k(\lambda),n_{\DNN}$, and $n^k$ be as in equations
\eqref{def:Eknn}, \eqref{eq:counting-DNN}, \eqref{eq:counting-knn},
respectively. Then, on the event $\mathcal{G}$, for all $\lambda\ge0$ with
$\mathcal{E}_k(\lambda)<1/4$,
\begin{equation}\label{eq:sandwich-knn}
n_{\DNN}\!\Big(\frac{\lambda}{\sigma_\eta}(1-4\mathcal{E}_k(\lambda))\Big)
\le n^k(\lambda)
\le n_{\DNN}\!\Big(\frac{\lambda}{\sigma_\eta}(1+4\mathcal{E}_k(\lambda))\Big).
\end{equation}
\end{proposition}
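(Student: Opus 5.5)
The proof mirrors that of Proposition~\ref{prop:sandwicheps}. We replace $L^\eps$, $b_\eps$, $D_2$, $\Delta_\rho$ and $\mathcal E$ by $L^k$, $b_k$, $D_{1-2/m}$, $\DNN$ and $\mathcal E_k$. The only inputs are the Courant--Fischer characterizations \eqref{eq:varchar-knn} and \eqref{eq:varchar-DNN}, together with the four defining properties of $\mathcal E_k(\lambda)$. Those properties hold on $\mathcal G$ by Propositions~\ref{prop:localnonlocalknn} and \ref{prop:almostisometriesknn}, and they hold uniformly over all functions whose Rayleigh quotient is at most $\lambda$. Throughout, fix $\lambda\ge0$ with $\mathcal E_k(\lambda)<1/4$ and work on $\mathcal G$.

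\emph{Upper bound.} Let $l=n^k(\lambda)$, and let $V$ be the span of orthonormal eigenvectors $u_1,\dots,u_l$ of $L^k$ with eigenvalues at most $\lambda$. Every $u\in V$ satisfies $b_k(u)\le 2\lambda\|u\|^2_{L^2(\mu_N)}$. Property (4) of $\mathcal E_k$ then gives $\|\widetilde{\mathcal I}u\|^2_{L^2(\mu)}\ge(1-\mathcal E_k(\lambda))\|u\|^2>0$. Hence $\widetilde{\mathcal I}$ is injective on $V$, and $\widetilde{\mathcal I}(V)\subset H^1(\mu)$ has dimension $l$. Property (2) gives $\sigma_\eta D_{1-2/m}(\widetilde{\mathcal I}u)\le(1+\mathcal E_k)\,2\lambda\|u\|^2$. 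So every nonzero $f\in\widetilde{\mathcal I}(V)$ has Rayleigh quotient
\[
\frac{\tfrac12D_{1-2/m}(f)}{\|f\|^2}\le\frac{\lambda}{\sigma_\eta}\,\frac{1+\mathcal E_k}{1-\mathcal E_k}\le\frac{\lambda}{\sigma_\eta}(1+4\mathcal E_k(\lambda)),
\]
where the last step uses $\frac{1+x}{1-x}\le 1+4x$ for $x\le 1/2$. By \eqref{eq:varchar-DNN}, the $l$-th eigenvalue of $\DNN$ is at most $\frac{\lambda}{\sigma_\eta}(1+4\mathcal E_k(\lambda))$. This is the right-hand inequality of \eqref{eq:sandwich-knn}.

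\emph{Lower bound.} Set $\Theta=\frac{\lambda}{\sigma_\eta}(1-4\mathcal E_k(\lambda))$ and $l=n_{\DNN}(\Theta)$, and let $W$ be the span of the corresponding eigenfunctions of $\DNN$. For $f\in W$ we have $\tfrac12D_{1-2/m}(f)\le\Theta\|f\|^2\le\lambda\|f\|^2$, because $\sigma_\eta$ may be taken at most $1$. If not, the $\sigma_\eta$ factor is handled exactly as in the $\eps$ case, since the defining properties of $\mathcal E_k$ only need the Rayleigh quotient bounded by a quantity comparable to $\lambda$. Property (3) then shows $\widetilde P$ is injective on $W$ with $\|\widetilde Pf\|^2\ge(1-\mathcal E_k)\|f\|^2$. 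Property (1) gives $b_k(\widetilde Pf)\le(1+\mathcal E_k)\sigma_\eta D_{1-2/m}(f)\le(1+\mathcal E_k)\,2\lambda(1-4\mathcal E_k)\|f\|^2$. Therefore
\[
\frac{\tfrac12 b_k(\widetilde Pf)}{\|\widetilde Pf\|^2}\le\lambda\,\frac{(1+\mathcal E_k)(1-4\mathcal E_k)}{1-\mathcal E_k}\le\lambda.
\]
The last inequality holds because $(1+x)(1-4x)\le 1-x$ for $x\ge0$. By \eqref{eq:varchar-knn}, $\lambda^k_l\le\lambda$, and so $n^k(\lambda)\ge l$.

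\emph{Main obstacle.} The algebra is routine. The delicate point is checking that each invocation of the $\mathcal E_k$ properties happens under its hypothesis, namely a Rayleigh quotient bounded by $\lambda$ (up to the factor $\sigma_\eta$). This is what makes $\mathcal E_k(\lambda)$, with its $\beta\sqrt\lambda$ term, the right error. It relies on the uniformity of Propositions~\ref{prop:localnonlocalknn}--\ref{prop:almostisometriesknn} over all functions, together with the remark after \eqref{def:Eknn} that the $\widetilde\delta\sqrt\lambda$ contribution is absorbed using $\widetilde\delta\le c_\rho\beta$. I would make this explicit once and then copy the argument of Proposition~\ref{prop:sandwicheps} verbatim.
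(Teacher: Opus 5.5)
Your proposal is correct and follows the paper's proof essentially line by line. As in the paper, you apply Courant--Fischer to $\widetilde{\mathcal I}(V)$ for the upper bound and to $\widetilde P(W)$ for the lower bound, using the same elementary inequalities $\frac{1+x}{1-x}\le 1+4x$ and $(1+x)(1-4x)\le 1-x$.

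There is one slip in your side remark. The bound $\sigma_\eta\le 1$ (the normalization in fact gives $\sigma_\eta\le 1/m$) makes $\Theta=\lambda(1-4\mathcal E_k(\lambda))/\sigma_\eta$ potentially \emph{larger} than $\lambda$, not smaller, so your first justification runs in the wrong direction. Your fallback is the right fix. On $W$, the only error term that depends on the Rayleigh quotient is $C\widetilde\delta\|f\|\sqrt{D_{1-2/m}(f)}$ from Proposition~\ref{prop:almostisometriesknn}(1), so covering quotients up to $\lambda/\sigma_\eta$ costs at most a factor $\sigma_\eta^{-1/2}$ in $C_1$. The paper's proof passes over this point silently.
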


\begin{proof}
\emph{Upper bound.} Put $l=n^k(\lambda)$ and let $u_1,\dots,u_l$ be
orthonormal eigenvectors of $L^k$ with eigenvalues $\le\lambda$. Let
$V=\mathrm{span}(u_1,\dots,u_l)$. For $u=\sum_i a_iu_i$,
$b_k(u)=2\sum_i a_i^2\lambda_i^k\le2\lambda\lVert u\rVert_{L^2(\mu_N)}^2$.
By Proposition~\ref{prop:almostisometriesknn}(2),
\[
\lVert\widetilde{\mathcal{I}}u\rVert_{L^2(\mu)}^2
\ge\big(1-C\beta\sqrt\lambda-C(\theta+\widetilde\delta)\big)\lVert u\rVert^2
\ge\big(1-\mathcal{E}_k(\lambda)\big)\lVert u\rVert^2>0.
\]
Hence $\widetilde{\mathcal{I}}$ is injective on $V$, and therefore bijective
from $V$ onto its image; in particular
$\dim\widetilde{\mathcal{I}}(V)=l<\infty$. By
Proposition~\ref{prop:localnonlocalknn}(2),
\[
\sigma_\eta D_{1-2/m}(\widetilde{\mathcal{I}}u)
\le\big(1+\mathcal{E}_k(\lambda)\big)b_k(u)
\le\big(1+\mathcal{E}_k(\lambda)\big)2\lambda\lVert u\rVert^2.
\]
Therefore, for every
$f=\widetilde{\mathcal{I}}u\in\widetilde{\mathcal{I}}(V)\subset H^1(\mu)$,
\[
\frac{\tfrac12 D_{1-2/m}(f)}{\lVert f\rVert^2}
\le\frac{\tfrac{\lambda}{\sigma_\eta}(1+\mathcal{E}_k(\lambda))}{1-\mathcal{E}_k(\lambda)}
\le\frac{\lambda}{\sigma_\eta}\big(1+4\mathcal{E}_k(\lambda)\big)
\qquad(\mathcal{E}_k(\lambda)<\tfrac14).
\]
By the variational characterization \eqref{eq:varchar-DNN} of $\lambda_l$ on
the $l$-dimensional space $\widetilde{\mathcal{I}}(V)$,
$\lambda_l(\DNN)\le\tfrac{\lambda}{\sigma_\eta}(1+4\mathcal{E}_k(\lambda))$,
i.e.\
$n_{\DNN}\big(\tfrac{\lambda}{\sigma_\eta}(1+4\mathcal{E}_k(\lambda))\big)\ge l=n^k(\lambda)$.

\emph{Lower bound.} We let
$\Theta:=\tfrac{\lambda}{\sigma_\eta}(1-4\mathcal{E}_k(\lambda))$. Write
$l=n_{\DNN}(\Theta)$, and let $f_1,\dots,f_l$ be orthonormal eigenfunctions
of $\DNN$ with eigenvalues $\le\Theta$.

Set $W=\mathrm{span}(f_1,\dots,f_l)$. For $f=\sum_i a_if_i$,
$D_{1-2/m}(f)=2\sum_i a_i^2\lambda_i\le2\Theta\lVert f\rVert^2$. By
Proposition~\ref{prop:localnonlocalknn}(1) together with
Proposition~\ref{prop:almostisometriesknn}(1),
$\lVert\widetilde Pf\rVert^2\ge(1-\mathcal{E}_k(\lambda))\lVert f\rVert^2>0$,
so $\widetilde P$ is injective on $W$ and $\dim\widetilde P(W)=l$. By
Proposition~\ref{prop:localnonlocalknn}(1),
\[
b_k(\widetilde Pf)\le(1+\mathcal{E}_k(\lambda))\sigma_\eta D_{1-2/m}(f)
\le(1+\mathcal{E}_k(\lambda))\,2\lambda(1-4\mathcal{E}_k(\lambda))\lVert f\rVert^2.
\]
Consequently,
\[
\frac{\tfrac12 b_k(\widetilde Pf)}{\lVert\widetilde Pf\rVert^2}
\le\lambda\,\frac{(1+\mathcal{E}_k(\lambda))(1-4\mathcal{E}_k(\lambda))}{1-\mathcal{E}_k(\lambda)}\le\lambda.
\]
By considering the variational characterization \eqref{eq:varchar-knn} on the
space $\widetilde P(W)\subset L^2(\mu_N)$, $\lambda_l^k\le\lambda$, and hence,
$n^k(\lambda)\ge l=n_{\DNN}(\Theta)$.
\end{proof}

\begin{lemma}\label{lem:windowknn}
There exist constants $C^{\mathrm{NN}}_\rho>0$ and $\lambda_0\ge1$ such that
for all $\lambda\ge\lambda_0$ and $0\le w\lesssim\lambda$,
\begin{equation}\label{eq:window-knn}
n_{\DNN}(\lambda+w)-n_{\DNN}(\lambda)
\le C^{\mathrm{NN}}_\rho\big(\lambda^{(m-2)/2}w+\lambda^{(m-1)/2}\big).
\end{equation}
\end{lemma}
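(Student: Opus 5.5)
The plan is to copy the proof of Lemma~\ref{lem:windoweps}, replacing $\Delta_\rho$ by $\DNN$. The only input needed is a two-term Weyl law for $\DNN$,
\[
n_{\DNN}(\Lambda)=c^{\mathrm{NN}}_M\Lambda^{m/2}+O(\Lambda^{(m-1)/2}),\qquad \Lambda\ge 1 .
\]

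To obtain it, I would view $\DNN$ as a diffusion operator of the type in Example~\ref{ex:manifold}. By \eqref{eq:DNN}, $\DNN f=-\frac{1}{2\rho}\div(\rho^{1-2/m}\nabla f)$, which is symmetric on $L^2(\mu)$ with $\mu=\rho\,\vol_g$. Its principal symbol is $\tfrac12\rho^{-2/m}|\xi|_g^2$, and $\rho$ is smooth and bounded below. Two routes are available.

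\begin{itemize}
\item \textbf{Conformal change.} Set $\tilde g=2\rho^{2/m}g$ and $\tilde\rho=\rho\,\vol_g/\vol_{\tilde g}$. Then $\DNN$ becomes a drifted Laplacian $-\Delta_{\tilde g}-\langle\nabla\log\tilde\rho,\nabla\cdot\rangle_{\tilde g}$ with a smooth positive density. The unitary conjugation $f\mapsto \tilde\rho^{1/2}f$ turns it into a Schrödinger operator $-\Delta_{\tilde g}+V$ with smooth $V$.
\item \textbf{Direct citation.} Apply Hörmander's sharp Weyl law for first-order elliptic self-adjoint pseudodifferential operators to $\DNN^{1/2}$, or use Proposition~\ref{thm:loc-weyl-law} after the reduction above. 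Integrating the pointwise statement against $\mu$, as in the proof of Theorem~\ref{thm:var-reduce-manifold}, gives the two-term law.
\end{itemize}

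Either route yields the law with
\[
c^{\mathrm{NN}}_M=\frac{\omega_m}{(2\pi)^m}\,2^{m/2}\int_M\rho\,d\mathrm{vol}_g=\frac{\omega_m2^{m/2}}{(2\pi)^m},
\]
since the weights cancel in the leading term: $\rho\cdot\rho^{-1}=1$. The exact value of the constant is irrelevant; only the remainder order $\Lambda^{(m-1)/2}$ matters.

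With the Weyl law in hand, take $\lambda\ge\lambda_0\ge1$ and $0\le w\le C'\lambda$. Subtracting the two Weyl expansions gives
\[
n_{\DNN}(\lambda+w)-n_{\DNN}(\lambda)\le c^{\mathrm{NN}}_M\big((\lambda+w)^{m/2}-\lambda^{m/2}\big)+C\big((\lambda+w)^{(m-1)/2}+\lambda^{(m-1)/2}\big).
\]
The mean value theorem bounds the first difference by $\tfrac m2(\lambda+w)^{m/2-1}w\le \tfrac m2(1+C')^{m/2-1}\lambda^{(m-2)/2}w$. Here $m\ge2$ ensures the exponent $m/2-1$ is nonnegative, so replacing $\lambda+w$ by $(1+C')\lambda$ gives an upper bound. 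The remainder terms are at most $2C(1+C')^{(m-1)/2}\lambda^{(m-1)/2}$. Collecting the constants into $C^{\mathrm{NN}}_\rho$ proves \eqref{eq:window-knn}.

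The main obstacle is justifying the sharp remainder $O(\Lambda^{(m-1)/2})$ for the weighted operator $\DNN$. The bare asymptotic $n\sim c\Lambda^{m/2}$ is not enough, because the window estimate needs the error to be $o(\lambda^{(m-2)/2}w)$ for windows of width $w\asymp\lambda^{1/2}$. My first attempt would be the conformal-change route above, which places $\DNN$ exactly in the setting where Hörmander's theorem (equivalently, Proposition~\ref{thm:loc-weyl-law}) applies, since $\rho\in C^\infty$. The rest of the argument is routine.
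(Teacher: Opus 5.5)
Your proposal is correct and follows the paper's proof. It uses a two-term Weyl law for $\DNN$ with sharp $O(\lambda^{(m-1)/2})$ remainder, followed by the mean value theorem with $w\lesssim\lambda$, and gets the same constant $c^{\mathrm{NN}}_M=\omega_m2^{m/2}/(2\pi)^m$; the paper reads the Weyl law off the principal symbol $\tfrac12\rho^{-2/m}|\xi|_g^2$ and cites Avakumovi\'c--Levitan--H\"ormander. Your conformal route is cleaner than you state: since $\mathrm{vol}_{\tilde g}=2^{m/2}\rho\,\mathrm{vol}_g$, the density $\tilde\rho\equiv 2^{-m/2}$ is constant, so $\DNN$ is exactly the nonnegative Laplace--Beltrami operator of $\tilde g=2\rho^{2/m}g$ and the classical sharp Weyl law applies directly with $V=0$.
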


\begin{proof}
The operator $\DNN$ obeys a two-term Weyl law
\[
n_{\DNN}(\lambda)=c^{\mathrm{NN}}_M\,\lambda^{m/2}+O(\lambda^{(m-1)/2}),
\qquad c^{\mathrm{NN}}_M=\frac{\omega_m\,2^{m/2}}{(2\pi)^m}.
\]
To see this, we first note that in geodesic normal coordinates
$\DNN=\tfrac12\rho^{-2/m}\Delta_g+(\text{first order})$, where
$\Delta_g=-\div\nabla$ is the (positive) Laplace--Beltrami operator, so that
$\DNN$ has principal symbol
$\sigma_2(x,\xi)=\tfrac12\rho(x)^{-2/m}|\xi|_g^2$. Hence, by Weyl's law,
\begin{equation*}
\begin{aligned}
n_{\DNN}(\lambda)
&=\frac{1}{(2\pi)^m}\mathrm{vol}_{T^*M}\{\sigma_2\le\lambda\}+O(\lambda^{(m-1)/2})\\
&=\frac{\omega_m}{(2\pi)^m}\int_M\big(2\lambda\,\rho(x)^{2/m}\big)^{m/2}\,d\mathrm{vol}_M(x)+O(\lambda^{(m-1)/2}),
\end{aligned}
\end{equation*}
and since $\mu(M)=1$,
\[
\int_M\big(2\lambda\,\rho^{2/m}\big)^{m/2}\,d\mathrm{vol}_M
=(2\lambda)^{m/2}\int_M\rho\,d\mathrm{vol}_M=(2\lambda)^{m/2}.
\]
The $O(\lambda^{(m-1)/2})$ remainder is the sharp
Avakumovi\'c--Levitan--H\"ormander remainder, which is valid since
$\rho\in C^\infty$. Hence
\[
\begin{aligned}
n_{\DNN}(\lambda+w)-n_{\DNN}(\lambda)
&\le c^{\mathrm{NN}}_M\big((\lambda+w)^{m/2}-\lambda^{m/2}\big)+C\lambda^{(m-1)/2}\\
&\le C\big((m/2)(\lambda+w)^{m/2-1}w+\lambda^{(m-1)/2}\big)
\\
&\le C^{\mathrm{NN}}_\rho\big(\lambda^{(m-2)/2}w+\lambda^{(m-1)/2}\big).
\end{aligned}
\]
\end{proof}

\begin{remark}
In contrast to the constant $c_M$ for $\Delta_\rho$
(Lemma~\ref{lem:windoweps}), $c^{\mathrm{NN}}_M$ carries no dependence on
$\rho$: the density weighting $\rho^{2/m}$ of the symbol is cancelled by the
density of the volume form.
\end{remark}

\begin{theorem}[Control of $M_n^{(\alpha)}(\lambda)$, $k$-NN]\label{thm:Ralpha-knn}
Fix $\alpha\in(0,\tfrac12)$. Under Assumption~\ref{ass:knn}, on the event
$\mathcal G$, for every $\lambda\ge1$ with $\mathcal E_k(\lambda)<\tfrac14$,
\[
M_n^{(\alpha)}(\lambda)\;\le\; C_\alpha\Big(\lambda^{\frac m2-\alpha}
+\Big(\tfrac{\widetilde\delta}{\beta}+\beta+\theta\Big)\lambda^{m/2}
+\beta\,\lambda^{(m+1)/2}\Big),
\]
where $C_\alpha$ depends on $(M,g)$, $\rho$ and $\alpha$. Moreover, for any
$0<\ell<1/2$, $C_\alpha$ is bounded uniformly over $\alpha\in[\ell,\tfrac12)$.
\end{theorem}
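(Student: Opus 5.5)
The argument copies the proof of Theorem~\ref{thm:Ralpha-eps}, with three substitutions: Proposition~\ref{prop:sandwichknn} replaces Proposition~\ref{prop:sandwicheps}, Lemma~\ref{lem:windowknn} replaces Lemma~\ref{lem:windoweps}, and $\beta=(k/N)^{1/m}$ replaces $\eps$. As there, we interpret $M_n^{(\alpha)}(\lambda)$ as the quantity $R_{\L,\alpha}$ of Theorem~\ref{thm:var-reduce-general} for $L^k$ at level $\lambda$.

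For $s\in[1,\lambda^\alpha]$ set
\[
N(s):=n^k(s^{1/\alpha})-n^k\big((s-1)_+^{1/\alpha}\big),
\]
so that $M_n^{(\alpha)}(\lambda)\le\sup_{1\le s\le\lambda^\alpha}N(s)$. Since $\mathcal E_k$ is nondecreasing in $\lambda$ and $s^{1/\alpha}\le\lambda$, we have $\mathcal E_k(s^{1/\alpha})\le\mathcal E_k(\lambda)<\tfrac14$. Proposition~\ref{prop:sandwichknn} then applies at both endpoints. Using the upper sandwich at $s^{1/\alpha}$ and the lower sandwich at $(s-1)^{1/\alpha}$ gives
\[
N(s)\le n_{\DNN}(\Sigma^+)-n_{\DNN}(\Sigma^-),
\]
where
\[
\Sigma^+=\sigma_\eta^{-1}s^{1/\alpha}\big(1+4\mathcal E_k(s^{1/\alpha})\big),\qquad
\Sigma^-=\sigma_\eta^{-1}(s-1)^{1/\alpha}\big(1-4\mathcal E_k(s^{1/\alpha})\big).
\]
For the lower endpoint we bounded $\mathcal E_k((s-1)^{1/\alpha})$ above by $\mathcal E_k(s^{1/\alpha})$, which only decreases the argument, and $n_{\DNN}$ is monotone.

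For $s\ge2$ we have $s-1\ge s/2$, so $\Sigma^-\gtrsim_\alpha s^{1/\alpha}$. The factor $1-4\mathcal E_k$ should be bounded below by a constant. If the strict inequality $\mathcal E_k<\tfrac14$ is not uniform, one can run the sandwich with a slightly smaller dilation, or simply absorb this constant into $C_\alpha$.

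By the mean value theorem $s^{1/\alpha}-(s-1)^{1/\alpha}\le\alpha^{-1}s^{1/\alpha-1}$, hence
\[
\Sigma^+-\Sigma^-\le\sigma_\eta^{-1}\big(\alpha^{-1}s^{1/\alpha-1}+8\mathcal E_k(\lambda)s^{1/\alpha}\big)\lesssim_\alpha\Sigma^-.
\]
Once $\Sigma^-\ge\lambda_0$ (true for $s$ beyond a threshold $\hat s$ depending only on $\alpha,\rho,\eta$), Lemma~\ref{lem:windowknn} with base $\Sigma^-$ and width $w=\Sigma^+-\Sigma^-$ gives
\[
N(s)\le C_\alpha\Big(s^{\frac{m}{2\alpha}-1}+\mathcal E_k(\lambda)s^{\frac{m}{2\alpha}}+s^{\frac{m-1}{2\alpha}}\Big).
\]
The right-hand side is increasing in $s$. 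At $s=\lambda^\alpha$ it becomes $\lambda^{m/2-\alpha}+\mathcal E_k(\lambda)\lambda^{m/2}+\lambda^{(m-1)/2}$, and the last term is at most $\lambda^{m/2-\alpha}$ because $\alpha<\tfrac12$.

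Substituting $\mathcal E_k(\lambda)=C_1(\widetilde\delta/\beta+\beta+\theta+\beta\sqrt\lambda)$ produces exactly the three stated terms. The $\beta\sqrt\lambda$ part of $\mathcal E_k$ is what generates $\beta\lambda^{(m+1)/2}$.

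For $s\le s':=\max(2,\hat s)$ we use the crude bound $N(s)\le n^k((s')^{1/\alpha})$. The upper sandwich bounds this by $n_{\DNN}$ evaluated at a bounded argument, which Weyl's law controls by an $O_\alpha(1)$ constant, absorbed since $\lambda\ge1$.

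The main obstacle is tracking constants so that $C_\alpha$ stays uniformly bounded for $\alpha\in[\ell,\tfrac12)$. The $\alpha$-dependence enters through $\alpha^{-1}$ in the mean value step, through $2^{1/\alpha}$ in comparing $\Sigma^-$ with $s^{1/\alpha}$, through the threshold $\hat s$, and through the constant $n_{\DNN}((s')^{1/\alpha})$. I would check that each of these is a continuous function of $\alpha$ on $[\ell,\tfrac12]$, hence bounded there. This uses that the window lemma's constant $C^{\mathrm{NN}}_\rho$ is independent of $\alpha$ whenever $w\le C\lambda$ with $C$ uniform in $\alpha$. That condition holds because $\Sigma^+/\Sigma^-\le 2^{1/\alpha}\cdot\tfrac{5/4}{c}$, which is bounded for $\alpha\ge\ell$.
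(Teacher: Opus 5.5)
Your proposal is correct and follows the paper's proof essentially step for step. It uses the same window counts $N(s)$, the same two-sided use of Proposition~\ref{prop:sandwichknn} with $\mathcal E_k(s^{1/\alpha})$ at both endpoints, the mean-value bound on $\Sigma^+-\Sigma^-$, Lemma~\ref{lem:windowknn} with base $\Sigma^-$, monotonicity in $s$ evaluated at $s=\lambda^\alpha$, and a crude $O_\alpha(1)$ bound for small $s$.

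One point is loose in both your argument and the paper's: the uniform lower bound on $1-4\mathcal E_k$. Absorbing that factor into $C_\alpha$ is not legitimate, because it depends on $\lambda$ through $\mathcal E_k(\lambda)$. Your other fix does work: the lower half of the sandwich argument goes through verbatim with $1-2\mathcal E_k\ge\tfrac12$ in place of $1-4\mathcal E_k$, since $(1+\mathcal E_k)(1-2\mathcal E_k)\le 1-\mathcal E_k$.
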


\begin{proof}
For $s\in[1,\lambda^\alpha]$, set
$N(s):=n^k(s^{1/\alpha})-n^k((s-1)^{1/\alpha}_+)$, so that
$M_n^{(\alpha)}(\lambda)\le\sup_{1\le s\le\lambda^\alpha}N(s)$.
We first consider the case of $s\ge2$.

Since $s^{1/\alpha},(s-1)^{1/\alpha}\le\lambda$ and $\mathcal E_k$ is
non-decreasing, Proposition~\ref{prop:sandwichknn} gives
\[
N(s)\;\le\; n_{\DNN}[\Sigma^+]-n_{\DNN}[\Sigma^-],
\]
where
\[
\Sigma^+:=\sigma_\eta^{-1}\,s^{1/\alpha}\big(1+4\mathcal E_k(s^{1/\alpha})\big),
\qquad
\Sigma^-:=\sigma_\eta^{-1}\,(s-1)^{1/\alpha}\big(1-4\mathcal E_k(s^{1/\alpha})\big).
\]
As $\mathcal E_k(s^{1/\alpha})<\tfrac14-r$ for some $r>0$ and $s-1\ge s/2$,
\begin{equation}\label{eq:Sig-lower-knn}
\Sigma^-\;\ge\;\frac{C_r}{\sigma_\eta}(s-1)^{1/\alpha}
\;\ge\;\frac{C_r\,2^{-1/\alpha}}{\sigma_\eta}\,s^{1/\alpha}.
\end{equation}
Moreover, by the mean value theorem
$s^{1/\alpha}-(s-1)^{1/\alpha}\le\alpha^{-1}s^{1/\alpha-1}$. Hence, we have
that
\begin{equation}\label{eq:Sig-width-knn}
\begin{aligned}
\Sigma^+-\Sigma^-
&=\sigma_\eta^{-1}\Big(s^{1/\alpha}-(s-1)^{1/\alpha}
+4\mathcal E_k(s^{1/\alpha})\big(s^{1/\alpha}+(s-1)^{1/\alpha}\big)\Big)\\
&\le \sigma_\eta^{-1}\Big(\alpha^{-1}s^{1/\alpha-1}
+8\,\mathcal E_k(\lambda)\,s^{1/\alpha}\Big).
\end{aligned}
\end{equation}
We first restrict ourselves to $s \ge \hat{\lambda}$, where $\hat{\lambda}$ is chosen such that $\lambda_0 \leq \sigma_\eta^{-1}(\hat{\lambda} -1)^{1/\alpha}(1-4\mathcal{E}_k(2^{1/\alpha})) \leq \Sigma^{-1}(\hat{\lambda})$. 

By \eqref{eq:Sig-lower-knn}--\eqref{eq:Sig-width-knn},
$\Sigma^+-\Sigma^-\lesssim_\alpha\Sigma^-$, so we may apply
Lemma~\ref{lem:windowknn} with $\lambda=\Sigma^-$ and $w=\Sigma^+-\Sigma^-$:
\[
N(s)\;\le\; C_\alpha\Big(s^{\frac{m-2}{2\alpha}}
\big(s^{\frac1\alpha-1}+\mathcal E_k(\lambda)s^{\frac1\alpha}\big)
+s^{\frac{m-1}{2\alpha}}\Big)
= C_\alpha\Big(s^{\frac{m}{2\alpha}-1}
+\mathcal E_k(\lambda)\,s^{\frac{m}{2\alpha}}+s^{\frac{m-1}{2\alpha}}\Big).
\]
The right-hand side is increasing in $s$, and at $s=\lambda^\alpha$ it equals
$C_\alpha\big(\lambda^{m/2-\alpha}+\mathcal E_k(\lambda)\lambda^{m/2}
+\lambda^{(m-1)/2}\big)$, and $\lambda^{(m-1)/2}\le\lambda^{m/2-\alpha}$ since
$\alpha<\tfrac12$.

For $s\le s' := \max{2,\hat{lambda}}$, $N(s)\le n^k(2^{1/\alpha})=O_\alpha(1)$ by
Proposition~\ref{prop:sandwichknn} and Weyl's law. We now conclude the proof
by substituting the value of $\mathcal E_k(\lambda)$ defined in
\eqref{def:Eknn}.
\end{proof}

We now state the main Theorem of the subsection.

\begin{theorem}[Variance bound, $k$-NN]\label{thm:varknn}
On $\mathcal G$, for every Lipschitz $f$ and every $\lambda\ge1$ with
$\mathcal E_k(\lambda)<\tfrac14$,
\begin{equation}\label{eq:var-endpoint-knn}
\begin{aligned}
&\Var\big[\mathcal{S}_{n^{k}(\lambda)}(f)\big]\\
&\le C\,\max\{\lVert f\rVert_\infty^2,\lVert\Gamma(f)\rVert_\infty\}\,
(1+\log\lambda)^2
\Big(\lambda^{\frac{m-1}{2}}
+\Big(\tfrac{\widetilde\delta}{\beta}+\beta+\theta\Big)\lambda^{\frac m2}
+\beta\,\lambda^{\frac{m+1}{2}}\Big),
\end{aligned}
\end{equation}
where $C=C(M,g,\rho,\eta)$. In particular, the leading term
$\lambda^{(m-1)/2}$ dominates precisely when
\begin{equation}\label{eq:leading-regime-knn}
\lambda\;\lesssim\;\min\Big\{\beta^{-1},\;
\Big(\tfrac{\widetilde\delta}{\beta}+\beta+\theta\Big)^{-2}\Big\},
\end{equation}
in which case, since $n=n^k(\lambda_n^k)\asymp(\lambda_n^k)^{m/2}$ on
$\mathcal G$,
\[
\Var\big[\mathcal S_n(f)\big]
\;\le\; C\,\max\{\lVert f\rVert_\infty^2,\lVert\Gamma(f)\rVert_\infty\}\;
n^{1-1/m}(1+\log n)^2.
\]
\end{theorem}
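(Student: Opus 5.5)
The proof will copy the argument for Theorem~\ref{thm:var-eps}, with the $k$-NN ingredients substituted. The symmetric $k$-NN graph with Laplacian $L^k$ on $L^2(\mu_N)$ is a finite weighted graph as in Example~\ref{ex:graph}, with vertex measure $1/N$, no killing term, and edge weights $b(x_i,x_j)=N^{-2}(N\alpha_m/k)^{1+2/m}w^{r_k(x_i,x_j)}_{x_ix_j}$. So (A1)--(A3) hold and Theorem~\ref{thm:var-reduce-general} applies to the spectral DPP $\mathcal S_{n^k(\lambda)}$. Here the eigenvalue $\lambda_n^k$ is at most $\lambda$, and the quantity $R_{L^k,\alpha}(n)$ is bounded by the windowed counts $M_n^{(\alpha)}(\lambda)$ controlled in Theorem~\ref{thm:Ralpha-knn}. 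Each window $\big((\lambda_n^{\alpha}-p-1)^{1/\alpha},(\lambda_n^\alpha-p)^{1/\alpha}\big]$ has the form $((s-1)^{1/\alpha},s^{1/\alpha}]$ with $s\le\lambda^\alpha$, and the term $n^k(1)$ is the case $s=1$. For a Lipschitz $f$, restricted to the data points, $\|f\|_\infty$ and $\|\Gamma(f)\|_\infty$ are finite, so Theorem~\ref{thm:var-reduce-general} gives
\[
\Var[\mathcal S_{n^k(\lambda)}(f)]\le\Big(\|f\|_\infty^2+\tfrac{\pi^2}{6}C_\alpha[f]^2\Big)M_n^{(\alpha)}(\lambda).
\]

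Next I choose $\alpha$ depending on $\lambda$. For $\lambda\ge e^8$, set $\alpha=\tfrac12-\tfrac{2}{\log\lambda}\in[\tfrac14,\tfrac12)$. Writing $M=\max\{\|f\|_\infty^2,\|\Gamma(f)\|_\infty\}$, the elementary bound \eqref{eq:Calpha-max} gives $C_\alpha[f]^2\le C(1-2\alpha)^{-2}M$. Theorem~\ref{thm:Ralpha-knn}, whose constant is uniform for $\alpha\in[\tfrac14,\tfrac12)$, then bounds the variance by
\[
C(1-2\alpha)^{-2}M\Big(\lambda^{m/2-\alpha}+\big(\tfrac{\widetilde\delta}{\beta}+\beta+\theta\big)\lambda^{m/2}+\beta\lambda^{(m+1)/2}\Big).
\]
With this choice, $(1-2\alpha)^{-2}=(\log\lambda)^2/16$ and $\lambda^{m/2-\alpha}=e^2\lambda^{(m-1)/2}$, which yields \eqref{eq:var-endpoint-knn}. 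For $1\le\lambda<e^8$, I use the trivial bound $\Var\le n^k(\lambda)\|f\|_\infty^2$. Proposition~\ref{prop:sandwichknn} gives $n^k(\lambda)\le n_{\DNN}(2e^8/\sigma_\eta)=O(1)$, and this is absorbed because the right-hand side is at least $CM$.

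For the regime statement, the extra terms are at most $\lambda^{(m-1)/2}$ exactly when $\beta\lambda\lesssim1$ and $(\tfrac{\widetilde\delta}{\beta}+\beta+\theta)\lambda^{1/2}\lesssim1$; this is \eqref{eq:leading-regime-knn}. To convert from $\lambda$ to $n$, take $\lambda=\lambda_n^k$, so that $n^k(\lambda_n^k)\ge n$. I also need $n\asymp(\lambda_n^k)^{m/2}$. Proposition~\ref{prop:sandwichknn}, together with the Weyl law $n_{\DNN}(\Lambda)\asymp\Lambda^{m/2}$ from Lemma~\ref{lem:windowknn} and the bound $1\pm4\mathcal E_k\in[0,2]$, gives $n^k(\lambda)\asymp\lambda^{m/2}$ for $\lambda$ large. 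This holds because $1-4\mathcal E_k(\lambda)$ is bounded away from zero whenever $\mathcal E_k(\lambda)\le\tfrac14-r$, an assumption the paper already uses implicitly.

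I expect this conversion to be the main obstacle. The issue is that $n^k(\lambda_n^k)$ can exceed $n$ when eigenvalues are repeated, and the variance bound is stated for $\mathcal S_{n^k(\lambda)}$ rather than for $\mathcal S_n$. I will handle it by applying the first part of Theorem~\ref{thm:var-reduce-general} directly with rank $n$. In that bound only the indices $k\le n$ appear, and the windowed count of those indices is still bounded by $M_n^{(\alpha)}(\lambda_n^k)$. So the same estimate holds with $\mathcal S_n$. Substituting $\lambda_n^k\asymp n^{2/m}$ then turns $(1+\log\lambda)^2\lambda^{(m-1)/2}$ into $C(1+\log n)^2n^{1-1/m}$.
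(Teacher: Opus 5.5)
Your proposal is correct and is essentially the paper's argument: the paper proves this theorem by transplanting the proof of Theorem~\ref{thm:var-eps}, and you do the same. That means the choice $\alpha=\tfrac12-\tfrac{2}{\log\lambda}$ for $\lambda\ge e^8$, the bound \eqref{eq:Calpha-max}, the window bound of Theorem~\ref{thm:Ralpha-knn} (uniform in $\alpha$), and the trivial bound combined with Proposition~\ref{prop:sandwichknn} for $1\le\lambda<e^8$. Your extra step, applying Theorem~\ref{thm:var-reduce-general} at rank $n$ (whose window count depends only on $\lambda_n^k$), correctly repairs the paper's identification $n=n^k(\lambda_n^k)$, which fails when eigenvalues are repeated.
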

\begin{proof}
    The proof follows from Theorem \ref{thm:Ralpha-knn} in the same manner as Theorem \ref{thm:var-eps} follows from Theorem \ref{thm:Ralpha-knn}.
\end{proof}

As in the $\eps$-graph case, by Remark~2.13 in \cite{calder2022improved} the
event $\mathcal{G}$ occurs with probability $1-N^{-q}$ if we take
$\theta=\sqrt{\tfrac{(q+1)\log N}{CN\widetilde\delta^m}}$. For this canonical
choice of $\theta$, minimizing $\tfrac{\widetilde\delta}{\beta}+\theta$ over
$\widetilde\delta$ yields
\[
\widetilde\delta_*\asymp\Big(\beta^{2}\,\tfrac{\log N}{N}\Big)^{1/(m+2)},\quad
\frac{\widetilde\delta_*}{\beta}\asymp\theta\asymp
\beta^{-m/(m+2)}\Big(\tfrac{\log N}{N}\Big)^{1/(m+2)}.
\]
Setting this equal to $\beta^{1/2}$ (so that both terms in the min are equal)
gives
\[
\beta_*\asymp\Big(\tfrac{\log N}{N}\Big)^{2/(3m+2)},
\atop
\text{equivalently,} \
k_*\asymp N\Big(\tfrac{\log N}{N}\Big)^{2m/(3m+2)}
=N^{(m+2)/(3m+2)}(\log N)^{2m/(3m+2)}.
\]
In particular, if we let
\[
A_{n,N}:=n\Big(\tfrac{\log N}{N}\Big)^{m/(3m+2)},
\]
we get the following analogue to Theorem \ref{thm:varepspoly}
\begin{theorem}\label{thm:varknnpoly} 
Fix $q > 0$. Then, on the event $\mathcal{G}$, which occurs with a probability of at least $1-N^{-\beta}$, for every Lipschitz $f$ and every $\lambda \ge 1$ such that $\mathcal{E}(\lambda) < 1/4$, 
    \[
\operatorname{Var}\big[\mathcal S_n(f)\big]
\;\le\; C\,\max\{\|f\|_\infty^2,\|\Gamma(f)\|_\infty\}\;
n^{1-1/m}(1+\log n)^2(1+ A_{n,N}^{1/m} + A_{n,N}^{2/m}),
\]
where $n = n^{k}(\lambda)$, and the constant $C = C(\mathcal{M},g,\rho,\eta,q)$.
\end{theorem}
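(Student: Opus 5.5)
The plan is to combine the general variance bound of Theorem~\ref{thm:varknn} with the parameter choices just computed, and then convert the $\lambda$-dependence into $n$-dependence using the sandwich of Proposition~\ref{prop:sandwichknn}.

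\textbf{Probability and parameters.} First, take $\theta=\sqrt{(q+1)\log N/(CN\widetilde\delta^m)}$. By Remark~2.13 in \cite{calder2022improved}, the event $\mathcal G$ then has probability at least $1-N^{-q}$; the $N^{-\beta}$ in the statement should be read as $N^{-q}$. Next, fix $\widetilde\delta=\widetilde\delta_*$ and $\beta=\beta_*$ (equivalently $k=k_*$). Assumption~\ref{ass:knn} should hold for $N$ large. With these choices,
\[
\frac{\widetilde\delta}{\beta}+\beta+\theta\asymp \beta_*^{1/2}=\Big(\tfrac{\log N}{N}\Big)^{1/(3m+2)},
\]
since $\beta_*\le\beta_*^{1/2}$.

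\textbf{Bounding the three terms.} Work on $\mathcal G$ with $\lambda\ge1$ and $\mathcal E_k(\lambda)<1/4$. By Proposition~\ref{prop:sandwichknn} and Weyl's law (Lemma~\ref{lem:windowknn}), $n=n^k(\lambda)$ satisfies $n\asymp\lambda^{m/2}$, provided $\lambda\ge\lambda_0$. Below $\lambda_0$ everything is $O(1)$, so that range is absorbed into the constant. The condition $\mathcal E_k(\lambda)<1/4$ keeps the dilation factors $1\pm4\mathcal E_k$ in $[0,2]$, so the comparability constants are uniform. We then bound each of the three terms in \eqref{eq:var-endpoint-knn}:
\begin{align*}
\lambda^{(m-1)/2}&\asymp n^{1-1/m},\\
\Big(\tfrac{\widetilde\delta}{\beta}+\beta+\theta\Big)\lambda^{m/2}&\lesssim n\,\beta_*^{1/2}=n^{1-1/m}\,\big(n\beta_*^{m/2}\big)^{1/m},\\
\beta\,\lambda^{(m+1)/2}&\lesssim n^{1+1/m}\beta_*=n^{1-1/m}\,\big(n\beta_*^{m/2}\big)^{2/m}.
\end{align*}
Since $\beta_*^{m/2}=(\log N/N)^{m/(3m+2)}$, the quantity $n\beta_*^{m/2}$ is exactly $A_{n,N}$. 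The last two terms are therefore $n^{1-1/m}A_{n,N}^{1/m}$ and $n^{1-1/m}A_{n,N}^{2/m}$. Finally, $(1+\log\lambda)^2\lesssim(1+\log n)^2$ because $\log\lambda\asymp\frac{2}{m}\log n$ for large $n$.

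\textbf{Main obstacle.} The delicate step is the uniform comparability $n\asymp\lambda^{m/2}$ with constants independent of $N$. I would get it from both sides of Proposition~\ref{prop:sandwichknn}: the dilation factors lie between $0$ and $2$, and the two-term Weyl law of Lemma~\ref{lem:windowknn} bounds $n_{\DNN}$ from above and below by constant multiples of $\lambda^{m/2}$. The lower bound requires $\lambda(1-4\mathcal E_k)/\sigma_\eta$ to be bounded below; if it is not, $n$ is $O(1)$ and the estimate holds trivially after enlarging $C$. The remaining steps only track constants depending on $(\mathcal M,g,\rho,\eta,q)$.
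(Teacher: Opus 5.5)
Your route is the one the paper intends; the paper offers nothing beyond the parameter computation preceding the statement and an appeal to Theorem~\ref{thm:varknn}. Your algebra is correct. With the canonical $\theta$, $\widetilde\delta_*$, $\beta_*$ one has $\frac{\widetilde\delta}{\beta}+\beta+\theta\asymp\beta_*^{1/2}$ and $n\beta_*^{m/2}=A_{n,N}$, so the three terms become $n^{1-1/m}$, $n^{1-1/m}A_{n,N}^{1/m}$ and $n^{1-1/m}A_{n,N}^{2/m}$. Reading $N^{-\beta}$ as $N^{-q}$ is also right.

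The gap is in the comparability step you flag. Every conversion you perform (all three terms, and $(1+\log\lambda)^2\lesssim(1+\log n)^2$) uses only the \emph{lower} bound $n^k(\lambda)\gtrsim\lambda^{m/2}$; the upper bound is never needed. That lower bound does not follow from Proposition~\ref{prop:sandwichknn} evaluated at $\lambda$. The hypothesis $\mathcal E_k(\lambda)<1/4$ only gives $1-4\mathcal E_k(\lambda)\in(0,1]$. Since $\mathcal E_k$ is continuous and increasing, admissible $\lambda$ just below the threshold where $\mathcal E_k=1/4$ make this factor arbitrarily small. So the claim that the dilation factors lie in $[0,2]$ and hence the constants are uniform fails on the lower side. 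Your fallback dichotomy does not cover the intermediate regime either. If $\lambda(1-4\mathcal E_k(\lambda))/\sigma_\eta\ge\lambda_0$ but $1-4\mathcal E_k(\lambda)$ is small, Weyl's law only gives $n\gtrsim\big(\lambda(1-4\mathcal E_k(\lambda))\big)^{m/2}$. In that regime your argument yields neither $n=O(1)$ nor $n\gtrsim\lambda^{m/2}$.

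The repair is short: use monotonicity of $n^k$ and apply the sandwich at a fixed fraction of $\lambda$. Write $a:=C_1\big(\frac{\widetilde\delta}{\beta}+\beta+\theta\big)$. The hypothesis $\mathcal E_k(\lambda)<1/4$ forces $C_1\beta\sqrt\lambda<1/4$, hence $\mathcal E_k(\lambda/16)<a+\frac1{16}$. With the canonical parameters $a\asymp\beta_*^{1/2}\to0$, so for $N\ge N_0$ one has $a\le\frac1{16}$ and therefore $1-4\mathcal E_k(\lambda/16)>\frac12$. This gives $n^k(\lambda)\ge n^k(\lambda/16)\ge n_{\DNN}\big(\lambda/(32\sigma_\eta)\big)\gtrsim\lambda^{m/2}$ for $\lambda$ large, while $n\ge1$ handles bounded $\lambda$. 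For $N<N_0$, where Assumption~\ref{ass:knn} may also fail for the canonical parameters, absorb the trivial bound $\Var[\mathcal S_n(f)]\le n\|f\|_\infty^2\le N_0\|f\|_\infty^2$ into $C$.

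With this step inserted your proof is complete. The paper's own assertion $n\asymp\lambda^{m/2}$ in Theorem~\ref{thm:varknn} skips the same point.
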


\subsection{Experiments}

We numerically validate the variance reduction predicted by the graph-spectral DPP
construction for both $\varepsilon$-graphs and $k$-nearest-neighbour graphs. The
goal is to test whether minibatches sampled from graph-Laplacian spectral DPPs
exhibit variance decay governed by the intrinsic dimension of the data, and to
compare their performance with standard uniform subsampling.

Given a point cloud $X=\{x_1,\ldots,x_N\}$, we construct either an unweighted
symmetric $\varepsilon$-graph or a symmetrized $k$-NN graph, and let
$L=D-W$ be the corresponding unnormalized graph Laplacian. For each minibatch
size $n$, we compute the first $n$ eigenvectors of $L$ and use them to define a
rank-$n$ projection DPP on the data set. If $\Psi_n$ denotes the matrix whose
columns are these eigenvectors, then the DPP kernel is $K_n=\Psi_n\Psi_n^\top$. We use the
estimator $N^{-1}\sum_{i\in S} f(x_i)/\pi_i$, where  $\pi_i=K_n(i,i)$ is the inclusion probability. As a baseline, we use iid sampling, which denoted by the estimator \textit{uniform}.

We run the experiments on two elementary manifolds. The first is the unit circle
$\mathbb{T}^1\subset\mathbb{R}^2$, sampled either uniformly or from the one parameter family of 
densities $\rho_a(t)=1+a\cos(2\pi t), \ \  0<a < 1$, and tested with the Lipschitz function
$f(t)=|2t-1|$. The second is the unit sphere $S^2\subset\mathbb{R}^3$, sampled
either uniformly or from the one parameter family of 
densities proportional to $1+az, \ \ 0 < a < 1$, where $z$ is
the height coordinate, and tested with $f(x)=\|x-e_3\|_2$, the Euclidean distance
from the north pole. These test functions are Lipschitz, but not smooth.

For each point cloud, graph type, and minibatch size, we estimate the conditional
variance of the estimator by repeated DPP or iid samples, and then average this
variance over several independent point clouds. In the reported experiments we use
$N=10^4$ data points, minibatch sizes
$n\in\{5,7,10,12,15,17,20,22,25\}$, four independent point clouds, and 150
sampling repetitions per cloud. For the circle we take $\varepsilon=0.08$ and
$k=40$, while for the sphere we take $\varepsilon=0.20$ and $k=45$. The empirical
rates are obtained by fitting a line to the log-log plot of variance against $n$.

\textbf{Results:}
On both the circle and the sphere, the performance of both the graph-based DPPs are approximately in line with predicted rates for both uniform and non-uniform densities.

\begin{figure}[!htbp]
    \centering

    \begin{minipage}{0.48\textwidth}
        \centering
        \includegraphics[width=\linewidth]{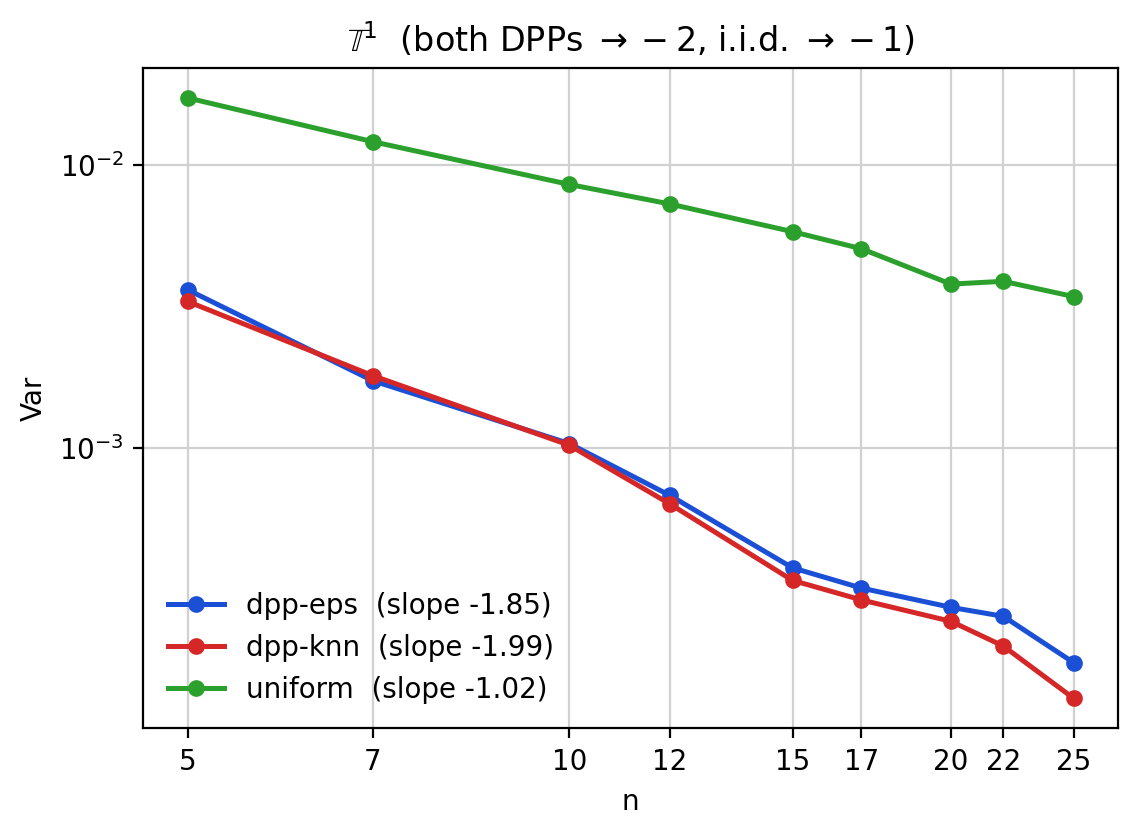}
        \caption{circle with uniform measure}
    \end{minipage}
    \hfill
    \begin{minipage}{0.48\textwidth}

        \centering
        \includegraphics[width=\linewidth]{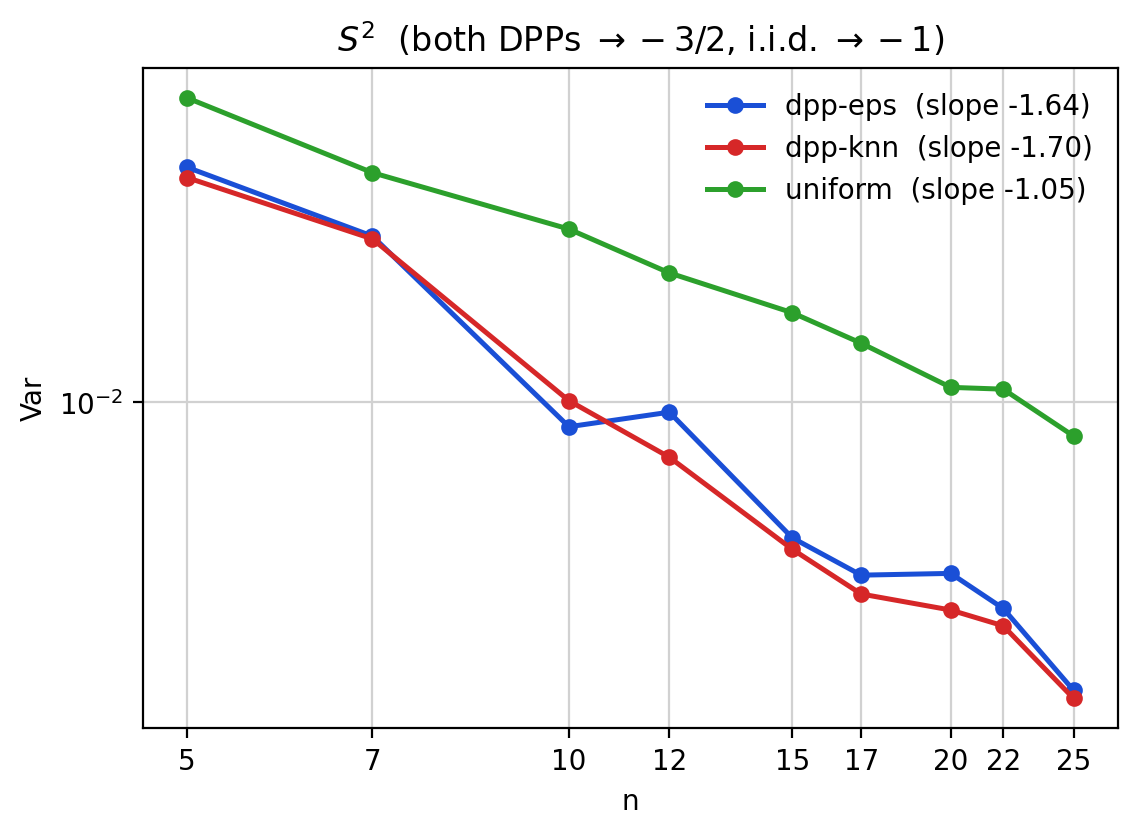}
        \caption{sphere with uniform measure}
    \end{minipage}

    \vspace{0.8em}

    \begin{minipage}{0.48\textwidth}
        \centering
        \includegraphics[width=\linewidth]{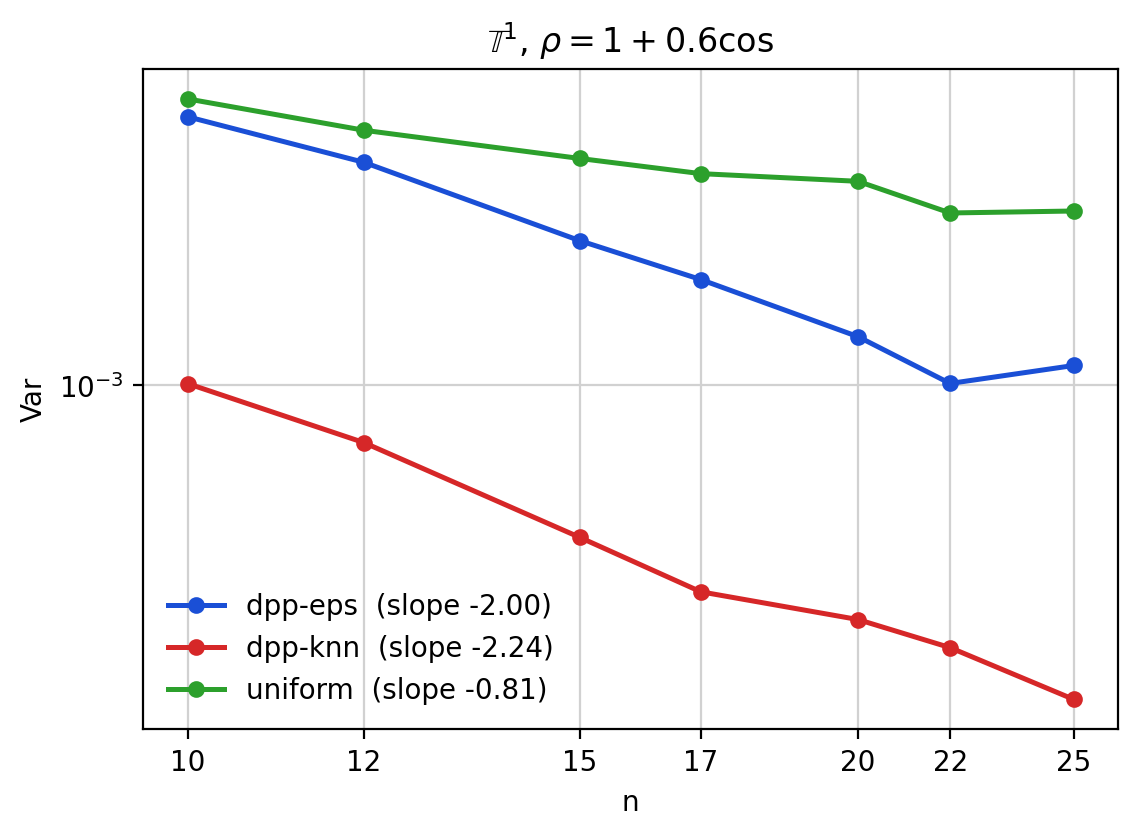}
        \caption{circle with $\rho = 1 + 0.6z$}
    \end{minipage}
    \hfill
    \begin{minipage}{0.48\textwidth}
        \centering
        \includegraphics[width=\linewidth]{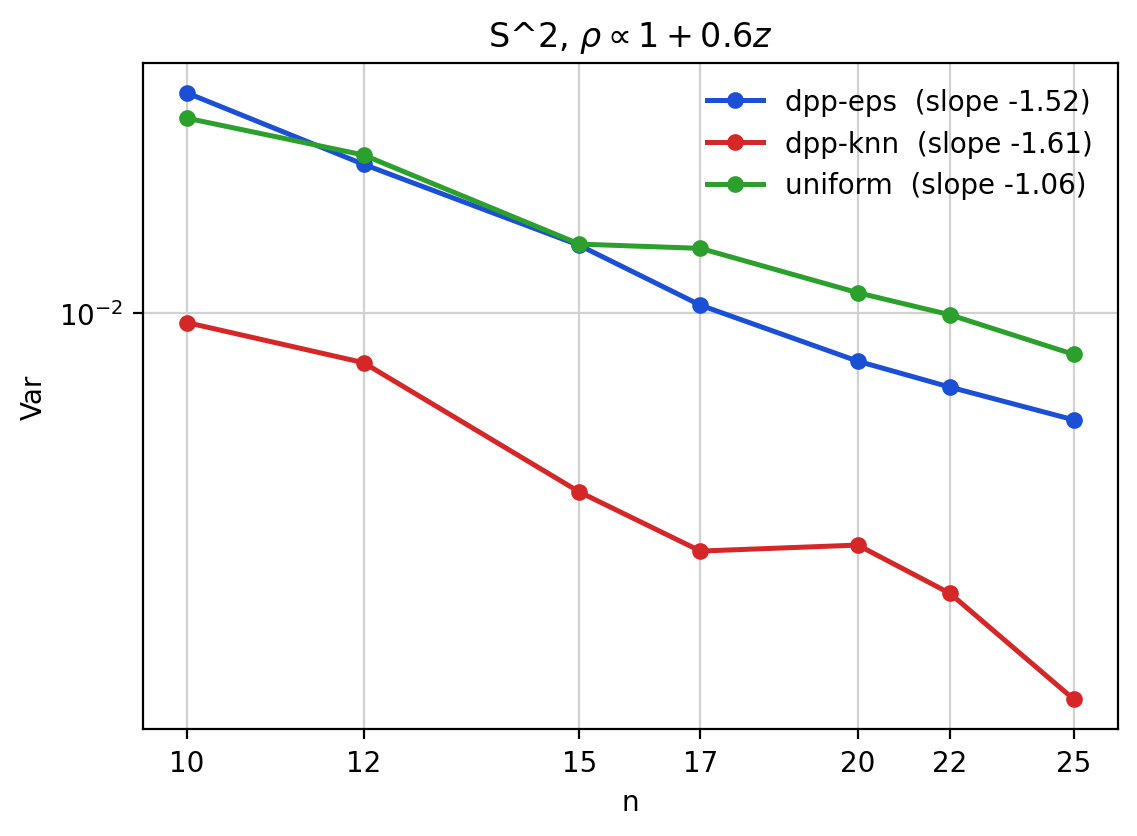}
        \caption{sphere with $\rho \propto 1 + az$}
    \end{minipage}
    \label{fig:variance-decay-four-functions}
\end{figure}

Next, we examine how the variance constant changes with the density parameter
$a\in\{0,0.3,0.6,0.9\}$ . Fitting the model
$\operatorname{Var}(\mathcal{S}_n(f))\approx C(a)n^{-2}$ on the circle and $\operatorname{Var}(\mathcal{S}_n(f))\approx C(a)n^{-3/2}$ on the sphere, and plotting $C(a)/C(0)$ against $a$, we find that the constant for
$\varepsilon$-graph increases substantially. In contrast, the constant $k$-NN constants remains roughly the same, supporting the conclusion of Remark 5.1.

\begin{figure}[!htbp]
    \centering
    \begin{minipage}{0.48\textwidth}
        \centering
        \includegraphics[width=\linewidth]{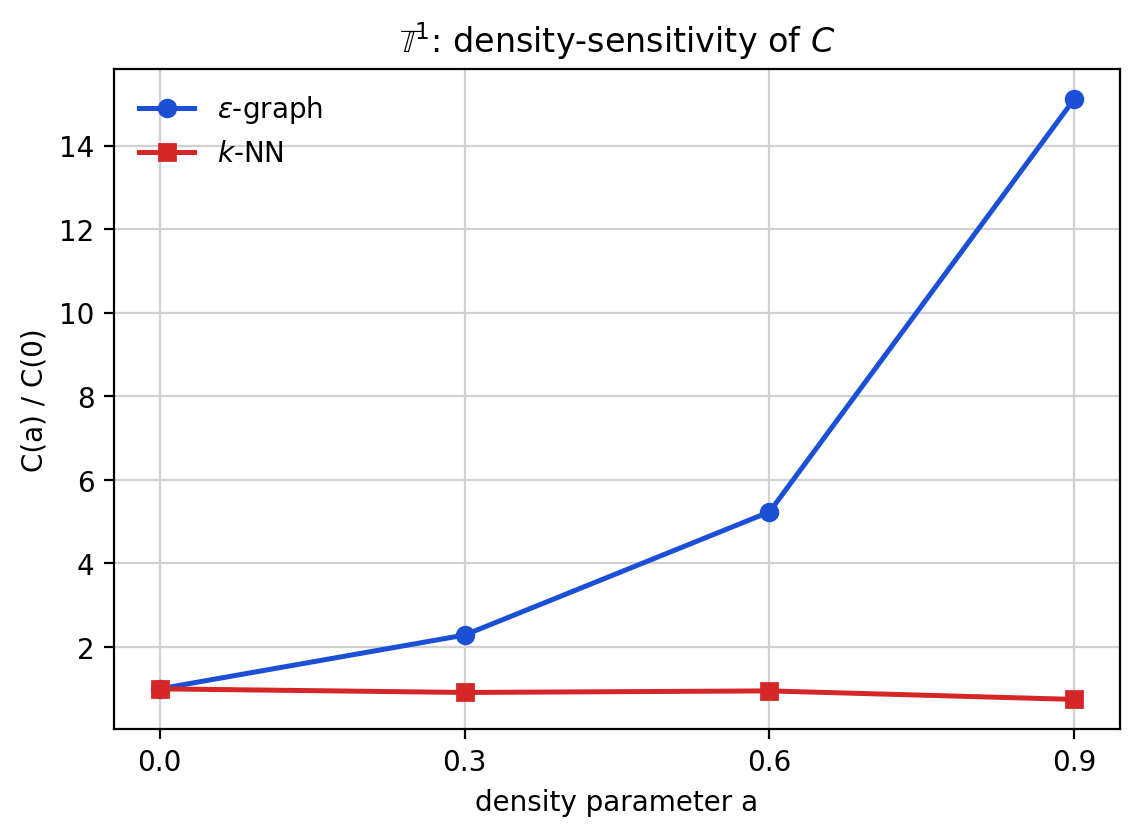}
        \caption{density-sensitivity on the circle}
    \end{minipage}
    \hfill
    \begin{minipage}{0.48\textwidth}

        \centering
        \includegraphics[width=\linewidth]{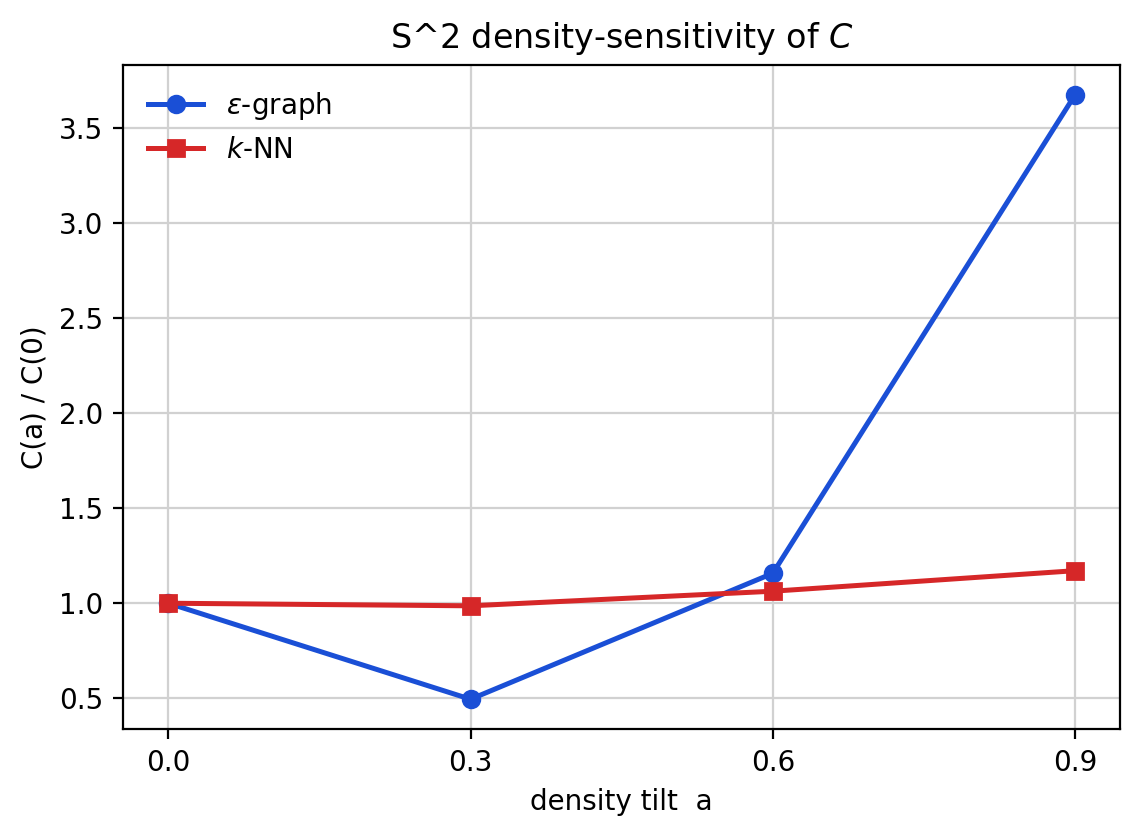}
        \caption{density sensitivity on the sphere}
    \end{minipage}
\end{figure}

\newpage
\bibliographystyle{plain} 
\bibliography{bibliography,stats} 

\end{document}